\documentclass[10pt,journal,compsoc]{IEEEtran}
\ifCLASSOPTIONcompsoc
% IEEE Computer Society needs nocompress option
% requires cite.sty v4.0 or later (November 2003)
\usepackage[nocompress]{cite}
\else
% normal IEEE
\usepackage{cite}
\fi

% *** GRAPHICS RELATED PACKAGES ***
\usepackage[pdftex]{graphicx}
%\graphicspath{{figure/pdf/}{figure/eps/}{figure/tex/}{figure/png/}}
\graphicspath{{figure/}}
\usepackage[dvipsnames]{xcolor}
\usepackage{tikz}
\usepackage[normalem]{ulem}
%\usepackage{subcaption}

% *** MATH PACKAGES ***
\usepackage[T1]{fontenc}
\usepackage{amsmath}
\usepackage{mathtools}
\usepackage{amsfonts}
\interdisplaylinepenalty=2500 % allowing page breaks from occurring within multiline equations
\usepackage{bm}
\usepackage{graphicx,subfigure}
\usepackage{caption}
\usepackage{xcolor,colortbl}
\usepackage{siunitx}
\usepackage{url}
\usepackage{dsfont}
\usepackage{url}
\usepackage{enumitem}

% *** SPECIALIZED LIST PACKAGES ***
\usepackage{algorithm}
\usepackage{algpseudocode}
\usepackage{multirow}
\usepackage{verbatim}
\usepackage{amssymb}
\usepackage{soul}
\usepackage{,amsfonts,amsthm,bm}

% *** ALIGNMENT PACKAGES ***
\usepackage{array}

%% *** SUBFIGURE PACKAGES ***
%\ifCLASSOPTIONcompsoc
%	\usepackage[caption=false,font=normalsize,labelfont=sf,textfont=sf]{subfig}
%\else
%	\usepackage[caption=false,font=footnotesize]{subfig}
%\fi

% *** DEFINITIONS ***

\newcolumntype{C}[1]{>{\centering\arraybackslash}m{#1}}
\newcolumntype{L}[1]{>{\arraybackslash}m{#1}}

\colorlet{mygreen}{green!60!gray}

\newif\ifcomments

% Uncomment one of these two lines
\commentstrue % uncomment to display comments in text
%\commentsfalse % uncomment to hide comments in text

\ifcomments

    \newcommand{\BTcomm}[1]{\textcolor{mygreen}{{#1}}}

    \newcommand{\BT}[1]{\textcolor{blue}{{#1}}}
    \newcommand{\MB}[1]{\textcolor{magenta}{{#1}}}

    \newcommand{\TODO}[1]{\textcolor{red}{{#1}}}

\else
    \newcommand{\BTcomm}[1]{}
    \newcommand{\BT}[1]{}
    \newcommand{\MB}[1]{}
    
    \newcommand{\TODO}[1]{}
\fi

\newtheorem{theorem}{Theorem}

% *** START DOCUMENT ***
\begin{document}

\author{\IEEEauthorblockN{
		Benedetta Tondi,~\IEEEmembership{Member,~IEEE},
		Andrea~Costanzo,
		Mauro Barni,~\IEEEmembership{Fellow,~IEEE}
	}
\thanks{Authors  are with the Department of  Information Engineering and Mathematics, University of Siena, 53100 Siena, ITALY.}}

%\title{Chameleon Watermark: A Robust  DNN Watermarking Algorithm with Invisible Watermark}

\title{Robust and Large-Payload DNN Watermarking via Fixed, Distribution-Optimized, Weights}

\IEEEtitleabstractindextext{
\begin{abstract}
%
%We propose a white-box, multi-bit
%watermarking method that can achieve large payload and improved robustness with respect
%to existing algorithms.
The design of an effective multi-bit watermarking algorithm hinges upon finding a good trade-off
between the three  fundamental requirements forming the watermarking trade-off triangle, namely, robustness against network modifications, payload, and unobtrusiveness, ensuring minimal impact on the performance of the watermarked network.
In this paper, we first revisit the nature of the watermarking trade-off triangle for the DNN case, then we exploit our findings to propose a  white-box, multi-bit watermarking method achieving very large payload and strong robustness against network modification.
In the proposed system, the weights hosting the watermark are set prior to training, making sure that their amplitude is large enough to bear the target payload and survive network modifications, notably retraining,  and are left unchanged throughout the training process. The
distribution of the weights carrying the watermark is theoretically optimised to ensure the secrecy of the watermark and  make sure that the watermarked weights are indistinguishable from the non-watermarked ones.
The proposed method can achieve outstanding performance, with no significant impact on
network accuracy, including robustness
against network modifications, retraining and
transfer learning,
while ensuring a  payload which is
out of reach of state of the art methods achieving a lower - or at most comparable - robustness.
\end{abstract}
\begin{IEEEkeywords}
DNN IPR protection, Deep Learning security, DNN watermarking, White box watermarking, Robust DNN watermarking
\end{IEEEkeywords}
}
%\MB{Prima della fine dobbiamo cambiare il titolo: information coding e' troppo generico significa tutto e niente} \BTcomm{Modificato di nuovo. Fors emeglio discuterne insieme.}

\maketitle
\IEEEdisplaynontitleabstractindextext
\IEEEpeerreviewmaketitle

%\BTcomm{Altro possibile titolo,'Chameleon Watermark: A Robust DNN Watermarking Algorithm ensuring  Watermarked Weights Indistinguishability'. Il dubbio che ho è sull'uso del termine 'indistinguishable' che non mi convince del tutto (bognerebbe infatti dire indistinguibile da cosa). Forse possiamo dire che e' implicito..... Ma alla fine anche invisibile potrebbe andare no?}
%
%\MB{Usare tre volte la parole watermark nel titolo e' proprio brutto. Propongo due alternative (io preferisco la seconda):\\~\\
%Chameleon Watermark: Robust and High-Payload DNN Watermarking with Improved Undetectability,\\~\\
%Chameleon Watermarking: Hiding a Robust High-Payload Watermark within the weights of a DNN model}

\section{Introduction}

Neural network watermarking has been proposed as a solution to protect the Intellectual Property Rights (IPR) associated to DNN (Deep Neural Networks) models. In particular, the presence of a proprietary watermark within a DNN model has been proposed as a way to prove the ownership of the model, thus making it possible to trace possible abuses and resolve ownership disputes \cite{adi_turning_2018}. In addition to ownership verification, DNN watermarking can be used to trace back the history of DNN models, for fingerprinting and traitor tracing \cite{chen2019deepmarks}, to verify the integrity of a model \cite{he2018verideep}, and even to determine the source of contents generated by generative models \cite{ong2021protecting}. For most applications it is required that the watermark be robust against DNN modifications, like pruning, quantisation, fine tuning and even transfer learning.

As it is well known from previous studies on digital media watermarking \cite{craver98}, and as it has been restated recently in the framework of DNN watermarking \cite{NEURIPS2019}, the mere presence of a watermark within a DNN model is not enough to doubtlessly prove the ownership of the model. Such a possibility, in fact, depends on the ownership verification protocol wherein the watermark is used, and the threat model associated to it (see for instance \cite{adi_turning_2018, LWL22}). In some cases, the watermark needs to be authenticated by a certification authority, while in other cases additional properties such as non-invertibility must be satisfied, or the possibility of  embedding a minimum payload guaranteed. Similar arguments hold for applications other than ownership verification \cite{Park23}.

In order to investigate the very essence of the watermarking process, that is the process whereby a certain information is indissolubly tied to a DNN model, here we avoid to link our study to a specific application, rather we consider watermarking as a primitive defined by its black-box characteristics and by certain general properties like robustness, payload and unobtrusiveness. The way the watermark is used within a protocol, like for instance an ownership verification protocol, and the possible additional requirements stemming from the protocol, are left to other researches.

In the above framework a first distinction must be made between zero-bit and multi-bit watermarking. With zero-bit watermarking, the watermark extractor, now called detector, is only asked to decide wether the inspected model contains a given known watermark or not. With multi-bit watermarking, instead, the watermark extractor, more properly referred to as decoder, extracts the watermark bits without knowing them in advance. In both cases, the extraction of the watermark requires the knowledge of a secret key\footnote{In some zero-bit watermarking schemes, the key corresponds to the watermark itself, however, the watermark and the key play a different role and it is advisable to keep them separate.}, whose presence is crucial to ensure the secrecy of the watermark, and avoid that non-authorised users can access the watermarking channel\footnote{For sake of simplicity, here we broadly define the watermarking channel as the mechanism whereby the watermark payload is associated to the host model. At the same time, access to the watermarking channel is defined as the possibility to write, read, or erase the watermark bits from the host network (see \cite{cayre2005watermarking,kalker2001considerations} for a more rigorous and comprehensive description of these concepts).}
% \BTcomm{Che articolo di Furon?}) \MB{This one: F. Cayre, C. Fontaine and T. Furon, "Watermarking security: theory and practice," in IEEE Transactions on Signal Processing, vol. 53, no. 10, pp. 3976-3987, Oct. 2005, doi: 10.1109/TSP.2005.855418. however the following paper by T. Kalker - T. Kalker, "Considerations on watermarking security," 2001 IEEE Fourth Workshop on Multimedia Signal Processing Cannes, France, 2001, pp. 201-206, doi: 10.1109/MMSP.2001.962734 - is even more appropriate, we can cite both}.}.

A key-difference between multimedia and DNN watermarking derives from the observation that a DNN model is not a static object but a function defined by the way it maps the input samples into the output space \cite{barni2021dnn}. In the following, we indicate such a mapping as $y=\Phi(x,\bf{w})$, where $x \in \chi$ indicates the input of the model, $y$ the corresponding output, and $\bf{w}$ the network weights\footnote{In general $\bf{w}$ includes also the network biases, however, in our scheme the watermark is embedded only in the weights so, without loss of generality, we will refer to $\bf{w}$ as the weights of the network.} (we sometimes refer to them as the network coefficients). In the simplest case (white-box watermarking), the watermark is extracted directly by looking at $\bf{w}$. In other cases, the model is accessible only in a black-box modality and hence the watermark is extracted by looking at the output of the network in correspondence to a set of specific inputs (black-box watermarking). In yet other cases, like for generative models, the output of the model has a large enough entropic content and hence the watermark can be retrieved from any output $y$ even without knowing the corresponding input (box-free watermarking). Examples of white-box, black-box and free-box DNN watermarking algorithms are described, respectively, in \cite{uchida2017embedding}, \cite{rouhani2019deepsigns} and \cite{fei2022supervised}.

In this paper, we focus explicitly on white-box, multi-bit watermarking. In this case, the design of an effective watermarking algorithm boils down to finding a good trade-off between three general requirements (watermarking trade-off triangle), namely: {\em robustness} against network modifications, {\em payload}, measured as the number of bits the watermark consists of, and {\em unobtrusiveness}, that is the capacity of the watermarking technique to embed the desired payload without affecting the performance of the watermarked network.
In particular, our work stems from the recognition of the peculiar relationship existing between the above requirements in the case of white-box DNN watermarking.
We first argue that due to the particular way DNNs work, unobtrusiveness is linked {\em very weakly} to the robustness and payload requirements, unless an additional constraint regarding the secrecy of the watermark is considered. Failing to recognise the peculiarity of such a relationship results in watermarking schemes with suboptimal performance in terms of robustness, payload or both.
Then, we exploit the above insight to propose a new white-box, multi-bit watermarking algorithm with large payload and improved robustness with respect to existing algorithms. More specifically, in our scheme the values of the DNN weights hosting the watermark are fixed prior to training, making sure that their amplitude is large enough to bear the target payload and survive retraining and other network modifications. The watermarked weights are then frozen and are not updated during the learning phase. In this way, the
 other weights are determined in a watermark-dependent way,  co-operating with the
fixed weights to accomplish the network task.
 To ensure the secrecy of the watermark and avoid that the watermarked weights are easily identifiable, the distribution of the watermarked coefficients is optimised by minimizing the Kullback-Leibler (KL) distance between the watermarked and non-watermarked weights for a given amplitude of the watermark. As done in  other systems, robustness is further enhanced by spreading the watermark bits across several host coefficients.

We verified the effectiveness of the proposed watermarking algorithm by considering different architectures and classification tasks addressing different application domains, namely, image classification and image manipulation detection.
The results we got show that the proposed method can achieve good performance even for large payloads, with negligible impact on the accuracy of the underlying classification task. We also verified the strong robustness of the watermark against the most common network modifications, including, pruning, weights quantisation, and, most noticeably, retraining. Moreover, thanks to the optimization of the distribution of the watermarked weights, the marked coefficients are indistinguishable from the non-marked ones.

%With the above ideas in mind, the contribution of this paper can be summarised as follows

The rest of this paper is organised as follows: in Section \ref{sec.revisit}, we revisit the watermarking trade-off triangle and lay the basis of the new watermarking algorithm proposed in this paper. We also give a brief review of existing schemes by the light of the previous analysis.
In Section \ref{sec.model}, we introduce the notation used throughout the paper and state the requirements to be satisfied by the watermark. In Section \ref{sec.algorithm}, we describe the proposed watermark embedding and extraction algorithms. Section \ref{sec.methodology} and \ref{sec.experiments} are devoted, respectively, to the description of the experimental setting and to the discussion of the experimental results. Finally, in Section \ref{sec.conc}, we draw our conclusions and highlight directions for future work.

\section{The watermarking trade-off triangle revisited}
\label{sec.revisit}

In classical watermarking theory \cite{barni2004watermarking, Cox02}, the main goal of a multi-bit watermarking algorithm is to find a good, possibly optimum, trade-off between three opposite requirements summarised by the watermarking trade-off triangle shown in Figure \ref{fig.triangle}.

\begin{figure}
\centering\includegraphics[width=0.65\columnwidth]{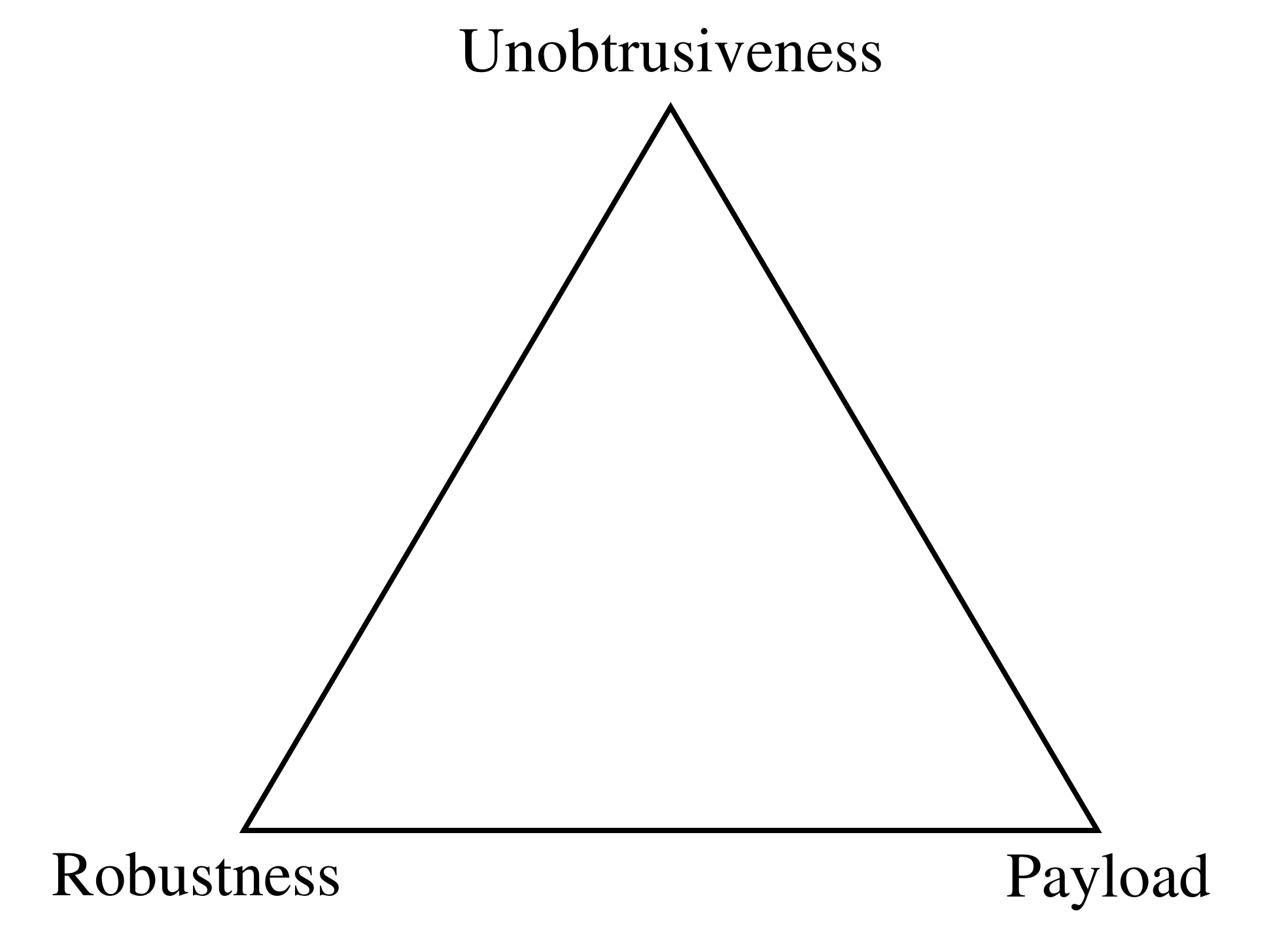}
\caption{The watermarking trade-off triangle.}
% \MB{Memo: remove security form the robustness corner}}
\label{fig.triangle}
\end{figure}

Payload and robustness have a similar meaning for media and DNN watermarking, even if the kind of manipulations the watermark should be robust to is completely different in the two cases. In media watermarking, common manipulations include lossy compression, filtering, resampling, cropping etc, while in the DNN case, the manipulations we are interested in are pruning, quantisation and, most noticeably, fine tuning and transfer learning. The unobtrusiveness requirement, however, plays a completely different role in the two cases. In media watermarking, unobtrusiveness requires that the {\em perceptual} quality of the watermarked media is not degraded due to the presence of the watermark. This, for instance,  means that the watermark should be {\em invisible} in the case of image watermarking, and {\em inaudible} in the case of audio watermarking. In media watermarking, then, unobtrusiveness has a direct impact on the amplitude or the {\em strength} of the watermark, usually measured as the distance (possibly the perceptual distance) between the original content and the watermarked one. In turn, this means that increasing the amplitude of the watermark to improve its robustness or increase its payload is possible only up to certain extent. In (white-box) DNN watermarking, the situation is completely different. Here the unobtrusiveness of the watermark requires that its presence has a negligible, if any, impact on the performance achieved by the network with regard to its intended task, hereafter referred to as primary task. This requirement is not linked directly to the amplitude of the watermark, even because in the DNN case an original set of non-watermarked coefficients to measure the distance from does not exist. In addition, the number of coefficients defining a network is usually highly redundant, thus making it possible, for instance, to prune many of them without any noticeable effect on the performance of the network. Eventually, the saturation effect enforced by the most commonly used activation functions limits the impact of very large intermediate values on the final output of the network. The striking conclusion of these observations is that, in principle, one could embed a very large payload by encoding the watermark in the amplitude of very few, in the limit even only one, extremely large coefficients, and freezing them during training. As with any communication channel \cite{CandT}, the absence of constraints on the amplitude, or to better say the power, of the coefficients bearing the watermark, results in unbounded capacity.

The obvious drawback of the approach outlined above is that the watermarked coefficients would be easily detectable, thus compromising irremediably the secrecy of the watermark. A possible way to alleviate this problem is to spread the watermark over many coefficients of limited amplitude, however, the number of coefficients that can be frozen during the training process is bounded by the unobtrusiveness constraint, since freezing a too large fraction of coefficients may degrade the performance of the network on the primary task. It is the presence of the secrecy requirement, then, that creates the trade-off between unobtrusiveness and the other two corners of the triangle, namely robustness and payload.
\begin{figure}
\centering\includegraphics[width=0.4\columnwidth]{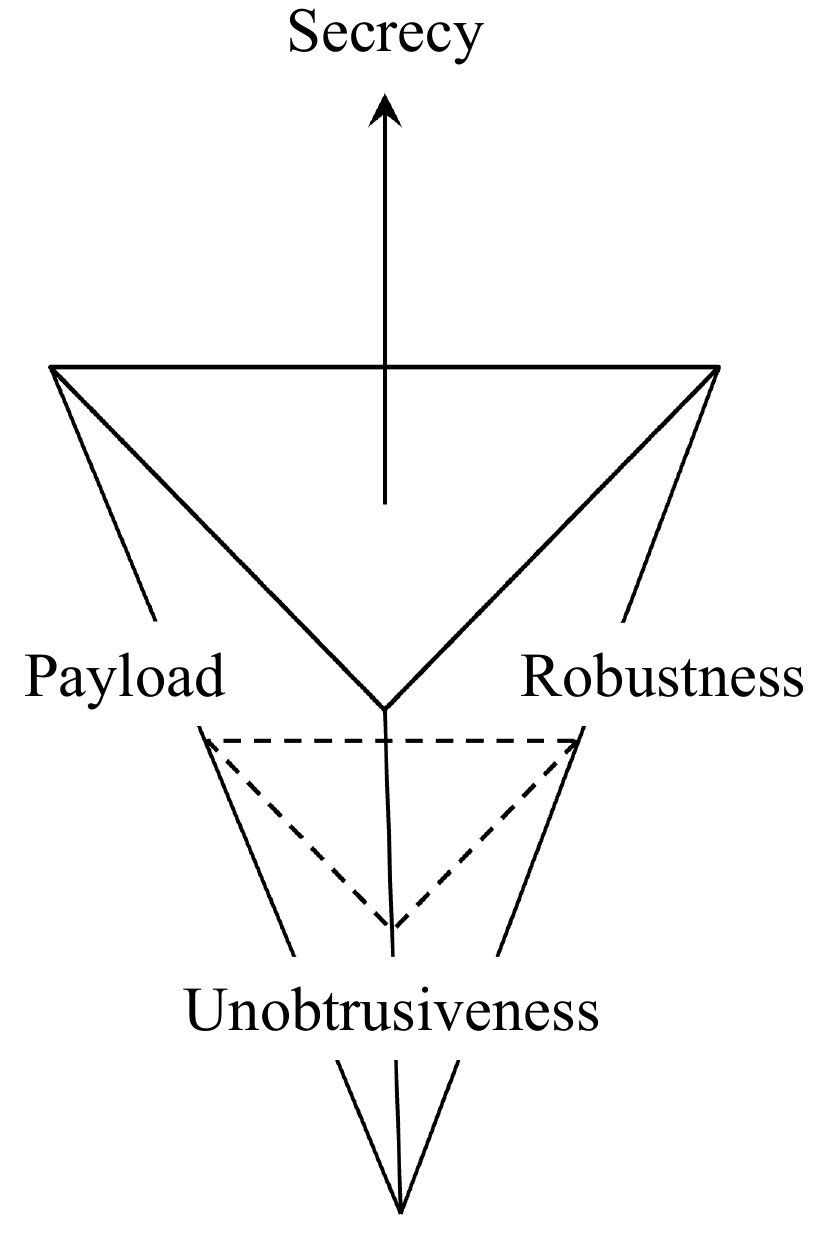}
\caption{DNN watermarking trade-off tetrahedron. }
\label{fig.tetra}
\end{figure}
With the above ideas in mind, the trade-off between the various requirements of DNN watermarking is better illustrated by a reverse tetrahedron rather than a triangle (see Figure \ref{fig.tetra}). When the secrecy requirement is relaxed, close to the vertex of the tetrahedron, the corners of the watermarking trade-off triangle get closer, thus making it easier to satisfy all of them simultaneously. The opposite situation occurs when secrecy is given more importance, in which case the trade-off triangle gets larger, thus creating a stronger tension between its corners.

In view of the previous ideas, it is evident that to design a robust and high payload multi-bit DNN watermarking algorithm, it is advisable to encode the watermark into the amplitude of large coefficients by paying attention to evaluate the impact that the amplitude of the watermarked coefficients has on watermark secrecy.

\subsection{Prior art}
\label{subsec.prior}

%\MB{In my view it is not necessary to specify {\em by the light \dots}, it is obvious that the prior art refers only to the themes the paper deals with}

On the basis of the framework provided by the previous discussion and summarised in Figure \ref{fig.tetra}, it is easy to realize that most white-box watermarking techniques proposed so far fail to address the watermarking trade-off properly.
According to the paradigm adopted by most of the methods proposed so far, in fact, watermarking is achieved by adding a watermarking loss term to the loss function used during training. Then, watermarking is carried out simultaneously with training, by letting the weights of the network be the solution of the following optimization problem:
\begin{equation}
{\bf w} = \arg \min (\mathcal{L }+ \lambda \mathcal{L }_w),
\label{eq.losswat}
\end{equation}
where $\mathcal{L }$ is the primary loss, whose minimization ensures that the performance of the network are good, and $\mathcal{L }_w$ guarantees that the watermark can be extracted without (too many) errors. $\mathcal{L }_w$ is usually related to the error probability of the watermark decoder, which, in most cases is designed based on a spread spectrum paradigm. In \cite{Uchida17}, for instance, and many subsequent works inspired to it, e.g., \cite{chen2019deepmarks, rouhani2019deepsigns, adamFernando, liu2021watermarking, yue21jins}, the watermark bits are encoded in the sign of the projection of the marked coefficients onto a set of $n$ pseudo-random sequences. The watermark loss $\mathcal{L }_w$, then, corresponds to the cross-entropy between the true watermark bits and the soft bit estimates obtained by applying a sigmoid function to the projections. In the above setting, the tradeoff between payload and robustness is achieved by using longer spreading sequences, and, most commonly, by adjusting the weighting parameter $\lambda$. It is clear, however, that increasing $\lambda$ ensures only that the cross-entropy term decreases, at the price of a stronger impact on the network performance, without any guarantee that this increases the robustness of the watermark.

A noticeable exception is the zero-bit watermarking algorithm described in  \cite{tartaglione2021delving}, where, similarly to our scheme, the weights hosting the watermark are fixed before training and are not modified during the training process. The strategy adopted to enhance robustness, however, does not rely on the amplitude of the marked coefficients. Rather, a
properly designed loss function explicitly making the watermarked weights more sensitive to loss
variations with respect to non-watermarked weights, is adopted, thus increasing the robustness of the watermark. In this way \cite{tartaglione2021delving} can
achieve a remarkable robustness against fine-tuning and weights quantization, however it still lacks robustness against transfer learning. In addition, no attention is given to the payload requirement,  given that the system proposed in \cite{tartaglione2021delving} belongs to the class of zero-bit watermarking methods.

A DNN watermarking algorithm that in some way achieves robustness by associating the watermark to large DNN coefficients is the one described in \cite{liu2021watermarking}. In this method, the watermark is still embedded by properly including a watermark loss term. However, the message bits are weighted by so called greedy residuals. Because of the way such greedy residuals are constructed, the watermark is indirectly embedded in the largest weights of the network, whose  amplitude is increased in order to get large residuals by multiplying the message bits with the correct sign. In addition, to facilitate obtaining large amplitudes, the watermark is embedded in the first layer of the network, where the weights tend to be larger. No particular attention is paid to secrecy, and hence the watermarked weights are easily identifiable, as discussed in Section \ref{subsec.SOA}.

In this work, we propose a simple but effective way to exploit the large amplitude of the watermarked coefficients to simultaneously achieve robustness and high payload. In particular, the watermark is spread over a number of large-amplitude, fixed coefficients (like in \cite{tartaglione2021delving}), while at the same time ensuring that the marked coefficients are as indistinguishable as possible from the non-marked ones. This provides a way to simultaneously trade-off between the 4 corners of the tetrahedron illustrated in Figure \ref{fig.tetra}. The experimental results shown in Section \ref{sec.experiments}, prove that in this way our system can achieve a remarkable robustness agains the most common network manipulations, including retraining for transfer learning, ensuring a payload which is out of reach of state of the art methods with comparable, or even lower, robustness.

\section{Notation and problem definition}
\label{sec.model}

%\TODO{Aggiusta riferimento sezioni!!!}

%\section{System Architecture}
%\section{Copy-Move Source/Target Disambiguation}

%We describe the proposed static multi-bit watermarking system with improved robustness against network modifications, and in particular transfer-learning.
%, and model compression.

%In the following, we first introduce the
%watermarking model and discuss the general requirements that a watermarking system
%has to satisfy.
% (Section \ref{sec.model}).

Let $\Phi$ indicate a generic non-protected DNN model, that is, a model trained for a given task without the watermark. The notation $\Phi^m$ is used to denote a watermarked model carrying out the same task.
We denote with $\mathcal{L}$  the primary loss function of the network, usually corresponding to cross-entropy.
The average loss measured across all the samples of the labeled training set $(\mathcal{X},  \mathcal{Y})$ is compactly denoted with $\mathcal{L}(\mathcal{X},  \mathcal{Y}, \Phi)$ ($\mathcal{L}(\mathcal{X},  \mathcal{Y}, \Phi^m)$, for the watermarked model).

%
% \MB{The meaning of $\mathcal{L}$ is not clear. For sure it refers to the output  of the classifier, this should be made clear in the symbolism. I'd prefer to show explicitly that $\mathcal{L}$ depends on the DNN model, which in turn, depends on $\Theta$}, and

We indicate with $\mathbf{b} \in \{0,1\}^l$  the vector with the watermark
bits, (sometimes referred to as the watermark message), where $l$ denotes the number of bits the watermark consists of.

The network weights can be generically represented as tensors. For each convolutional layer $j$, the dimensionality of the tensor with the weights is determined by  the kernel size of the filters,
the depth of the input and the number of filters (3-dim tensor).
For notational simplicity, we flatten the weight tensor
%${\bf W}_j$
into a vector ${\bf w}_j$ (row vector) containing all the weights of layer $j$.
We generically indicate the vector with all the weights of the network (be them watermarked or not) as ${\bf w}$.
With the above notation,
embedding the watermark bits into the weights of the network corresponds to embedding the vector $\mathbf{b}$ into ${\bf w}$.
The final watermarked vector should be such that the network preserves its functionality.
%
%Given that unobtrusiveness can not be controlled directly by modifying the weights of the model, in DNN watermarking
%embedding is typically carried out
%contextually to training to make sure that the network solves its primary task.
%

We denote with
%${\bf h} = (h_1, h_2,...,h_n)$
$\Omega$, hereafter called {\em host} index set, the set with the indexes of the weights hosting the watermark, that is, the positions in ${\bf w}$ that are selected
to carry the watermark message.
%
%\footnote{For simplicity, we refer to the weights specified by the host indexes as host weights; however, the concept of host signal does not have the same meaning it has in media watermarking, given that in DNN watermarking the weights do not exist {\em per se} but are directly generated in such a way to host the watermark.}.
%
The cardinality of $\Omega$ is indicated by $n$ (usually $n \ge l$).
With this notation, the vector with the watermarked weights should be indicated as  ${\bf w}^m = (w_{i_1}, w_{i_2}, \cdots, w_{i_n}), i_k \in \Omega$, however, for sake of simplicity, we will indicate it as ${\bf w}^m = (w^m_1, w^m_2, \cdots, w^m_n)$.
%
%Then,  $(w_{i})_{i \in \Omega} = {\bf w}^m$.
%\BTcomm{I am not sure it is ok using this formalism  to denote a vector. For sure it is understandable. I can not find any easy formalism with the same meaning}.
%
We also introduce the vector with non-watermarked weights, indicated by $\bar{\bf w}^m$. With the notation introduce above, we have $\bar{\bf w}^m = (w_{j_1}, w_{j_2}, \cdots, w_{j_n}), j_k \notin \Omega$, more simplicitely written as $\bar{\bf w}^m = (\bar{w}^m_1, \bar{w}^m_2, \cdots, \bar{w}^m_{N-n})$, where $N$ is the total number of coefficients the model consists of.
We observe that the host indexes may be selected from only one layer  (single-layer embedding),
%in which case  ${\bf w}^m \subset {\bf w}_j$,
or multiple layers  (multi-layer embedding).
%For instance, in the case of 2-layers embedding, we have that ${\bf w}^m \subset [{\bf w}_j, {\bf w}_k]$  where $j$ and $k$ denote the {\em host} layers. \BTcomm{Essendo vettori la notazione e' un po' imprecisa....ma e' chiara}
%We find useful to denote with ${\bf h}_j$ (${\bf h}_k$) the set of watermark indexes in layer $j$ ($k$).
We find useful to denote with ${\Omega}_k$ the set of watermark indexes in layer $k$. By indicating with $N_k$ the number of weights in layer $k$, then,
the percentage of watermarked weights in the network, denoted with $p^m$,
is equal to  $p^m= |{\Omega}|/N \times 100$, while the percentage of watermarked weights in layer $k$ is $p^m_k= |{\Omega}_k|/N_k \times 100$.
%Obviously, $p^m_k = p^m$ for single-layer embedding \MB{This is wrong, because the denominator of the percentages changes}.
%
We let ${f}_{{\bf w}^m}$ and  $ {f}_{\bar{\bf w}^m}$ denote the marginal empirical distribution of the watermarked and non-watermarked weights, estimated, respectively, on ${\bf w}^m$ and $\bar{\bf w}^{m}$, that is, the distributions induced by the respective sequences, and let $E_{{\bf w}^m}$ (res. $E_{\bar{\bf w}^m}$) denote the empirical expectation computed over ${f}_{{\bf w}^m}$ (res. ${f}_{\bar{\bf w}^m}$).
%empirical probability distribution induced by the sequence
%
Finally, we denote with $\mathcal{D}$  the Kullback-Leibler (KL) distance between two continuous probability distributions $f$ and $g$ \cite{CandT}, defined as  $\mathcal{D}(f||g) = \int_{x \in \mathds{R}} f(x) \log ( f(x)/g(x))$\footnote{For discrete distributions the integral is replaced by a summation over the alphabet of $x$.}.

%%
%\begin{equation}
%    \mathcal{D}(f||g) = \sum_{x \in \mathds{R}} f(x) \log \frac{f(x)}{g(x)},
%\label{eq.KLdiv}
%\end{equation}

%According to the above notation,
%embedding the watermark bits into the weights of the network corresponds to embed the vector $\mathbf{b}$ into the vector ${\bf w}$.
%More specifically, we denote with ${\bf w}_m = (w_{m1}, \cdots, w_{mn})$ the subset of network weights that are selected
%to carry out the watermark information, hereinafter called {\em host}
%weights. ???? HOST INDEXES more than WEIGHTS???? \footnote{The concept of 'host' signal is borrowed from classical media watermarking, where it explicitly refers to
%the non-watermarked
%asset ????CITA FERNANDO E NOI} Hence, $\mathbf{b}$ is embedded into  ${\bf w}_m$. The length $n$ of vector ${\bf w}_m$ is the length of the embedded watermark. We denote with $\overline{\bf w}_m$ is the remaining  set of non-watermarked weights. We observe that the host weights may be selected from only one layer $j$ (single-layer embedding), in which case  ${\bf w}_m \subset {\bf w}^j$, or multiple layers. For instance, in the case of 2-layers embedding, letting $j$ and $k$ denote the {\em host} layers,  we have that ${\bf w}_m \subset [{\bf w}^j, {\bf w}^k]$. \BTcomm{Essendo vettori la notazione e' un po' imprecisa....ma e' chiara}

%\subsection{Watermarking model and requirements}
%\label{sec.model}

%The security model and the requirements that the watermark message must  satisfy are described below.

%\subsection{Watermarking model}

\subsection{Watermarking model and requirements}

According to the multi-bit approach adopted in this paper, the watermaker wants to embed a watermark message $\mathbf{b}$ into the weights of the model  $\Phi^m$.
To do so, he relies on a secret key $K  = \{\Omega, {\bf s} \}$, consisting of the information on the host layers and weights ($\Omega$), and the spreading
sequence ${\bf s}$ used to spread the message ${\bf b}$ over the host coefficients.
%
%Watermark extraction requires white-box access to the network, hence qualifying our scheme as a white-box watermarking algorithm \cite{LI2021171}.
%
In addition to the usual unobtrusiveness, robustness and payload requirements, the watermark must satisfy the {\em integrity} requirement, asking that in the absence of model modifications, the decoded watermark $\hat{\bf b}$ should be close (ideally equal) to $\bf b$, that is, in the absence of modifications the bit error rate should be small (ideally zero).

With regard to {\em secrecy}, we loosely define it as the impossibility for an attacker to estimate the secret key by observing the weights of the watermarked model. In particular, here we focus on the estimation $\Omega$. Estimating $\Omega$, in fact, would allow the attacker to erase the watermark or overwrite it (while to decode it the knowledge of ${\bf s}$ is also necessary).

For the robustness requirement, we consider retraining for fine tuning and in particular transfer learning, and model compression, namely, parameter pruning and weights quantization.

{\em \bf Retraining} is a typical modification applied to models.
%, hence the  this case represents typical modifications that models may undergone.
A trained model is further trained for some epochs on the same or a different task.
We speak about {\em fine-tuning} when the network model is retrained to solve the same task in the same domain (often, on the same dataset used for the original training or on a subset of it), for some additional epochs, possibly with a different learning rate.
We speak about {\em transfer learning} in a more general setting where knowledge is transferred across domains or tasks.
%%It will slightly modify a model that was initially trained to solve an original task, and adjust it for a new task (possibly related to the original one).
%In a {\em transfer learning} scenario, instead,  the model is retrained for a different task, for more epochs.
%%, to learn a new problem.
%This is the case when the model, trained to solve a given task,
Transfer learning is widely adopted in practice since it turns out that transferring the knowledge from different tasks is less computationally expensive than  training a new model from scratch.
By referring to the categorization adopted in \cite{pan2009survey}, in the {\em transductive transfer learning} setting, the source and target tasks are the same, while the source and target domains are different.
%(when many labeled data from the source domain are available, while few labeled data from the target domain are available,  the model can be first trained on the source domain and then trained some additional epochs on data belonging to the target domain).
In the {\em inductive transfer learning} setting, the target task is different from the source task, while the domain can be the same or not.
In this case, the model trained on the source task is used as a pre-trained solution to train a network on a new task, possibly related to the original one.
Retraining alters the weights of the model, the degree of alteration depending on the difference between the domain and task targeted by the retraining process and those of the  original model, and on the number of retraining epochs. Arguably, during transfer learning, the  weights are modified by a larger extent than during fine-tuning, thus achieving robustness in the former scenario can be much harder.
%For any  watermarking scheme to be of practical use,  it is necessary to make sure that the watermark is robust against model retraining.

{\em \bf Compression}. DNN models are often squeezed to deploy them into low power or computationally weak devices, and  to reduce the storage demand.
Typical methods for model compression are {\em network pruning} and {\em weight quantization}.
The former cuts
off network weights, whose  value is smaller than a threshold,
% \MB{Are you sure pruning works like this? in this way pruning would depend on the input, which does not seem to be correct},
the latter reduces the numerical precision of the model coefficients, converting them from a floating point representation to a lower-precision, typically integer, representation.
More specifically,  model quantization can be described as follows:
\begin{equation}\label{eq.quantization}
%\begin{aligned}
%\textbf{w}_q &= \bigg\lfloor \frac{\textbf{w}}{\delta} \bigg\rfloor,\\
%\delta &= \frac{| w^{M}-w^{L}|}{2^{n_b}},
%\end{aligned}	
\textbf{w}_q = \bigg\lfloor \frac{\textbf{w}}{\delta} \bigg\rfloor \times \delta,\quad \text{$\delta = 2 w_{max}/2^{n_b}$}, % \text{$\delta = \frac{| w^{M}-w^{L}|}{2^{n_b}}$},
\end{equation}
%
%\MB{I think it should be $\bigg\lfloor \frac{\textbf{w}}{\delta} \bigg\rfloor \times \delta$}
where
% $\textbf{w}$ indicates the original weights ($\textbf{w}_q$ the quantized ones),
$\textbf{w}_q$ indicates the quantized weights,
$w_{max}$ represents the maximum absolute value assumed by the weights, and $n_b$ indicates the number of bits required to represent the quantized values.
%For instance, if the weights are quantized from a float32 format to int8, $n_b$  is equal to 8.
%\BTcomm{say that they should be multiple of 8 !!!}

We observe that network retraining represents the most critical case of network re-use for existing DNN watermarking schemes. Although some DNN watermarking methods have been proposed that can achieve a certain degree of robustness
against fine-tuning  \cite{tartaglione2021delving}, to the best of our knowledge, no method can simultaneously achieve high-payload and robustness in a transfer learning scenario.

\section{The proposed DNN watermarking method}
\label{sec.algorithm}

By relying on the insights we got by revisiting the watermarking trade-off triangle, our approach to simultaneously achieve a high payload and robustness against network modifications consists in embedding the watermark into several large coefficients and freeze them during the training process. In doing so we must i) ensure that the presence of several, fixed, large coefficients does not prevent training the network (unobtrusiveness), ii) avoid that the large coefficients hosting the watermark are easily identified and attacked (secrecy). In particular, the proposed watermark embedding algorithm works as  follows:
\begin{itemize}
%  \item A subset of network parameters (weights) are selected to carry the watermark information. The secrecy of the selection is guaranteed by the secrecy of the secret key.
  \item The information message $\bf b$ is embedded  before training and
  %the embedded weights $\textbf{w}_m$ are frozen, that is
  the  weights hosting the watermark message are not updated during training.
  \item The message bits are encoded in $\textbf{w}^m$ via direct sequence spread spectrum watermarking \cite{barni2004watermarking}. As a consequence, the message  $\bf b$
      %is not embedded directly, but is spread over
      is spread over several host weights. Specifically, the $l$  bits of the message are used to modulate a pseudo-random sequence ${\bf s}$ of length $n \ge l$, where  $S =  n/l$ is the spreading factor.
%      \MB{I would remove the following paragraph.
%      In this way,  the information coding paradigm is exploited
%{\em (spread spectrum watermarking is nothing
%but a particular form of bit repetition, that is the most simple form of information coding\footnote{The main difference with respect to bit repetition %pertains security, as with spread spectrum watermarking the watermark can not be read without knowing the  spreading sequence used.})}}.
   The strength of the watermark is controlled by means of a parameter  $\gamma$.
  \item To ensure that the watermarked  weights are indistinguishable from the other weights, the distribution of the spreading sequence is chosen in such a way to be as close as possible to that of the non-watermarked weights.
  %At the same time the strength of the watermark should be enough to be robust and in particular resist to transfer learning.
  In particular, for a given $\gamma$, the optimum distribution of the marked coefficients is derived theoretically by minimizing the KL distance between the distribution of the watermarked and non-watermarked coefficients (see Section \ref{sec.optumimDistrib}).
  %The $\gamma$ parameter also affect the  robustness (the larger $\gamma$, the more the robustness). However, arguably, a too large $\gamma$ would %result in a easy-to-detect (visible) watermark.
%  \item {\em Both single-layer and multi-layer embedding is performed. The multi-layer embedding solution is preferable for very large payloads and spreading factors, in which case the number of watermarked weights can be large,  and then, in the single-layer embedding case, they would represent a large percentage of the weights of the layer, thus affecting the unobtrusiveness.} \TODO{Inseriscilo da qualche altra parte}
\end{itemize}

For a given payload of $l$ bits, the trade-off between unobtrusiveness and robustness
%(see the trade-off triangle)
is determined by  the spreading factor $S$, while $\gamma$ controls the robustness and the secrecy of the watermark, with a too large $\gamma$ resulting in an easy-to-detect  watermark.

The main steps of the watermark embedding algorithm are illustrated in Fig. \ref{fig.watermark_extraction} and described in the following.
%After the information coding step described above, the embedding of the watermark and the model training
%\MB{Per come descriviamo ora il nostro metodo, il collegamento tra la figura  e il testo e' dubbio. Secondo me la figura si potrebbe anche togliere, o comunque il rapporto tra testo e figura chiarito meglio.}
%\BT{@Andrea: Ci sta bene una figura. Ho messo un placeholder}
%The details of each block are discussed in the following.
%are performed as detailed in the following.

\begin{figure}[]
	\centering
	\includegraphics[width=\columnwidth]{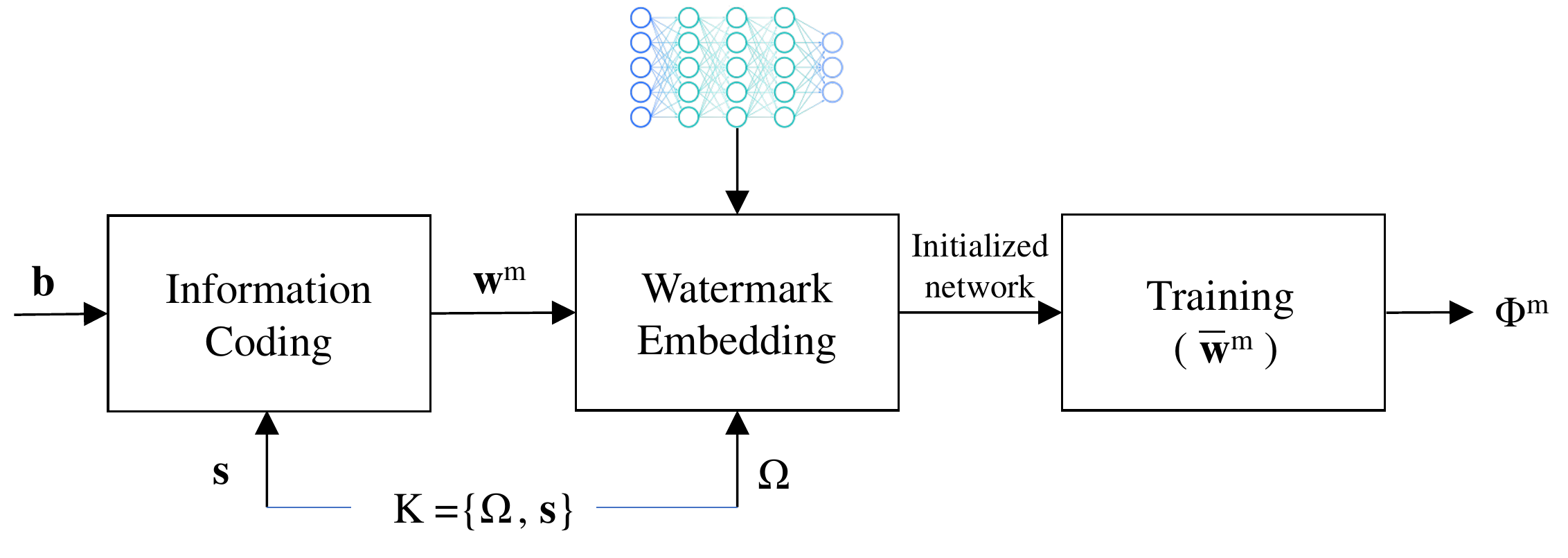}
	\caption{Watermark embedding procedure.} % \MB{Dobbiamo abbandolare il camaleonte} \MB{Alcuni font sono troppo piccoli}} %Proposed watermark embedding process.
	\label{fig.watermark_extraction}
\end{figure}

%{\em A similar idea can be found in Tartaglione paper ??, where, similarly to our method, the watermarked coefficients are set before the training procedure starts and are not modified during training. In particular, in ??, differently from our method, a zero-bit watermarking scheme is performed  by adopting a properly designed loss function explicitly thought to increase the robustness of the watermark  by ....... The method can achieve a remarkable robustness against fine tuning on a same dataset and weights quantization. However, the method can not achieve robustness against  transfer learning.}
%\BTcomm{NEL PRIPOR ART}

%To make the watermarked weights undistinguishable from the others
%(for security reasons), and by assuming that the weights of
%the to-be-watermarked architectures are initialised following
%a gaussian distributions , the watermarked
%weights are generated as follows......(the
%choice of the weights carrying the watermark is made based
%on a pseudorandom number generator distributing the watermark across the entire model), the watermarked weights
%are generated as:

%Main other ingredients: watermark spreading, multi-layer embedding.

%\MB{I do not like much the flow of the next sections. i think we should have only two subsections: i) the first describes the entire embedding and training procedure assuming that the spreading sequence is known, ii) the second optimizes the distribution of the training sequence. In this way you avoid speaking the training process in all the sections, thus simplifying the description} \BTcomm{Reorganized following your suggestion.}

\subsection{Watermark embedding}
\label{sec.encoding}

Let ${\bf u} \in \{-1,+1\}^l$ be the antipodal sequence associated to ${\bf b}$, that is $u_i = 1$  (res. $-1$) if $b_i = 1$ (res. $0$).
Before embedding, the vector
${\bf u}$
% is transformed into a watermark signal w = {wi,w2 ... wn} which is
%more suitable for embedding.
is  used to modulate
a  spread-spectrum sequence ${\bf s}$.
In particular,
%a pseudo-random sequence of length $n$ is generated and each $u_i$
each $u_i$  is used to modulate a block of coefficients of ${\bf s}$  (direct sequence spread spectrum  \cite{barni2004watermarking}).
%Eventually, b may be left
%as it is, thus leading to a scheme in which the watermark code is directly
%inserted within ....

Formally, given a spreading factor $S$, we generate a secret pseudo-random sequence ${\bf s} \in \mathds{R}^n$ of length $n = S \cdot l$ (the generation of ${\bf s}$ is discussed in Section \ref{sec.optumimDistrib}). Then, for each $i \in [1, l]$,  $u_i$ is used to modulate the sign of the elements of ${\bf s}$ corresponding to the indexes from $(i-1)S + 1$ to $iS$.
%weights corresponding to the indexes $(h_{S(i-1) + 1}, \cdots, h_{2S})$.
Specifically, the vector of watermarked coefficients  ${\bf w}^m$, at the output of the information coding block in Fig. \ref{fig.watermark_extraction}, is obtained as follows:
\begin{equation}
\label{eq.embedding}
w^m_{j} = u_i \cdot s_{j}, \quad  j \in \{(i-1) S+ 1, \cdots, i S\}, \quad i \in [1, l].
\end{equation}
Then, $(w^m_{(i-1)S + 1},  \cdots, w^m_{i S})$ is the vector of the weights associated to $b_i$.
In the embedding phase, the network weights in the positions indicated by $\Omega$
are initialized to the values given by   ${\bf w}^m$ (watermark embedding block in Fig. \ref{fig.watermark_extraction}).
With regard to the coefficients hosting the watermark, they are chosen at random according to the secret key $K$.
In addition to security, a further advantage of choosing the positions of the watermarked weights randomly is that the weights associated to the same bits are more likely to undergo independent changes during subsequent network modifications.

The network, then, is trained as follows: the network weights are initialized randomly, thus getting the initial vector of the weights ${\bf w}^{(0)}$. The
weights in the positions indicated by $\Omega$
are set to the
values given by ${\bf w}^{m}$. At this point, the  network is trained as usual, by minimizing the loss  across all the samples of the training set.
The  watermarked weights ${\bf w}^{m}$ are frozen during the training process, and only the non-watermarked weights ${\bar{\bf w}}^{m}$ are updated through backpropagation.
% through the backpropagation algorithm.

As we said, we must ensure that the  watermarked weights are as indistinguishable as possible from the non-watermarked ones, in order to avoid that they can be easily identified and attacked.
To this purpose, the distribution of ${\bf s}$, and, as a consequence, the distribution of  ${\bf w}^m$, is chosen in such a way to minimize the distinguishability between watermarked and non-watermarked weights, as described in the following section.

\subsection{Optimization of watermarked weights distribution}
\label{sec.optumimDistrib}

% \MB{This part does not work. In our system we are optimizing the distribution $f_{w^m}$, not directly ${\bm{w}}^m$.} \BTcomm{i tried fixing this by explaining that we first find convenient to formulate the problem as an optimization over the sequence and then pass to the optimization over the sequence distribution under mild assumption }

%As we said, a challenge of the proposed method is to make the watermarked weights as indistinguishable as possible from the watermarked weights, in order to avoid that such weights can be easily identified and then attacked.
%To this purpose, the distribution of ${\bf x}$, and then, as a consequence, the distribution of the watermarked weights ${\bf w}^m$, is chosen in such a way to minimize the distinguishability between watermarked and non-watermarked weights, as described in the following section.

%As we said, a challenge of the proposed method is to make the watermarked weights as indistinguishable as possible from the watermarked weights, in order to avoid that such weights can be easily identified and then attacked.

In this section, we address the problem of finding the  optimum distribution to generate the pseudo-random spreading sequence ${\bf s}$.
Although our final goal is to optimize the distribution of ${\bf s}$ (and as a consequence the distribution of ${\bf{w}}^m$),  we find  convenient to initially formulate the optimization  problem as a minimization carried out  over the weights ${\bf{w}}^m$.  Then, we will see that,  under some mild assumptions, we can rephrase the optimization over the weights as an optimization over the weights distribution ${f}_{{\bm w}^m}$.
% \MB{Notation problem: throughout the paper we treat $f$ as a scalar pdf, while writing ${f}_{{\bm w}^m}$ with a bold subscript hints to a multivariate pdf. In fact, we can do that since we assume that the sequence ${\bm w}^m$ is i.i.d.. Perhaps we should be more precise on this point.}

%Then, under some (mild) assumptions, we will be able to rephrase the optimization in function of the distribution.

The problem of training a DNN watermarked model can be formalized as follows\footnote{Rigorously speaking, the set of trainable parameters  also includes the biases, however, for sake of simplicity, we refer only to the weights since the watermark is not embedded in the biases.}:
 %
  %\MB{This formalization does not work well: you have to minimizations and two constraints without specifying the relationship between them}:
%%
%\begin{equation}\label{problem}
%	\begin{aligned}
%%		\min_{{\bar{{\bm w}}}^m, \beta}
%%& \hspace{0.2cm} \mathcal{L}(\mathcal{X}, \mathcal{Y}, \{{\bm w}^m, \bar{\bm w}^{m}, \beta\})\\
%%\min_{\Theta \backslash {\bm{w}}^m}
%\min_{\bar{\Theta}^m}
%& \hspace{0.2cm} \mathcal{L}(\mathcal{X}, \mathcal{Y}, \Theta ) \\ % \mathcal{L}(\mathcal{X}, \mathcal{Y}, \{{\bar{\Theta}}^m, {\bm \theta}_m \}) \\
%		\min_{{\bm{w}}^m} &\hspace{0.2cm} \mathcal{D}({f}_{{\bm w}^m} || {f}_{\bar{\bm w}^m})\\ % t\neq \phi(f(x))\\
%		& E_{{\bm w}^m}[w|w > 0] = \gamma, \quad E_{{\bm w}^m}[w|w < 0] = - \gamma
%%		\min &\hspace{0.2cm} ||\delta||_2 \\
%%		\textrm{s.t.} &\hspace{0.2cm}  \phi(f(x+\delta)) = t\\ % t\neq \phi(f(x))\\
%%		& x+\delta \in [0,1]^m
%	\end{aligned}
%\end{equation}
%%
%
\begin{equation}\label{problem}
	\begin{aligned}
%		\min_{{\bar{{\bm w}}}^m, \beta}
%& \hspace{0.2cm} \mathcal{L}(\mathcal{X}, \mathcal{Y}, \{{\bm w}^m, \bar{\bm w}^{m}, \beta\})\\
%\min_{\Theta \backslash {\bm{w}}^m}
\min_{\bar{\bf w}^{m}}
& \hspace{0.2cm} \mathcal{L}(\mathcal{X}, \mathcal{Y}, \bar{\bf w}^{m} \cup {\bf w}^{m,*} ) \\ % \mathcal{L}(\mathcal{X}, \mathcal{Y}, \{{\bar{\Theta}}^m, {\bm \theta}_m \}) \\
	\end{aligned}
\end{equation}
where, in order to minimize the distinguishability between watermarked and non-watermarked weights,  ${\bf w}^{m,*}$ is obtained  by solving the following optimization:
\begin{equation}\label{problem2}
	\begin{aligned}
		{\bf w}^{m,*} = \underset{{\bf{w}}^m}{\arg \min} \hspace{0.2cm} \mathcal{D}({f}_{{\bm w}^m} || {f}_{\bar{\bm w}^m}),
	\end{aligned}
\end{equation}
which also depends on $\bar{\bf w}^m$, with the additional constrain that
\begin{equation}\label{problem2-bis}
	\begin{aligned}
		E_{{\bm w}^m}[w|w > 0] = \gamma, \quad E_{{\bm w}^m}[w|w < 0] = - \gamma,
	\end{aligned}
\end{equation}
to control the average amplitude of the weights bearing the watermark.

In the following, we require that the watermarked weights distribution is symmetric (given that the distribution of the non-watermarked weights is typically symmetric, this goes w.l.o.g.). With this assumption, in fact,  the  watermarked weights ${w}^m_j$ obtained via  Eq. \eqref{eq.embedding} follows the same distribution of the pseudo-random sequence ${\bf s}$, regardless of ${\bf u}$.
Because of the symmetry of the distribution, we can rewrite equation \eqref{problem2-bis} as $E_{{\bm w}^m}[|w|] = \gamma$.
A large $\gamma$ results in a strong - hence expectedly more robust - watermark. In addition to robustness, $\gamma$ affects the secrecy of the watermark. A very large value of $\gamma$, i.e., such that $E_{\bar{{\bm w}}^m}[|w|] \ll \gamma$,
%affects the security, since it would make
would, in fact, make the presence of the watermark easily detectable upon inspection of the values of the weights.
%, as arguably $E_{\bar{{\bm w}}^m}[w] \ll \gamma$.

To go on, we observe that \eqref{problem} and \eqref{problem2} are entangled equations due to the presence of ${\bf w}^m$ and $\bar{\bf w}^m$ in both of them, so they can not be solved easily.
However, under the assumption  that the presence of the watermark does not affect the statistical distribution of the non-watermarked weights\footnote{We checked experimentally that the distribution of the weights corresponding to the non-host indexes of the non-watermarked model  and that of the watermarked model  are approximately the same, for all the watermark settings considered in this paper. More details are reported in Section \ref{sec.histograms}.},
%\BT{@Andrea: corretto?? abbiamo delle figure di distribuzione della rete marchiata e non marchiata dove si evince questo comportamento? forse bisognerebbe andare a vedere solo il comportamento del layer marchiato}},
the two minimizations
% over ${\bm \theta}_m$ and ${\bm{\bar{\theta}}}_m$,
can be separated as follows:
% and solved in cascade:
%olving the folliowing cascade
%
\begin{equation}\label{problem-sep}
	\begin{aligned}
		{\bf w}^{m,*} = \arg\min_{{\bf{w}}^m} &\hspace{0.2cm}\mathcal{D}({f}_{{\bm w}^m} || \tilde{{f}}_{\bar{\bm w}^m})\\ % t\neq \phi(f(x))\\
		& E_{{\bm w}^m}[|w|] = \gamma,
	\end{aligned}
\end{equation}
%
%\MB{The symbolism $E_{{\bm w}^m}[|w|]$ is not correct. To be rigorous we should say that the expected value is taken by considering the empirical %distribution of $w$, perhaps we could remove the subscript to avoid complicating the notation too much}
%and
%
\begin{equation}\label{problem-sep2}
	\begin{aligned}
%		 \{\bar{\bm w}^{m,*}, \beta^*\}= {\arg\min}_{\{\bar{\bm w}^{m}, \beta\}} &\hspace{0.2cm} \mathcal{L}(\mathcal{X}, \mathcal{Y}, \{{\bm w}^{m,*},
%\bar{\bm w}^{m}, \beta\}),
		\bar{\bf w}^{m,*} = \arg\min_{\bar{\bf w}^{m}} &\hspace{0.2cm} \mathcal{L}(\mathcal{X}, \mathcal{Y}, \bar{\bf w}^{m} \cup {\bf w}^{m,*}),
%\{{\bm{\bar{\theta}}}_m, {\bm \theta}_m^*\}). \\
	\end{aligned}
\end{equation}
%
%We now focus on  the solution of the first problem.
where $\tilde{f}_{\bar{\bm w}^m}$ is the distribution of the non-watermarked weights of a non-watermarked model $\Phi$ solving the same task.
%\footnote{The normal model is the one obtained by training the same network (in the same setting) without the watermark embedding.}
%
%
To a closer look, we observe that the minimization in \eqref{problem-sep} depends only on the probability density function of $\bar{\bf w}^m$ rather than on the specific sequence, so it can be conveniently reformulated as a minimization over the distribution ${f}_{{\bm w}^m}$:
\begin{equation}\label{problem-sep3}
	\begin{aligned}
		{f}_{{\bm w}^m}^* = \arg\min_{{f}_{{\bm w}^m}} &\hspace{0.2cm}\mathcal{D}({f}_{{\bm w}^m} || \tilde{{f}}_{\bar{\bm w}^m})\\ % t\neq \phi(f(x))\\
		& E[|w|] = \gamma.
	\end{aligned}
\end{equation}
Then, the sequence ${\bf w}^{m,*}$ appearing in \eqref{problem-sep} can be {\em any} sequence generated according to ${f}_{{\bm w}^m}^*$, thus allowing to keep the specific sequence ${{\bf w}^m}^*$ secret.
%\MB{In fact we should distinguish between ${{\bm w}^m}^*$ and ${\bm u}$}.

%We notice that the problem in \eqref{problem-sep} of determining the optimum distribution for a subset of weights, given a fixed distribution of the remaining set, has some ties with the problem solved in \cite{barni2018adversarial} for determining the optimum distribution of the adversarial samples for training set corruption.

Solving the problem in \eqref{problem-sep3} for a general distribution of the non-watermarked weights ${\bar{{\bf w}}}^m$ is not easy.
In the following, we solve it by assuming that the distribution of the non-watermarked weights  can be approximated by a Laplacian distribution (our experiments on several network architectures and tasks confirmed the goodness of this assumption, see Section \ref{sec.histograms}).
Let, then, $ \tilde{{f}}_{\bar{\bm w}^m}(w) = \frac{1}{2\lambda}e^{-|w |/\lambda}$, that is  $\bar{w}^m_i \sim \textrm{Laplace}(0,\lambda)$.
Under such an assumption, problem \eqref{problem-sep3} is solvable in closed form, as stated in the following theorem.
%\MB{Given that the theorem is a general one can we avoid the heavy symbolism $f_{\bm w}^m$ and write simply $f_w$?}
%
%
%
\begin{theorem}
Let $\mathcal{F}$ denote the set of symmetric distributions.
The minimization problem
%in \eqref{problem-sep-Laplace}
\begin{equation}\label{problem-sep-Laplace}
	\begin{aligned}
		\min_{f_w \in \mathcal{F}} &\hspace{0.2cm} \mathcal{D}\Big({f}_w \big|\big| \frac{1}{2\lambda}e^{-|w |/\lambda}\Big)\\ % t\neq \phi(f(x))\\
		& \textrm{s.t. } E[|w|] = \gamma
	\end{aligned},
\end{equation}
is equivalent to the problem of finding the {\em maximum entropy distribution} over all the symmetric probability density functions ${f}_{w}$ satisfying $E[|w|] = \gamma$, whose solution is $f_{w}^{*} = \frac{1}{2 \gamma} e^{-|w|/\gamma}$. Hence the optimum distribution for the watermarked weights is a Laplacian distribution with 0 mean and scale parameter $\gamma$, that is $w_i \sim \textrm{Laplace}(0, \gamma)$, for $i \in \Omega$.
%\MB{There is something wrong here given that for a $\textrm{Laplace}(0, 2 \gamma)$  $E[|w|] = 2 \gamma$, thus failing to satisfy the constraint in \eqref{problem-sep-Laplace}}.
\label{theo}
\end{theorem}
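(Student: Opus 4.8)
The plan is to reduce the KL-minimization to a maximum-entropy problem by exploiting the special (Laplacian) form of the reference density, and then to solve the resulting maximum-entropy problem via Gibbs' inequality.

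First I would expand the KL divergence. Writing $g(w) = \frac{1}{2\lambda}e^{-|w|/\lambda}$ for the fixed non-watermarked density, I would split
\[
\mathcal{D}(f_w \| g) = \int f_w \log f_w \, dw - \int f_w \log g \, dw = -H(f_w) - \int f_w(w)\Big(\log\tfrac{1}{2\lambda} - \tfrac{|w|}{\lambda}\Big)\,dw,
\]
where $H$ denotes differential entropy. The crucial observation is that $\log g$ is \emph{affine} in $|w|$, so the cross term collapses to $\log(2\lambda) + \frac{1}{\lambda}E[|w|]$. On the feasible set this equals $\log(2\lambda) + \gamma/\lambda$, a constant independent of $f_w$. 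Hence $\mathcal{D}(f_w\|g) = -H(f_w) + \mathrm{const}$, and minimizing the KL divergence over $\mathcal{F}$ subject to $E[|w|]=\gamma$ is exactly equivalent to \emph{maximizing} the entropy $H(f_w)$ under the same constraint. This is the equivalence asserted in the theorem.

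Next I would solve the maximum-entropy problem. I would propose the candidate $f_w^{*}(w) = \frac{1}{2\gamma}e^{-|w|/\gamma}$, verify that it is symmetric and satisfies $E_{f^*}[|w|]=\gamma$, and then prove optimality by a second application of Gibbs' inequality: for any feasible $f_w$,
\[
0 \le \mathcal{D}(f_w \| f_w^{*}) = -H(f_w) + \log(2\gamma) + \tfrac{1}{\gamma}E[|w|] = -H(f_w) + \log(2\gamma) + 1,
\]
so that $H(f_w) \le \log(2\gamma)+1 = H(f_w^{*})$, with equality if and only if $f_w = f_w^{*}$. This pins down $f_w^{*}$ as the unique entropy maximizer, hence the unique minimizer of the original problem. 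Equivalently, one may run the standard Lagrangian/variational argument, which yields an exponential-family form $c\,e^{-\beta|w|}$ whose normalization and moment constraint force $c=1/(2\gamma)$ and $\beta=1/\gamma$.

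The argument involves no genuine obstacle; the single point deserving care is the symmetry constraint defining $\mathcal{F}$. I would note that the Gibbs bound above holds over \emph{all} densities with $E[|w|]=\gamma$, not merely symmetric ones, and the global maximizer $f_w^{*}$ already happens to be symmetric; therefore restricting to $\mathcal{F}$ leaves the optimum unchanged and the symmetry constraint is non-binding. A minor secondary check is that $f_w^{*}$ indeed lies in the feasible set (finite entropy and correct mean of $|w|$), which is immediate for the Laplacian.
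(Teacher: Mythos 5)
Your proof is correct, and while its first half coincides with the paper's, the second half takes a genuinely different route. Both arguments start the same way: expand the KL divergence, observe that the logarithm of the Laplacian reference density is affine in $|w|$, so on the constraint set $E[|w|]=\gamma$ the cross term is a constant and minimizing $\mathcal{D}(f_w\,\|\,g)$ is equivalent to maximizing the differential entropy $h(f_w)$ under the same constraint. From there the paper exploits the symmetry of $f_w$ explicitly: it folds the density onto the positive half-line via $f_w'(w)=2f_w(w)$ for $w\ge 0$, shows $h(f_w)=h(f_w')+1$, and then invokes the known textbook result (Cover and Thomas, Ex.\ 12.2.5) that the exponential density maximizes entropy among densities supported on $[0,\infty)$ with fixed mean; unfolding gives the Laplacian. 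You instead prove the entropy-maximization step from scratch, by a second application of Gibbs' inequality against the candidate $f_w^*=\frac{1}{2\gamma}e^{-|w|/\gamma}$, obtaining $h(f_w)\le \log(2\gamma)+1=h(f_w^*)$ with equality if and only if $f_w=f_w^*$. Your route buys three things: it is self-contained (no appeal to an external maximum-entropy result), it delivers uniqueness of the optimizer explicitly, and---as you correctly note---it never uses symmetry, so it establishes the slightly stronger fact that $f_w^*$ is optimal over \emph{all} densities satisfying $E[|w|]=\gamma$, making the constraint $f_w\in\mathcal{F}$ non-binding rather than structurally necessary. The paper's folding argument, by contrast, genuinely needs the symmetry hypothesis (it is what makes the factor-of-two change of variables clean), but in exchange reduces the problem to a standard cited result. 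One cosmetic remark: the paper's expansion carries a $\log e$ factor (base-2 convention) while your computation is in nats; either is fine, but the constant $h(f_w^*)=\log(2\gamma)+1$ you use requires the natural logarithm, so keep the convention consistent throughout.
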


\begin{proof}
By observing that
\begin{align}
\mathcal{D}&\big( {f}_w|| \frac{1}{2\lambda}e^{-|w|/\lambda}\big) = \nonumber\\
   &\int_{w} {f}_{w}(w)  \bigg[ \log {f}_{w}(w)   -   \log \frac{1}{2\lambda}e^{-|w|/\lambda} \bigg] d w =  \nonumber\\
  &\int_{w} {f}_{w}(w)  \log {f}_{w}(w) d w - \log\frac{1}{2\lambda} + \frac{1}{\lambda}  \log e \cdot E[|w|],
\end{align}
and given that $E[|w|] = \gamma$, eq. \eqref{problem-sep-Laplace} can be rephrased as
\begin{equation}\label{problem-sep-rep-Laplace-2}
	\begin{aligned}
		\max_{{f}_{w}: E[|w|] = \gamma} &\hspace{0.2cm}  h({f}_{w}),
	\end{aligned}
\end{equation}
where $h(f) = - \int_{x}f(x) \log f(x) dx$ denotes the differential entropy.
Equation \eqref{problem-sep-rep-Laplace-2} is equivalent to finding the {\em maximum entropy distribution}  over all the probability density functions for which $E[|w|] = \gamma$  \cite{CandT}.

By rewriting the differential entropy integrating over the positive and negative supports separately, and by exploiting the symmetry of ${f}_{w}$, we get
\begin{align}
h(f_w) = & - \int_{\mathds{R}} {f}_{w}(w) \log {f}_{w}(w) d w= \nonumber\\
%= & \int_{0}^{\infty} f(\theta) \log f(\theta) d \theta - \int_{-{\infty}}^{0} f(\theta) \log f(\theta) d \theta
= & - 2 \int_{0}^{\infty} {f}_{w}(w) \log {f}_{w}(w) d w \nonumber\\
= & - \int_{0}^{\infty} {f}_{w}'(w) \log {f}_{w}'(w) d w + 1  \nonumber\\
= & \quad h( {f}_{w}') + 1,
\end{align}
where in the second-to-last equality we let:
\begin{align}
{f}_{w}'(w) =
\left\{\begin{array}{ll}
		2 {f}_{w}(w) & w \ge 0\\
		0  & w< 0\\ % t\neq \phi(f(x))\\
\end{array}\right..
\end{align}
%
%Therefore, maximizing the entropy of $\hat{f}_{{\theta}_m}$ is equivalent to maximizing the entropy of $\hat{f}_{{\theta}_m}'$
%
Therefore,
%the entropy-maximizing distribution in \eqref{problem-sep-rep-Laplace-2}  is equivalent to the entropy-maximizing
%  distribution of  the following
in order to find the entropy-maximizing distribution in \eqref{problem-sep-rep-Laplace-2},
we can equivalently solve the problem
% \BTcomm{per maggiore chiarezza andava riportata  la pdf in pedice a E. L'unico punto poco ambiguo e' questo, per cui ho preferito non appesantire troppo la notazione. \MB{We could remove the subscript in the expectations. The meaning seems to be clear all over the paper }}
%
%
\begin{equation}\label{problem-sep-rep-Laplace-2-equivalent}
	\begin{aligned}
%		 {{f}_{{\bm w}^m}}' = \arg\max_{{f}_{{\bm w}^m}'}
\max_{{f}_{w}'}
&\hspace{0.2cm}  h({f}_{w}')\\ % t\neq \phi(f(x))\\
		&  E[w] =  \gamma \\
        & {f}_w'(w) = 0,  \text{ for $w < 0$}.
	\end{aligned}
\end{equation}
%
%\MB{I found the error. Even if $f'$ is equale to twice $f$ on the positive axis, the constraint remains the same, so in the above equation we must set $E[w] = \gamma$}. \MB{I hope this does not mean that you will have to rerun all the experiments by generating the spreading sequence in the correct way}
The distribution we search for in \eqref{problem-sep-rep-Laplace-2-equivalent} is  the  maximum entropy distribution  over all the probability density functions with support $[0, \infty)$ satisfying $E[w] =  \gamma$.
%set where f (x) > 0 is called the support set of X
%
The solution of this problem is known (\cite{CandT}, Ex 12.2.5) and corresponds to ${f}_{w}' = \frac{1}{\gamma} e^{-|w|/\gamma}$, $w\ge 0$, hence yielding:
\begin{equation}
 {f}_{w}^*(w) = \frac{1}{2 \gamma} e^{-|w|/\gamma},  -\infty \le w \le \infty.
\end{equation}

%\BTcomm{Qui ho fatto riferimento a questo che nel Cover e Thomas viene riportato come esercizio..... Volendo si possono fare tutti i conti (che avevo fatto in prima battuta) per andare a risolvere la massimizzazione con il metodo di Lagrange. Cosi' facendo verrebbe una pagina con i conti (da spostare in Appendix).} \MB{Va bene cosi', se ce lo chiedono aggiungiamo un'appendice. Certo che e' curioso che il risultato non dipenda da $\lambda$, forse dovremmo commentare questo fatto.}
%
%
\end{proof}

Based on the result of Theorem 1, the watermarked weights must be generated following the distribution  $ {f}_{{\bm w}^m}^* = \frac{1}{2\gamma}e^{-|w |/\gamma}$, that is, ${\bf w}^{m,*} \sim \textrm{Laplace}(0, \gamma)$. To do so, we proceed as follows: we first generate the  pseudo-random sequence ${\bf s}$ according to a $\textrm{Laplace}(0, \gamma)$ distribution. Then, we apply Eq. \eqref{eq.embedding} to modulate the spreading sequence according to the watermark message, thus getting the vector of
watermarked weights  ${\bf w}^{m,*}$ still following a $\textrm{Laplace}(0, \gamma)$ distribution.

\subsection{Watermark extraction}
\label{sec.algorithm_extraction}

%The watermark extraction process is depicted in Fig. \ref{watermark_extraction}.
%As shown in the figure,
Watermark retrieval
%is carried out by inspecting the weights of $\Phi$ and
requires the knowledge of the
%of the key $K$, providing the
indexes of the watermarked weights
and the pseudo-random sequence ${\bf s}$.
%
%\begin{figure}[]
%	\centering
%	\includegraphics[scale=.4]{Fig3-extraction.pdf}
%	\caption{Watermark extraction process. \MB{This picture does not say much, I think it could be removed}}
%	\label{watermark_extraction}
%\end{figure}
%
The vector  ${\bf w}^m$ is first obtained by reading the weights of $\Phi^m$ in the positions indicated by $\Omega$, then the $i$-th bit of the watermark  is extracted as follows:
\begin{align}
\hat{b}_i =
\left\{\begin{array}{ll}
1 & \text{if $\sum_{j = (i-1)S + 1}^{iS}  s_j \cdot w^m_j \ge 0$} \\
0 & \text{otherwise} \end{array}.\right.
\end{align}

The Bit Error Rate (BER) is calculated as $\textrm{BER} = (\sum_{i} ({\bf b} \oplus  \hat{\bf b})_i / l)\times 100$, where  $\oplus$ denotes the bitwise XOR operation.
%\MB{Se vuoi lascia pure cosi', pero' a me piacerebbe di piu' usare le probabilita' al posto delle percentuali.}

We notice that, in the absence of modifications of the watermarked model, $\hat{\bf b}$ is equal to ${\bf b}$ by construction.
%  This is a peculiarity of many DNN watermarking \MB{I wouldn't say so, in schemes like Uchida's there's no guarantee that at the end of training the BER be equal to 0} algorithms and is a consequence of the fact that the desired behavior for the task and the watermark are learned
%simultaneously.
In classical watermarking theory, such a property is achieved by informed watermarking algorithms  \cite{barni2004watermarking}, whereby embedding is performed by applying a signal-dependent perturbation to the host signal.
Our DNN watermarking algorithm adopts a somewhat dual approach. The watermarked weights are first fixed, then the other weights are defined in a watermark-dependent way, in such a way that they {\em co-operate} with the fixed weights to accomplish the network task.
%\MB{I would stop here and remove the following parallelism with Dirty paper coding, which I think goes too far}
%{\em Drawing a parallelism with the informed embedding paradigm and the theory of writing in a dirty paper by Costa \cite{miller2004applying},
%the dirt is represented by the watermark signal, injected in a white paper, instead of by the host signal like in classical watermarking}

%\section{Proposed Method}
\section{Experimental methodology}
\label{sec.methodology}

We validated the proposed DNN watermarking technique by considering different architectures and tasks.

\subsection{Host networks and tasks}

We focused on tasks taken from two different application areas, that is, image forensics and pattern recognition.
%We consider several different tasks, from different applications areas:
Specifically, we considered the distinction of images generated  by Generative Adversarial Networks (GANs) from natural images \cite{goodfellow2020generative} (GAN detection), object recognition (CIFAR-10 and 100 classification  \cite{krizhevsky2009learning}) and traffic sign classification  (GTSRB classification \cite{Houben-IJCNN-2013}).
%
%%1) the problem of detecting Generative Adversarial Network (GAN) images ??PUT REFERENCE  ??? (GAN image detection); 2) the problem of traffic sign classification  (GTSRB
%%classification ???REFERENCE); 3) the problem of image classification, namely CIFAR classification ????REFERENCE.
%For the GAN image detector, we considered the problem of discriminating pristine face images from synthetic face images generated by the StyleGAN2 generative model \cite{Karras2019stylegan2}.
These tasks permit to test the effectiveness of our method in different scenarios, including
binary classification  and multi-class classification.
% (43 classes for GTSRB, and 10 classes for CIFAR-10).
%
For each task, we chose network architectures among those achieving state-of-the-art performance.
%
%
%With regard to the network architectures,

For the GAN detection task, we considered the discrimination of natural face images and images generated by the StyleGAN2 model \cite{Karras2019stylegan2}, by means of XceptionNet, which can achieve state-of-the-art performance \cite{gragnaniello2021gan}. We trained the network for 10 epochs with SGD optimizer, learning rate $0.01$ and batch size 32.
More details on network training and the dataset can be found in the authors' repository
\cite{GANnoGAN}.

For CIFAR-10 classification, we trained a ResNet18
% \cite{he2016deep} \BT{reference for this} \AC{Inserita.}
and a DenseNet169 architecture
\cite{huang2017densely},
following the parameter setting reported in \cite{DenseSettingCIFARnew},
% \BT{e' ancora il reference giusto? vale anche per ResNet?}\AC{ No, non era più giusto. L'ho aggiornato},
achieving the benchmark accuracy  for this task.
Specifically,
with both DenseNet and ResNet architecture, the network was trained for 200 epochs with SGD optimizer, learning rate $0.01$ with multi-step decay every 50 epochs and batch size 32.
In all our experiments, we consider augmentation.
CIFAR-100  and GTSRB are considered for the transfer-learning experiments.

%{\em For GTSRB and CIFAR-10 classification, we trained a DenseNet169 architecture \cite{huang2017densely}.
%Specifically, for GTSRB, the network was trained for 20 epochs with Adam optimizer, learning rate $0.001$ and batch size 64, reaching the benchmark accuracy  (https://benchmark.ini.rub.de/). }\TODO{Sposta a dopo - TL setting}
%

%For the GTSRB task, we trained a DenseNet169 architecture \cite{huang2017densely} following the parameter setting reported in ???,  reaching the benchmarks.
%
%For CIFAR-10 classification, we trained the network
%following the parameter setting reported in \cite{DenseSettingCIFAR},
%again achieving state-of the-art performance for this task.
%
%In all our experiments, we did not consider augmentation.

Notably, the architectures we used to assess the effectiveness of the algorithm are quite diverse.
They all have a similar number of parameters (in the order of $10^7$), however, they differ  in terms of depth
% (\AC{169} layers for DenseNet, 71 layers for XceptionNet \AC{and 18 for ResNet18}),
%
% for a similar number of parameters %\BT{????HOW MANY} times the number of parameters of XceptionNet \BT{AND THE DEPTH???})
and internal structure connections.
%\AC{Da un punto di vista concettuale, la profondità delle rete è quella che ti ho messo qui. Da un punto di vista puramente tecnico, la profondità in PyTorch (se stampi il numero di layers) è 603 per Dense, 136 per XCeption e 60 per Resnet. Questo succede perché un layer "teorico" è implementato con più layer "in pratica".}
While the block connections in the XceptionNet architecture are pretty standard, in DenseNet, there are dense blocks where  each convolutional layer is connected to every other
layer in a feed-forward fashion.
ResNet instead relies on residual blocks for feature extraction.

%Since the results of the experiments are similar for GTSRB and CIFAR-10, for sake of brevity,  we will provide an in-depth analysis of the GTSRB case, and give only a summary of the results we got on CIFAR-10 classification.
%
%\subsection{Parameters setting}

The DNN watermarking algorithm and network training have been implemented by using the \mbox{PyTorch 1.8} library and the code is made publicly
available for reproducibility (\url{https://github.com/andreacos/Deep-Neural-Networks-Robust-Watermark}).
% \BTcomm{@Andrea: Ricordarsi di aggiornare git con anche materiale/info supplementari al momento della submission}
%\BTcomm{@benedetta: ricorda di cambiare a pubblico}.

\subsection{Watermarking algorithm setting}%Setting of watermark parameters}

%\TODO{Selezione pesi da marchiare}

We tested the performance of our watermarking algorithm by considering both single-layer and multi-layer embedding. The multi-layer solution is necessary for large payloads and/or spreading factors, that is, for large $n$, when the percentage of watermarked weights in the watermarked layer for the single-layer embedding is too large,  thus deteriorating the network performance (as confirmed by our experiments).

In all the settings, we embedded the watermark in the convolutional layers from intermediate to deep.
All the information on the host layers $k$,  the total number of weights in the layer ($N_k$), and the  variance  of the distribution of the non-watermarked weights  for each  layer ($\sigma_k^2$), can be found at the link \url{https://github.com/andreacos/Deep-Neural-Networks-Robust-Watermark}.

We observe the following noticeable differences among the architectures we have considered:
%DenseNet and ResNet tends to have a much larger number of parameters (weights) in the convolutional layers with respect to XceptionNet \BT{Da verificare. Allora dove stanno tutti questi pesi di Xception?}. Moreover,
regardless of the primary task, the variance of the distribution of the weights of DenseNet and ResNet is much lower than the variance of the weights of XceptionNet. More precisely, the variance of non-watermarked weights for XceptionNet is in the range [0.4-2.5], while it is in
the range [4 $ \cdot 10^{-6}$- 6 $\cdot 10^{-6}$]  for DenseNet  and [2 $ \cdot 10^{-5}$- 1 $\cdot 10^{-4}$] for ResNet.
%This is the case regardless of the task accomplished by the network and is mainly a consequence of the different structure of the networks.

%
%\begin{table*}[t]
%    \scriptsize
%	\renewcommand\arraystretch{1.3}
%	\setlength{\tabcolsep}{1mm}
%	\centering
%	\caption{Summary of host layers settings and related information.}
%	\begin{tabular}{c|c|c|c|c}
%		\hline
%\multirow{2}{*}{Architecture} & {Single/} & \multirow{2}{*}{Host layers} & \multirow{2}{*}{$\sigma_k^2$} & \multirow{2}{*}{$N_k$}  \\
%& Multi-layer & & & \\ \hline
%\multirow{3}{*}{XceptionNet} & 1 layer & block14\_sepconv2 & 1.8098 & 13824\\ \cline{2-5}
% & 2 layers & block14\_sepconv1, block14\_sepconv2& [1.1378, 1.8098] & [9216, 13824]\\ \cline{2-5}
%  & 4 layers &  block13\_sepconv1, block13\_sepconv2, block14\_sepconv1, block14\_sepconv2 & [2.3441, 0.4284, 1.1378, 1.8098] & [6552, 6552, 9216, 13824]\\ \hline
%\multirow{3}{*}{DenseNet} & 1 layer & conv5\_block16\_1 & 0.00069 & 143360\\ \cline{2-5}
% & 2 layers & conv5\_block16\_1, conv5\_block20\_1& [0.00069, 0.00057] & [143360, 159744]\\ \cline{2-5}
%  & \multirow{1}{*}{4 layers} & conv4\_block16\_1, conv4\_block20\_1, conv5\_block16\_1, conv5\_block20\_1 & [0.00056, 0.00062,  0.00069, 0.00057] & [94208, 110592, 143360, 159744]\\ %\cline{3-5}
%  %&  & conv5\_block16\_1, conv5\_block20\_1, conv5\_block24\_1, conv5\_block28\_1 &  [0.00069, 0.00057, 0.00073, 0.00064] & [143360,159744, 176128, 192512] \\
%  \hline
%    \end{tabular}
%        \label{tab.info-rete-watermark}
%\end{table*}

%

In the multi-layer case, for simplicity, we watermarked the same number of weights in each layer, that is $|\Omega_k|$ is the same for all $k$'s.
Given that the host layers have different number of parameters,
the percentages of watermarked weights $p^m_k$ in different layers are different.
Such percentages obviously
%are not reported in the table since they
depend on the specific watermarking setting, i.e., on the  payload  $l$ and the spreading factor $S$.

%
%Different payloads $l$ and spreading factors $S$ were considered for the embedding.
For a given layer $k$, the watermark is embedded by considering different strengths $\gamma$ proportional to the standard variation of the distribution of the non-watermarked weights of the layer. More precisely, in the experiments, we let $\gamma = C \sigma_k/\sqrt{2}$ and adjusted the watermark strength by varying the parameter $C$.
%The percentage of watermarked weights in the embedded layers ranges from ??? ....to ????....in the case of ....
%and ....in the case of .....
We observe that, with this definition of $\gamma$, the variance of the distribution of the watermarked weights is proportional to  $\sigma_k^2$ with constant $C^2$ (in fact the variance of a $\textrm{Laplace}(\mu, \gamma)$  is equal to $2 \gamma^2$).   Therefore, $C=1$ corresponds to the case of theoretically perfect indistinguishability of the distributions.
%\footnote{We remind that $\sigma_k^2$ is the variance of the weights of the non-watermarked model in layer $k$. In the watermarked model, the variance of the non-watermarked weights in the same layer can be slightly different, hence the indistinguishability may not be perfect even for $C=1$. \MB{I would avoid this note, we already said that corresponds to {\em theoretical} indistinguishability, even because it is not even true that the coefficients follow perfectly a Laplacian distribution }}.
%
%
To watermark our models, we consider $C \in [1, 2]$.
%To watermark the GAN detection model based on XceptionNet, we considered $C \in [0.7, 1.3]$, while for the DenseNet models we considered integer values of $C$ %ranging from 2 to 20.
%
%we set  $C \in [15, 25]$.
%These values are chosen based on an analysis of the indistinguishability of the embedded watermark.
% \BT{(note that the perfect indistinguishability corresponds to $C=1$)}.
%
% on the visibility constraint,  so that the presence of  watermarked weights can not be easily detected by looking at the weight distribution.
A more detailed discussion is provided in Section \ref{sec.histograms}.

\subsection{Setting of robustness experiments }% Setting for  robustness analysis}
\label{sec.method-robustness}

With regard to model compression, we considered both parameter pruning and weights quantization.
For parameter pruning, we cropped a fraction $p$ of the
weights of the convolutional layers
% given that the watermrk is embedded in this kind of layers
by setting them
to zero.
As customary done, we cut off the weights based on their absolute values, starting from the smallest ones.
Watermark extraction is carried out as usual.
The performance are assessed for several pruning fractions $p$.
For weights quantization,  we performed conversion to integer representations by quantizing the weights with $n_b = 32, 16,8$ and $4$. % in our experiments.

As to retraining, we considered both transfer learning and fine-tuning.
For the transfer learning scenario,  we focused on the more general and challenging inductive transfer learning setting, according to which the watermarked models are re-trained on a different task and a different domain. Specifically,  we used the watermarked models as pre-trained solutions and performed the new training  using the standard cross-entropy loss $\mathcal{L}$. Of course, in this phase the watermarked weights are also updated.
We considered two transfer learning scenarios:
%  from the least to the most challenging.
%
\begin{enumerate}
\item {\em Different task with the same number of classes.} We retrained the model initially trained to solve the binary GAN  detection task to solve a new two-class classification  problem, namely the classification of image picturing cats and horses.
    %We considered \MB{Do you mean or retraining?} 20K images for each class taken from the LSUN dataset \cite{LSUN}.
    Retraining was performed on 20K images for each class, taken from the LSUN dataset \cite{LSUN}.
    % XceptionNet is used for this task.
\item {\em Different task with different number of classes}.
%We retrained the model initially trained on GTSRB data (43 classes) to solve the CIFAR-10 (10 classes) classification problem, and viceversa.
We retrained the model initially trained on CIFAR-10 data (10 classes) to solve the  GTSRB  (43 classes) and the  CIFAR-100  (100 classes) classification tasks.
\end{enumerate}
%
%\subsubsection{Different tasks with same number of classes}
% From the GAN face detection task to the  problem of image classification between cats and dogs ( \BT{@Andrea: ???? REFERENCE TO THE DATASET USED??})....... XceptionNet is used for this task.
% \subsubsection{Different tasks with different number of classes}
%From GTSRB classification (having 42 outputs)  to CIFAR-10, and to CIFAR-100. \BT{@Andrea: la input size qui e' la stessa ??} .......DenseNet is used for this task.
%%
% \subsubsection{Different tasks with (very) different input sizes and different number of classes}
%Efficient (o Dense) on Food101 to Cifar10 \BT{non so se questo alla fine lo inseriremo. Non ricordo se avevamo fatto i test alla fine. Per ora lo lascio.}

%In both cases, the networks are trained on the new task for 10 epochs,
%that can achieve good accuracy on the new task (we observed that going on with the epochs the loss does not decrease much further).

In case 1), the network was trained on the new task for 10 epochs, that are enough to achieve the maximum accuracy on the new task. For 2), transfer learning required 20 epochs on GTSRB and  200 epochs on CIFAR-100 to reach the benchmark accuracies.
%The retraining is performed by considering
%\AC{SGD optimizer, learning rate $0.01$ for 1) and $0.001$ for 2) and batch size 32}
%\BT{va riempito. Sno questi setting uguali per 1) e 2)? si puo' dire che si mantiene lo stesso lr di training?}.

We also assessed the robustness against fine-tuning, by retraining the watermarked models for 10 additional epochs on a subset consisting of 70\% of the original training data.
For the GAN detection task, where a very large number of images is available for training, we also tried fine-tuning on 30\% of the original dataset obtaining similar results.

\section{Results and discussion}
\label{sec.experiments}

In this section, we report
the results of the experiments we carried out to demonstrate
that our algorithm can achieve large payloads without impairing the
performance of the host models (Section \ref{sec.exp-perf}),  at the same time achieving outstanding robustness against network modifications and re-use (Section \ref{sec.exp-robust}).
%In particular, we show that the watermark can survive transfer learning, a result that can not be achieved by the state-of-the-art white-box methods proposed so far under the invisibility  requirement.

\subsection{Performance of DNN watermarked  models}
\label{sec.exp-perf}

We start by evaluating the drop
of classification accuracy (if any) due to the presence of the
watermark. The performance of the watermarked models are
%evaluated by measuring the test error rate of the model
%for the envisaged task, assessing the unobtrusiveness of the
%watermark, where
assessed by measuring  the
Test Error Rate (TER), defined as TER = 100\% - ACC, where ACC is the accuracy of the network on the classification task.

%***************
%Discuss payload, bit error rate and test error rate....
%(payload vs unobtrusiveness)
%***************

The results of our experiments on the GAN detection task for various
payloads ($l$), watermark strengths ($C$) and spreading factors ($S$), single and multi-layer embedding are reported in Table \ref{tab:performanceGAN}.
In the following, we find sometimes convenient to use the  compact  acronym {\em Network}-{\em Task}-$l$-$C$-$S$ to indicate the specific watermark setting.
%\MB{if we want to use this notation, we must introduce it immediately before discussing the tables }

%
% and watermark embedding parameters, namely the strength constant $C$ and the spreading factor $S$ are reported in Table ??? \BT{Tabella  simile a quella riportata nei readme}
%
\begin{table}[t]
    %\scriptsize
	%\renewcommand\arraystretch{1.3}
	\setlength{\tabcolsep}{1.5mm}
	\centering
	\caption{Performance of DNN watermarked models on the GAN detection task. The baseline  TER is 0.55 \%. In the multi-layer cases we report the percentages of watermarked coefficients for the various layers.}
    \label{tab:performanceGAN}
	\begin{tabular}{c|c|c|c|c|c}
		\hline
 {$l$} &  $S$ & {$C$} & No. L  & $p^m_k$ \% & TER \%\\ \hline
\multirow{6}{*}{256}   & 1  & 1& 1 &  1.8 & 0.6 \\ \cline{2-6}
 & 3  & 1& 1 &  5.55 &  0.4\\ \cline{2-6}
 &   12 &  1 & 1 &  22.22  &  1.3\\  \cline{2-6}
  &   12 &  1.5 & 1 &  22.22  &  1.0\\  \cline{2-6}
  &   18 &  1 & 1 & 33.33  & 0.6 \\  \cline{2-6}
 &   18 & 1  & 2 &   [25, 16.67]  & 0.6 \\ \hline
  \multirow{4}{*}{1024}   & 6 & 1 & 2 & [33.33, 22.22] & 1.0 \\ \cline{2-6}
   & 12 & 1 & 2 & [66.34, 44.44] &  1.0 \\ \cline{2-6}
      & 12 & 1.5 & 2 & [66.34, 44.44] &  1.0 \\ \cline{2-6}
     & 18 & 1 & 4 &  [70.32, 70.32, 50.00, 33.33] & 4.4  \\ \hline
    % & 18 & 1.5 & 4 & 18432 & [70.32, 70.32, 50.00, 33.33] &5.9  \\ \hline
% \multirow{2}{*}{2048}  & 3  & 1 & 2 & ?? & [??,??]  &  ?? \\ \cline{2-7}
% & 6  & 1 & 2 & ?? & [??,??]  &  ?? \\ \hline
{2048}  & 6  & 1 & 4 &  [46.88, 46.88, 33.33, 22.22]  & 1.0\\ \hline
    \end{tabular}
\end{table}

%
%The total number of watermarked  weights ($|\Omega|$) is also reported in the table. % in the table.
%
The TER of the baseline non-watermarked model is 0.55\%.
%We see that.....when .....
%
Since  the position of the host weights is known during the extraction, the integrity requirement is satisfied by construction, and the BER is always equal to 0 (not reported in the table). %
In most of the cases, a good TER can be achieved and the difference between the watermarked and the baseline models is negligible.
%, that is the same of the one achieved by the baseline non-watermarked model PUT REFERENCE TO THE GIT IN CASE.
%
%Obviously, the BER is always 0, since  the extraction always recover the embedded weights, the integrity requirement being then solved by construction. % \BT{Not even need to report a column for that}
%
%
%A good TER can be obtained with the single-layer embedding when $l = 256$.
%A good TER can be obtained with the two-layer embedding when the payload is
%$l = 1024$ bits and $S = 6$ and $12$. FINISH !!!!!!!!!!!!!!!!!!!!!!!!!!!!!!!!!
%
%
We observe that, when $S$ increases to 18 with $l = 1024$,  the number of watermarked bits starts being large $|\Omega| = 18432$, and
 embedding all the bits in the 2 layers  leads to a too large $p^m$ (with the layers almost saturated), compromising the unobtrusiveness.
%, and  a 4 layer setting has to be considered for the embedding. In this case,  TER = 0.66\%.
When the 4 layer setting is considered for the embedding in this case (see the table), we get  TER = 4.4\%, with the occupancy of the first two layers above 70\% having then a negative impact  impact on the TER.
%
%The same behavior occurs when we try to embed $l = 2048$ bits in 2 layers, with $S=6$, in which case the percentage of watermarked weights in the first layer goes far above 50\%.
%\TODO{ADJUST THIS LAST SENTENCE....insert the case also in the table}
%We found that, for $l = 2048$ bits, we need to embed the watermark in more than 4 layers to get a good TER \MB{These results are not reported in the table. Why?}.
%
%Based on our experiments, the single-layer embedding can also work with large payloads, without affecting the unobtrusiveness, when the $p^m_k$ for a given layer $k$ get close to 50\% ???? \BT{o goes above 40\%???}
%Based on our experiments, we can embed large payloads without affecting the unobtrusiveness, as long as the $p^m_k$ for a given layer $k$ remains below 40\%.
%We observed that,
%Therefore,
%%as a general behavior,
%In general, from our experiments we observed that,
As a general observation,
in order to embed large payloads without affecting the unobtrusiveness constraint, it is often necessary to consider  multiple layers, in such a way that the percentage of watermarked weights in the embedded layers does not grow not too much.

\begin{table}[t]
    %\scriptsize
	%\renewcommand\arraystretch{1.3}
	\setlength{\tabcolsep}{1.5mm}
	\centering
	\caption{Performance of   ResNet-based CIFAR-10 watermarked model. The baseline TER is 5,1\%. In the multi-layer cases the percentages of watermarked coefficients for the various layers is reported.}
    \label{tab:performanceCIFARnew}
	\begin{tabular}{c|c|c|c|c|c}
		\hline
{$l$} &  $S$ & {$C$} &No.L  & $p^m_k$  \%& TER  \%\\ \hline
\multirow{4}{*}{256}
&  3 & 1  & 2  & [0.04, 0.01]  &  5.3 \\ \cline{2-6}
 & 25  & 1 &  2  &   [0.54, 0.14] &   5.1\\ \cline{2-6}
  & 25  & 1.5 &  2  &   [0.54, 0.14] &   5.1\\ \cline{2-6}
 & 50  & 1 &  2  & [1.09, 0.28]  &  5.2 \\ \hline
\multirow{5}{*}{1024}
&  25 & 1 & 2  &  [2.17, 0.54] & 5.0 \\ \cline{2-6}
&  25 & 1.5 & 2  &  [2.17, 0.54] & 5.1 \\ \cline{2-6}
&  50 & 1 & 2  &  [4.34, 1.09] & 5.2 \\ \cline{2-6}
%&  6 & 1 &1  & 6144 & [2.14, 1.90]&  5.33\\ \cline{2-7}
&  75 & 1 & 2  & [6.51, 1.63] &  6.5 \\ \cline{2-6}
& 100  & 1 & 2 & [8.68, 2.17] &  5.1\\ \hline
\multirow{2}{*}{2048}  & 75 & 1 & 2 &  [13.02, 3.26] &  5.1\\ \cline{2-6}
& 100 & 1& 2 &  [17.36, 4.34] &  5.0 \\ \hline
\multirow{3}{*}{4096}  &  125 & 1  & 4 & [21.70, 21.70, 5.53, 5.53] &  5.3\\ \cline{2-6}
& 125 & 1.5& 4 &  [21.70, 21.70, 5.53, 5.53] &  5.2\\  \cline{2-6}
& 150 & 1 & 4 & [26.04, 26.04, 6.51, 6.51] & 5.3 \\ \hline
\multirow{2}{*}{8192}  &  150 & 1  & 4 &
[52.08, 52.08, 13.02, 13.02]  &  5.2\\ \cline{2-6}
%& 150 & 1.5& 4 & [52.08, 52.08, 13.02, 13.02] &  5.4\\  \cline{2-6}
& 200 & 1 & 4 & [69.44, 69.44, 17.36, 17.36] & 5.5 \\ \hline
\multirow{4}{*}{16384}  &  \multirow{2}{*}{250} & \multirow{2}{*}{1}  & \multirow{2}{*}{8} & [34.72, 17.36, 34.72, 34.72,
 &  \multirow{2}{*}{5.2}\\ % \cline{2-6}
 & & & &   34.72, 43.4, 43.4, 43.4] & \\ \cline{2-6}
%& 400 & 1 & 8 & [55.56, 27.78, 55.56, 55.56, 55.56, 69.44, 69.44, 69.44] & 4.8 \\ \hline
&  \multirow{2}{*}{400} & \multirow{2}{*}{1}  & \multirow{2}{*}{8} & [55.56, 27.78, 55.56, 55.56,
 &  \multirow{2}{*}{ 4.8}\\ % \cline{2-6}
 & & & &  55.56, 69.44, 69.44, 69.44]  & \\  \hline
    \end{tabular}
\end{table}

Table \ref{tab:performanceCIFARnew} reports the results for CIFAR-10 when a ResNet18 architecture is used for the classification.
Again, the BER is not reported, being always equal to 0 by construction.
The TER
of the baseline non-watermarked model is 5.1\%.
%, which is similar to the TER achieved by the watermarked models.
%Noticeably, in many cases, the TER of the watermarked models is  lower, and the watermark acts as a regularizer.\BTcomm{E' ancora vero?}
%
A similar behavior is obtained using DenseNet (the TER results are summarized in Table \ref{tab:trasferLearningCIFAR10DenseNet}, column 5. The baseline TER of the non-watermarked model is equal to 4.7\%) .
Given that  ResNet and DenseNet have  significantly larger number of parameters  than XceptionNet in the convolutional layers  (this is especially the case with ResNet), many more weights can be used to carry the  watermark before reaching a critical value of occupancy.

\subsubsection{Analysis of weights distribution}
\label{sec.histograms}

In this section, we analyze the distributions of the networks weights for the non-watermarked and watermarked models to experimentally validate the assumption, made in the theoretical analysis, that the distribution of the non-watermarked weights is similar for watermarked and non watermarked models.
%A qualitative analysis is performed by visualizing the histograms of the weights for the entire network and the watermarked layer only.
The analysis is carried out by considering 3 different watermark settings. Similar  results are achieved in the other settings.

Fig. \ref{fig.Laplacian_distrib}  shows the  distribution of the weights of a non-watermarked model (left) and a the non-watermarked weights of a watermarked model (right) for the case of Xception-based GAN detection (first row) and  CIFAR-10  classification based on ResNet (second row) and DenseNet (third row) for one of the watermarked settings we have considered.
%The watermark setting for the GAN detector is $l = 256$, $S = 18$, $C = 1$, 2-layer embedding, while for the  GTSRB classifier we have $l = 1024$, $S = 25$, $C= 1$, 4-layers.
%
A similar behavior is observed in the other settings.
By looking at the distributions of the weights of non-watermarked models, we  observe that they approximate reasonably well a Laplacian distribution  (the Laplacian fit is reported in the plots).
Moreover, the presence of the watermark does not significantly affect the overall distribution, the shape being similar for both the non-watermarked and watermarked models.

Fig. \ref{fig.indistinguish}  shows the distribution of the watermarked (red) and non-watermarked (blue) weights in the embedding layer for the watermarked models. From top left to bottom right,
the plots refer to the GAN watermarked model,  ResNet-based and DenseNet-based CIFAR-10  watermarked model for the same setting as in Fig. \ref{fig.Laplacian_distrib}.
The name of the embedding layer visualized in the plots is reported in the figure.
%\footnote{In all these settings,  robustness against re-use is achieved (see Section \ref{sec.exp-robust}).}.

%
For the  GAN detection model, the plot shows the distribution of the weights of the single watermarked layer 'block14$\_$sepconv2'\footnote{We remind that the position of the watermarked weights - corresponding to the red distribution in the plots - is secret.}.
% \BTcomm{dovrebbe essere ovvio...ma per uno dei revisori TIFS non lo era. Per cui ho aggiunto nota}}}.
Since the sample variance of the distributions of the watermarked and non-watermarked weights is very similar  with $C=1$ (the former being $\sigma_k$ = 1.3252 and the latter $\sigma_k$ = 1.3205 for the setting in the figure)
the setting is good from  the point of view of the security.
%%no visible tails are introduced, thus confirming that \sout{ the values $C\le 1$ are} \BT{the setting is} good from the point of view of the security. \MB{Do we say anywhere that for $C= 1$ we should get perfect indistinguishability? I think we should mention this fact when we first introduce the parameter $C$.}
%When $C$ is larger than $1$, instead, we observed the tails start becoming visible. \BTcomm{In view of this I have removed the results for the case $C = 1.3$ in the next table for the robustness results.}
Note that this is a consequence of the goodness of the approximation made in the theoretical analysis, that the distribution (and then the variance) of the non-watermarked weights in any given layer remains similar for watermarked and non watermarked models.
For the ResNet and DenseNet CIFAR-10 watermarked models, the histograms of the weights of layer
'layer4.0.convbn$\_$2' and 'dense4.29.conv1'  are visualized respectively, for which the watermark occupancy (percentage of watermarked weights) is 2.17\%
%8.68\%
and 19.13\%.
Expectidely, for the settings and/or embedding layers for which the percentage of the watermarked weights is very small, it is hard to visualize the distribution of the watermarked weights in the plots.
The variances of the watermarked and non-watermarked weights are respectively $4.3 \times 10^{-4}$ and  $4.9 \times 10^{-4}$  for 'layer4.0.convbn$\_$2' and $2.1 \times 10^{-4}$ and  $2.3 \times 10^{-4}$ for  'dense4.29.conv1'.
We also measured the KL distances between the distributions of the watermarked and non-watermarked weights  for the 3 settings and embedding layers reported in Figure \ref{fig.indistinguish}, that are 0.066,  0.678, and  0.082 respectively.

\begin{figure}[]
	\centering
\subfigure[XceptionNet-based GAN detection.]% Non-watermarked model (left). Watermarked model (right).]
{
	\includegraphics[scale=.265]{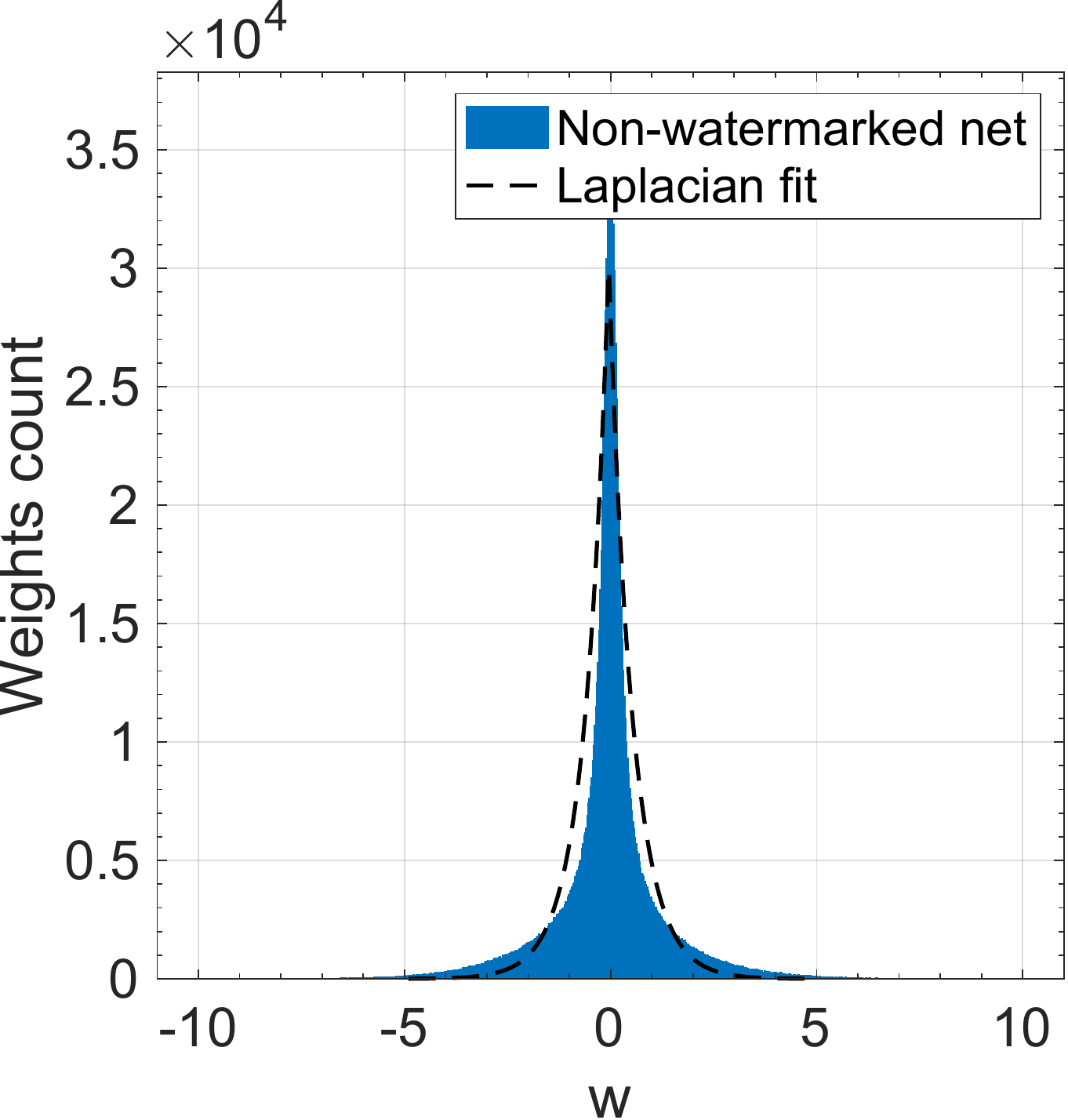}
	\includegraphics[scale=.265]{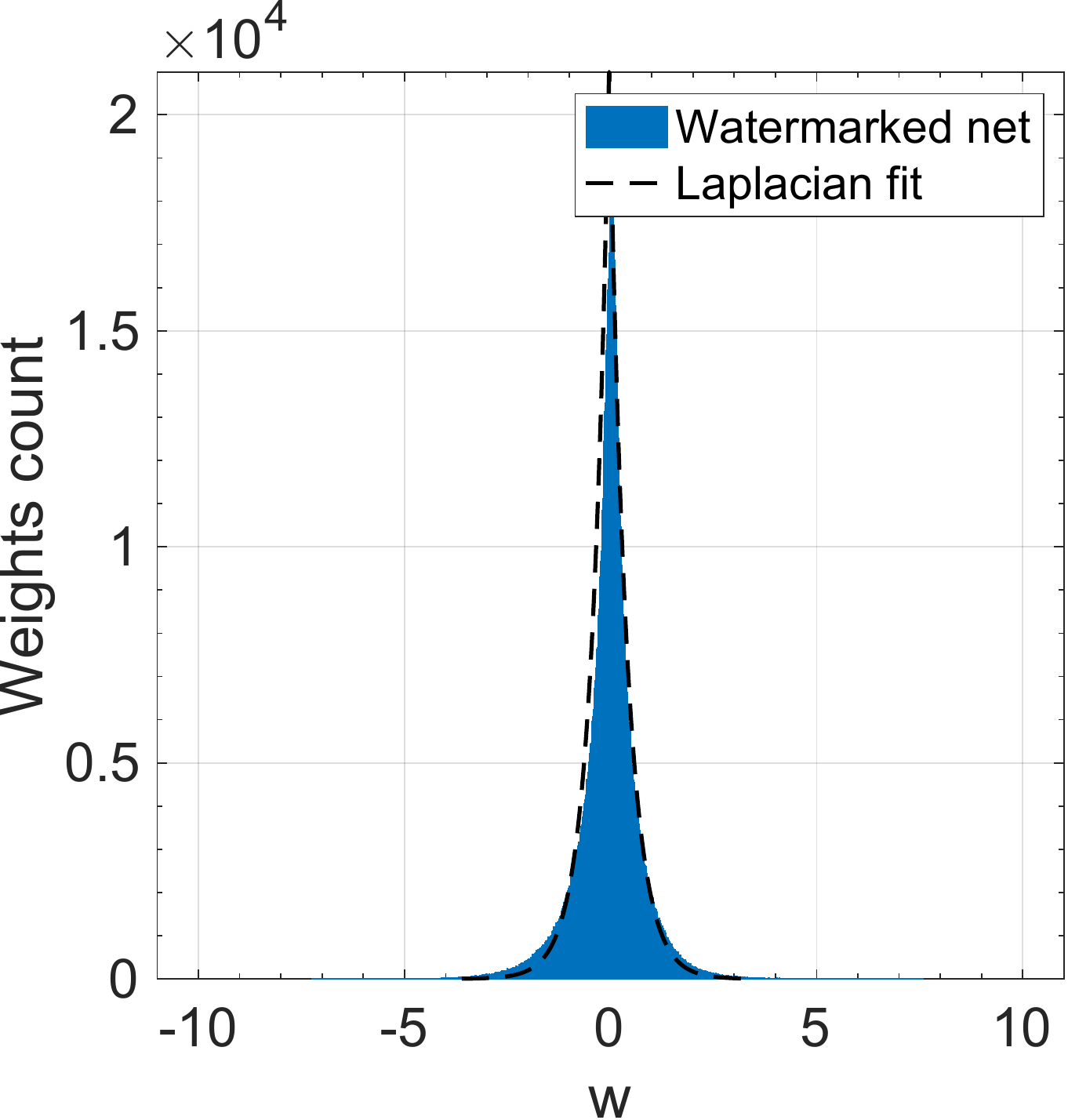}}
\subfigure[ResNet-based CIFAR-10 classification.] % Non-watermarked model (left). Watermarked model (right).]{
{
	\includegraphics[width=0.41\columnwidth]{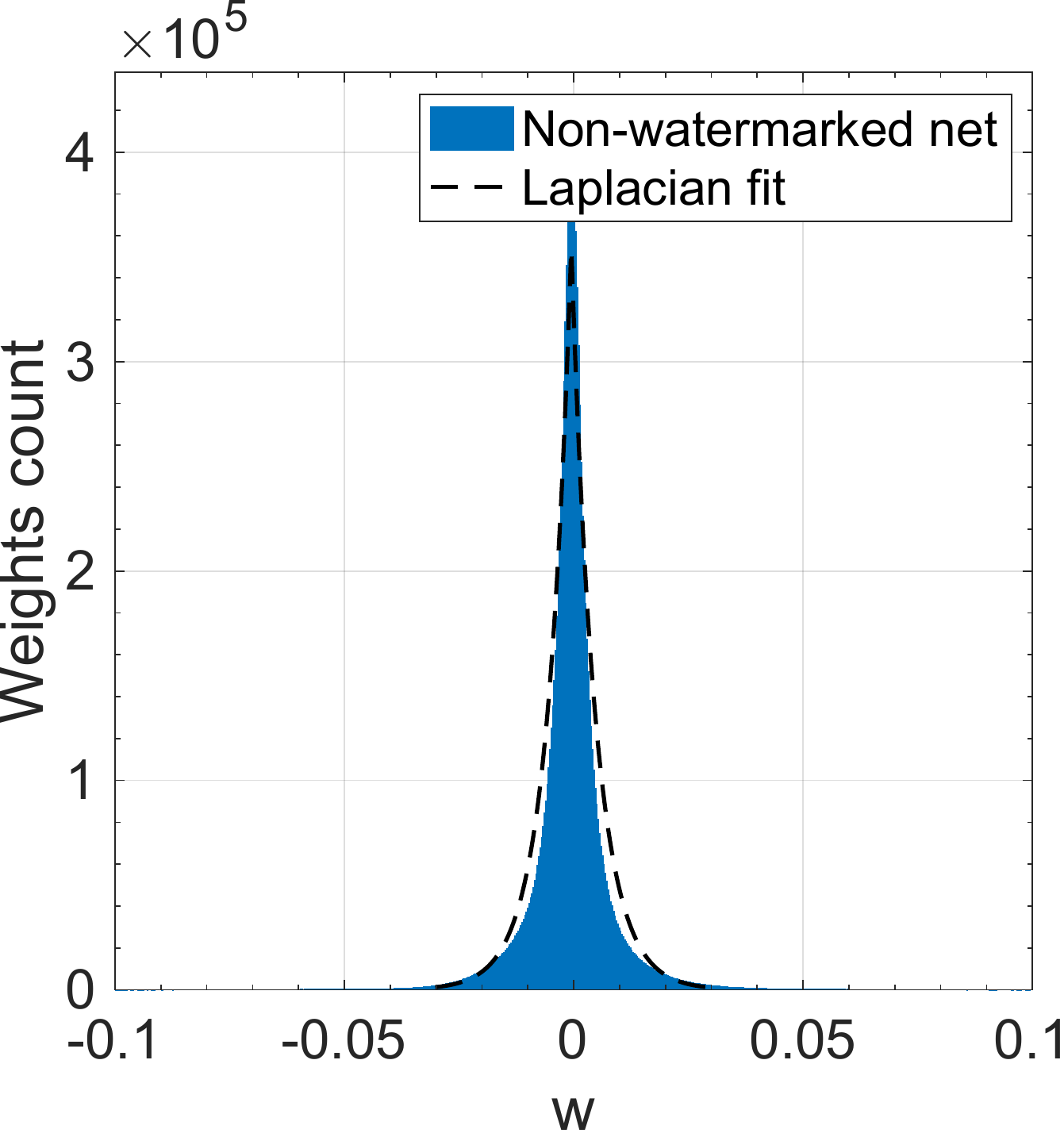}
	\includegraphics[width=0.41\columnwidth]{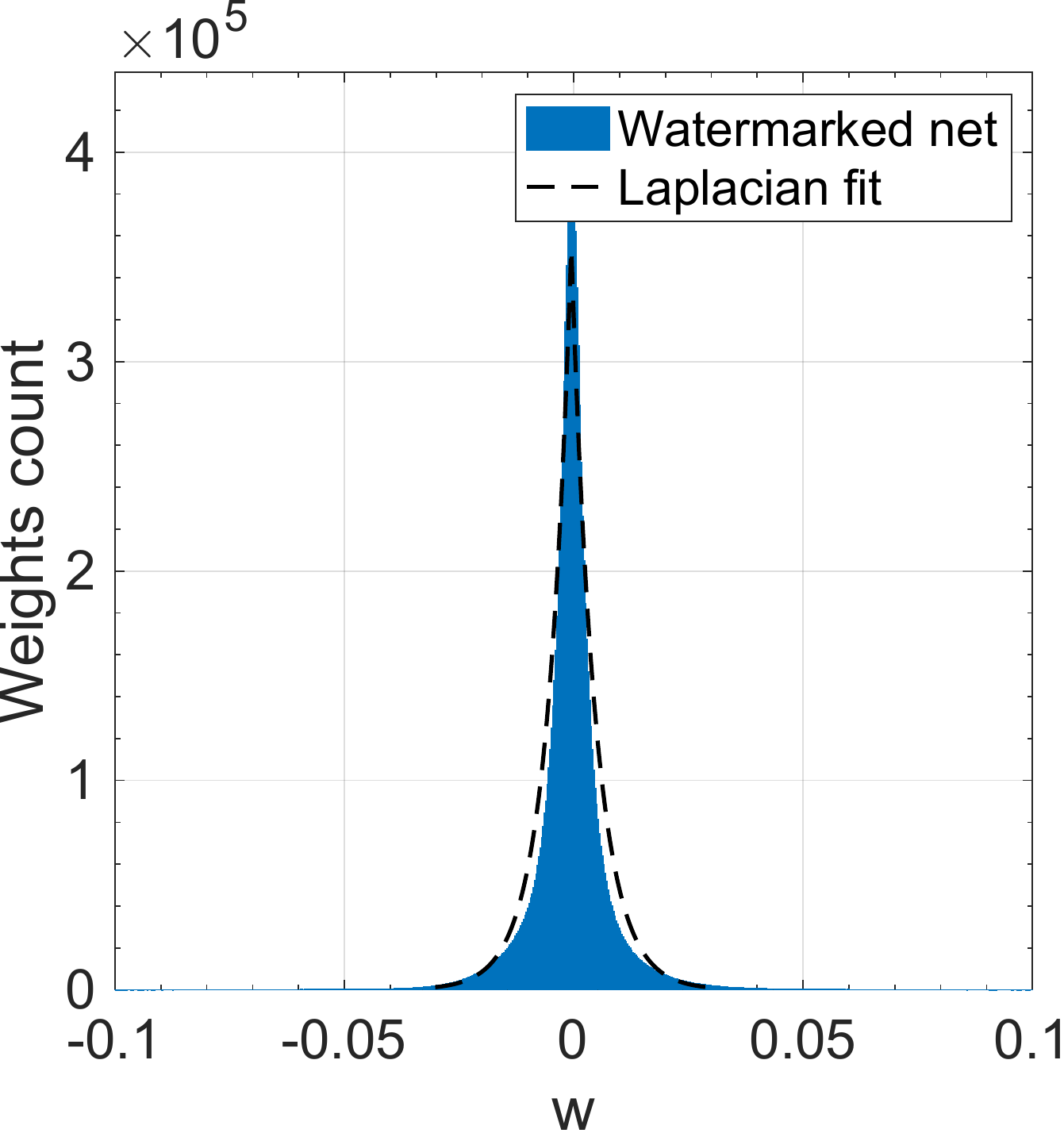}}
	%\includegraphics[scale=.33]{XCeptionNet-Watermarked-K-1-b-256-S-6-L-1-block14_sepconv2-Weights-Watermarked-Only.pdf}
    %\includegraphics[scale=.33]{XCeptionNet-Watermarked-K-1-b-256-S-6-L-1-block14_sepconv2-Weights-Separated.pdf}
%	\subfigure[]{
%	\includegraphics[width=0.48\columnwidth]{Densenet-Non-Watermarked-All-Net-Weights-Weights-With-Laplacian-Fit.pdf}}
%	\subfigure[]{
%	\includegraphics[width=0.48\columnwidth]{Densenet-Watermarked-K-15-b-1024-S-15-L-4-All-Net-Weights-With-Laplacian-Fit.pdf}}\\
%\subfigure[]{
%	\includegraphics[c]{Densenet-Watermarked-K-15-b-1024-S-15-L-4-conv4_block20-Weights-With-Laplacian-Fit.pdf}}
%    \subfigure[]{
%    \includegraphics[d]{Densenet-Watermarked-K-15-b-1024-S-15-L-4-conv4_block20-Weights-With-Laplacian-Fit.pdf}}
\subfigure[DenseNet-based CIFAR-10 classification.] % Non-watermarked model (left). Watermarked model (right).]{
{
	\includegraphics[width=0.42\columnwidth]{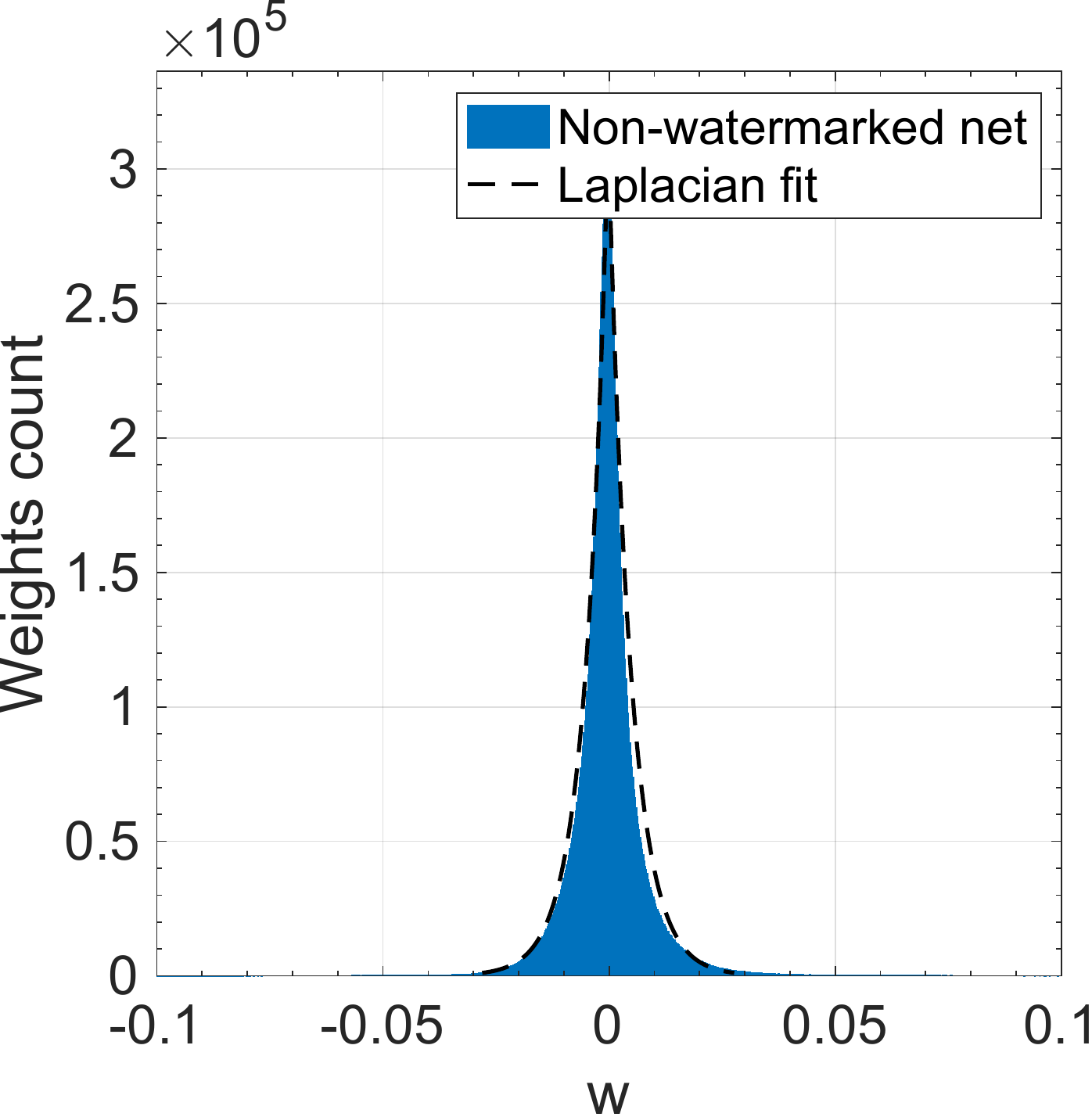}
	\includegraphics[width=0.42\columnwidth]{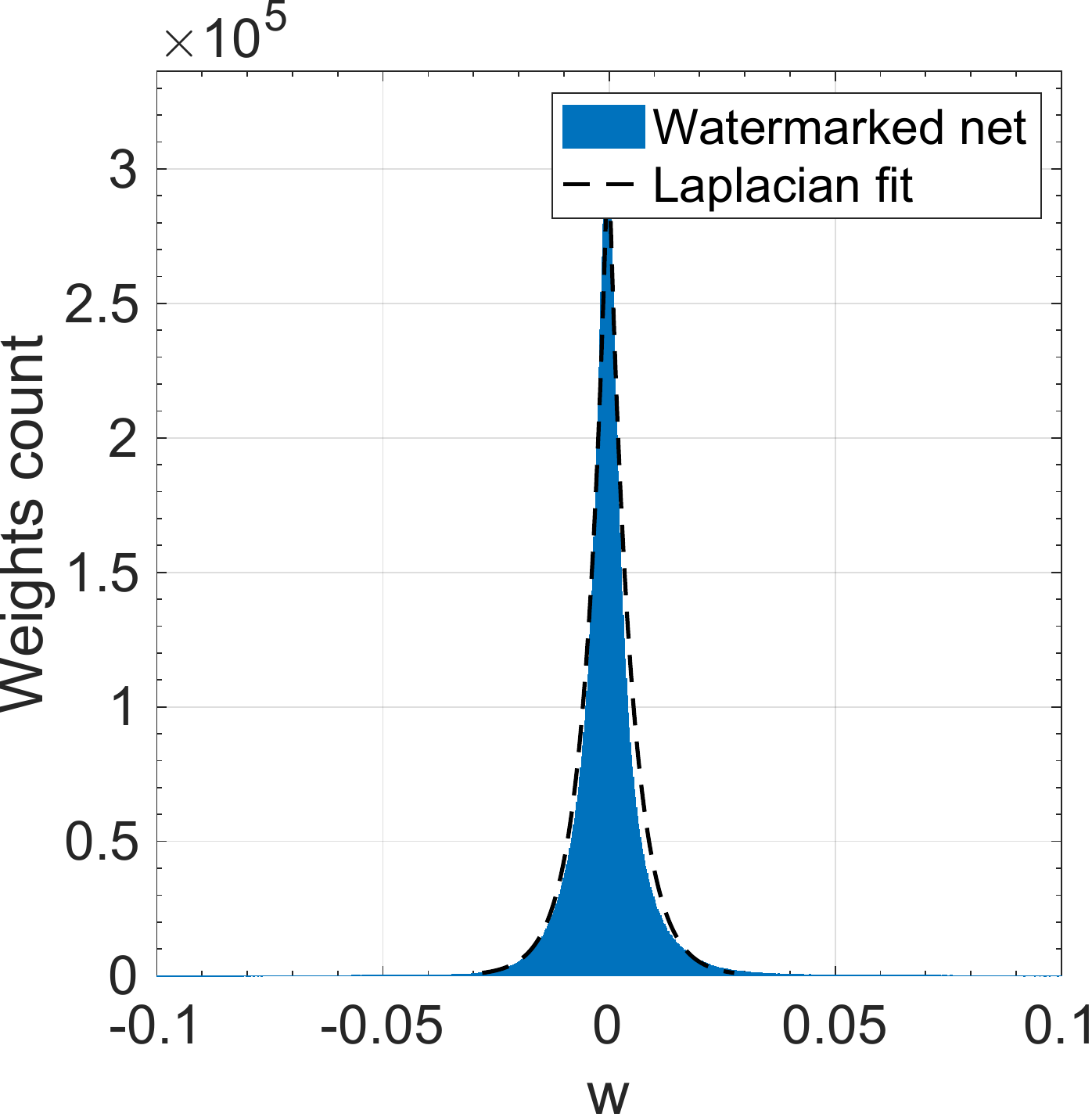}}
	\caption{
Distribution of  non-watermarked weights for non-watermarked (left) and watermarked  (right) models, for
 XceptionNet-based GAN detection (a) ResNet-based CIFAR-10 classification (b) and DenseNet-based CIFAR-10 classification (c). The watermark settings are XceptionNet-GAN-256-1-18,  ResNet-CIFAR10-1024-1-100 and DenseNet-CIFAR10-1024-1-75  respectively. % \MB{Well, the fitting is not that perfect}
  %\TODO{fill and new PLOTS}} \AC{font non in accordo con i plot di Xception (prima riga). La legenda e' troppo piccola. Va tolto il titolo dal plot.}}
  }
 %The watermark settings in the right plots are: $l = 256$, $S = 18$, $C = 1$, 2-layers, for the GAN detector and  $l = 1024$, $S = 25$, $C = 1$, and 4-layers, for the GTSRB classifier.
 %
 %
  %\MB{Insert a subcaption or a legend in the plots to indicate what they are referring to.}
%\MB{ Nella prima figura il fit non e' poi un granche'. Mi piacerebbe capire perche' fissando alcuni coefficienti i coefficienti non marchiati diventano piu' laplaciani }
	\label{fig.Laplacian_distrib}
%\vspace{-1cm}
\end{figure}

%\enlargethispage{\baselineskip}

%\begin{figure}[]
%	\centering
%	%\includegraphics[scale=.33]{XCeptionNet-Watermarked-K-1-b-256-S-6-L-1-block14_sepconv2-Weights-Watermarked-Only.pdf}
%    \includegraphics[scale=.32]{XCeptionNet-Watermarked-B-256-C-1-S-18-L-2-block14_sepconv2-Weights-Separated.pdf}
%    %\includegraphics[scale=.32]{Densenet-Watermarked-K-15-b-1024-S-15-L-4-conv4_block20-Weights-Watermarked-Only.pdf}
%    \includegraphics[scale=.32]{Densenet-GTSRB-Watermarked-B-1024-C-1-S-25-L-4-conv4_block20-Weights-Separated.pdf}
%    %	\includegraphics[scale=.32]{Densenet-Watermarked-K-15-b-2048-S-25-L-2-conv5_block16-Weights-Watermarked-Only.pdf}
%    \includegraphics[scale=.32]{Densenet-GTSRB-Watermarked-B-2048-C-1-S-25-L-4-conv5_block16-Weights-Separated.pdf}
%    %
%	\caption{Distribution of  the weights  in the embedding layer. From top left to bottom right: XceptionNet-based GAN detection model watermarked with $l = 256$, $S = 18$, $C = 1$, 2-layers (block14$\_$sepconv2 is visualized); DenseNet-based GTSRB classification  model watermarked with  $l = 1024$, $S = 25$, $C = 1$, and 4-layers (conv4$\_$block20 is visualized); DenseNet-based GTSRB classification  model watermarked with  $l = 2048$, $S = 25$, $C= 1$, 4-layers (conv5$\_$block16 is visualized). }
%	\label{fig.indistinguish}
%\end{figure}

\begin{figure}[]
	\centering
    \includegraphics[scale=.32]{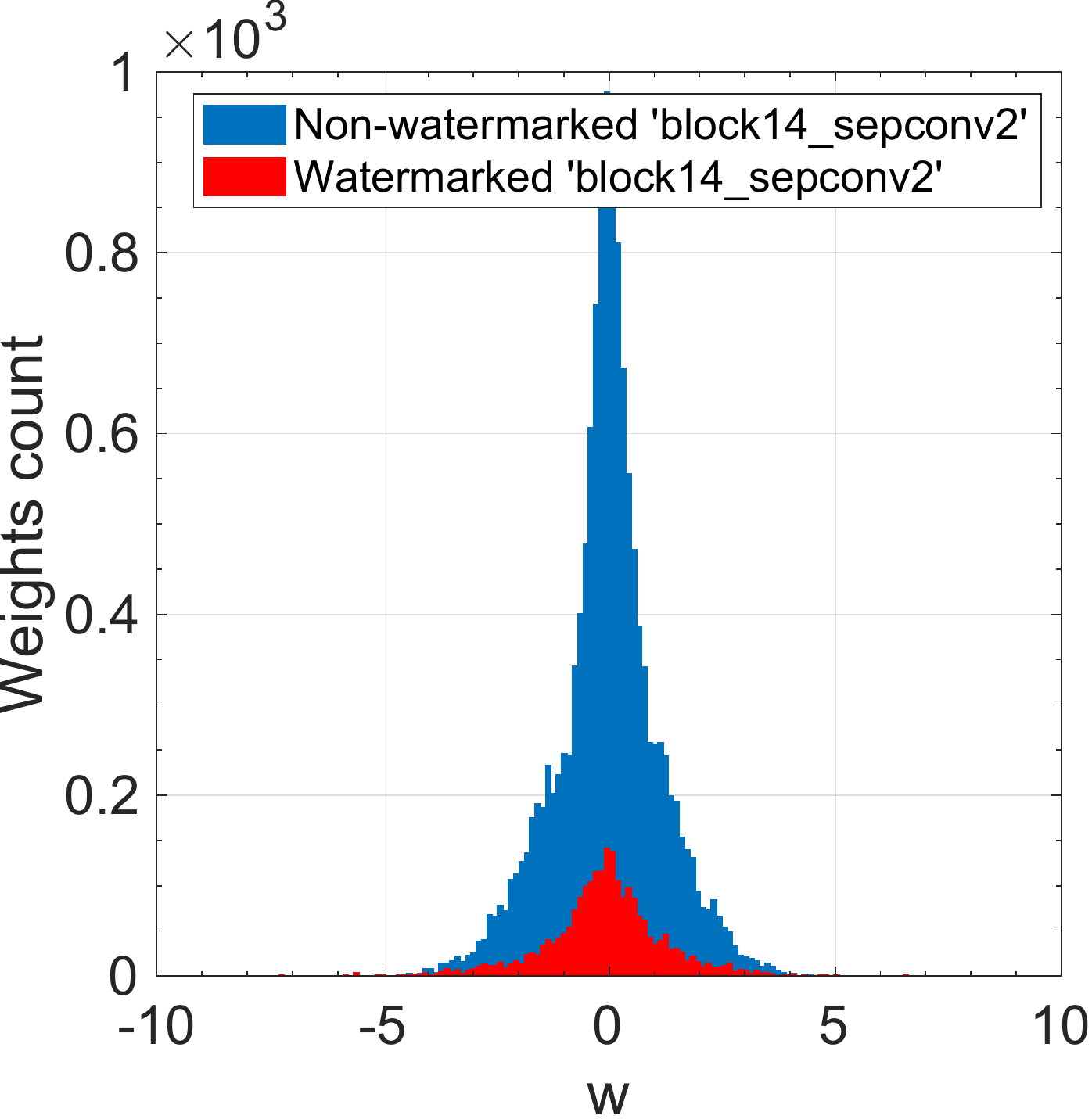}
    \includegraphics[width=0.48\columnwidth]{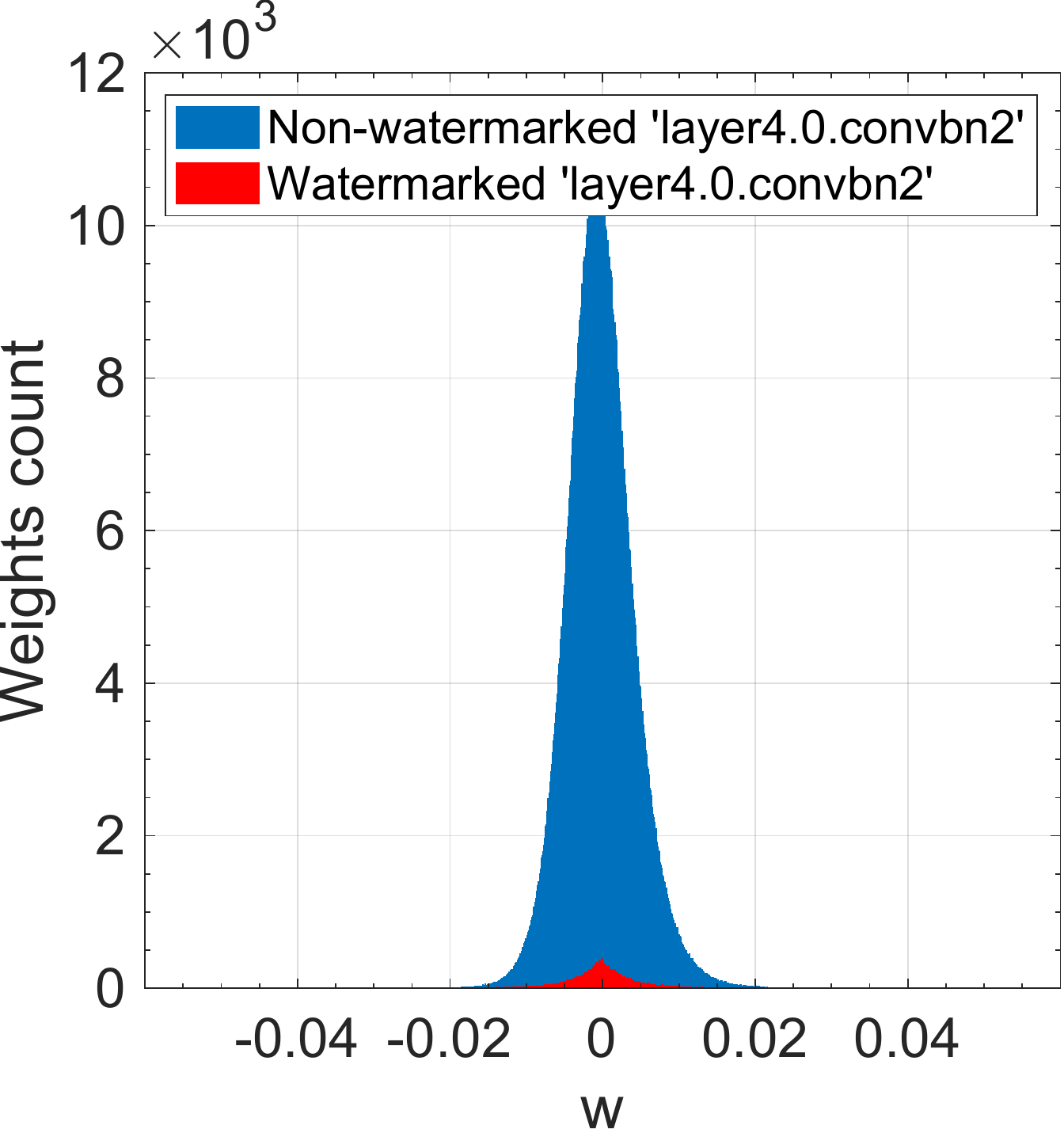}
    \includegraphics[width=0.47\columnwidth]{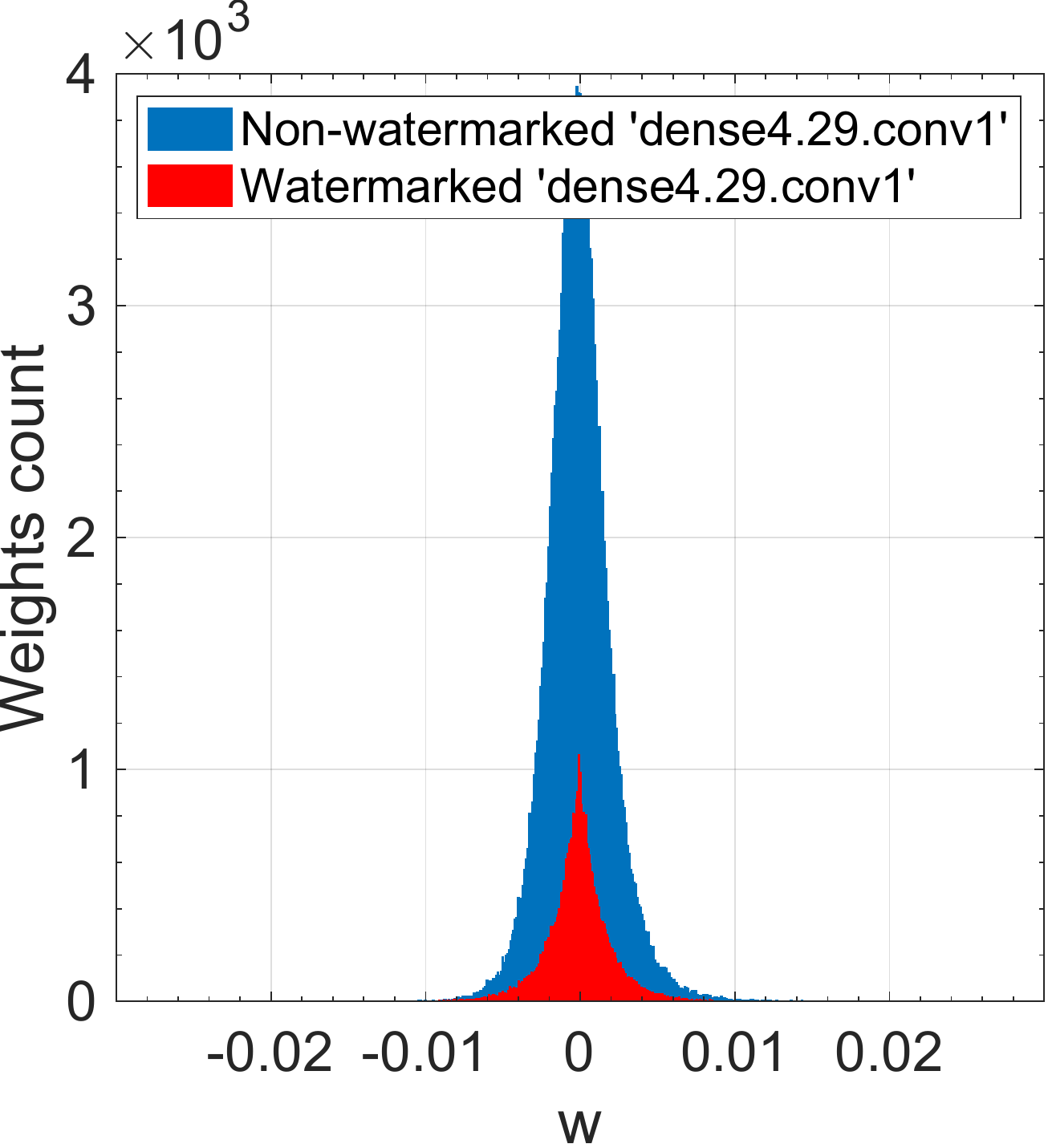}
	\caption{Distribution of  the weights  in the embedding layer. From top left to bottom right:
%XceptionNet-based GAN detection model watermarked with $l = 256$, $S = 18$, $C = 1$, 2-layers (block14$\_$sepconv2 is visualized); DenseNet-based GTSRB classification  model watermarked with  $l = 1024$, $S = 25$, $C = 1$, and 4-layers (conv4$\_$block20 is visualized); DenseNet-based GTSRB classification  model watermarked with  $l = 2048$, $S = 25$, $C= 1$, 4-layers (conv5$\_$block16 is visualized).
XceptionNet-GAN-256-1-18 (block14$\_$sepconv2 is visualized); ResNet-CIFAR10-1024-1-100 (layer4.0.convbn$\_$2  is visualized); DenseNet-CIFAR10-1024-1-75 (dense4.29.conv1 is visualized).}
	\label{fig.indistinguish}
\end{figure}

\subsection{Robustness evaluation}
\label{sec.exp-robust}

In this section, we report the results assessing the robustness of the proposed watermarking
algorithm against parameters pruning, weights quantization, transfer learning and fine tuning.
The analysis of the robustness against retraining  -  in particular transfer learning - is the most interesting one, being this the most challenging requirement,
hence  most of the results we are reporting refers to this case.

In synthesis, the results we got show that, when  the spreading factor $S$ is large enough,  the watermark embedded with $C = 1$ (perfect theoretical indistinguishability) is extremely robust against all kinds of network modification and re-use.
%thus confirming the good performance of the proposed method based on spread-spectrum coding.
%% without affecting the  robustness.

\subsubsection{Model compression}

%The easiest way to compress a DNN model, is parameters pruning, that is cropping a fraction $p$ of the
%weights of the convolutional layers
%% given that the watermrk is embedded in this kind of layers
%by setting them to zero.
%As customary done, we cut off the weights based on their absolute values, starting from the smallest ones.
%
%Watermark extraction is carried out as usual.
%The performance are assessed for several pruning fractions $p$.
%
Fig. \ref{fig:pruning}
%\BT{@Andrea: plot. Nella legenda delle due curve ci va scritto Acc(\%) e BER(\%). Nell'asse x ci va scritto $p$. Usa un tratto diverso per le due linee (esempio dotted o continua per la Acc e tratteggiata per il BER) oltre che un colore diverso se vuoi.}
reports the results of robustness against parameter pruning that we got for the XceptionNet-based GAN detector (top) and the ResNet-based and DenseNet-based CIFAR-10 classifiers (bottom) in the same settings considered above. A very similar behaviour can be observed in the other settings. For each model, the TER and BER for various values of $p$ are reported.
%For sake of visualization, the Acc (=1-TER) is reported instead of the TER.
%
%
%{\em The parameter setting considered  for the  watermarked GAN detection network is $l= 256$, $S=18$, $C = 1$, 2-layers. The GTSRB classifier is watermarked with $l = 1024$, $S = 25$, $C = 1$, 4-layers, and the CIFAR-10 classifier with $l = 1024$, $S = 25$, $C = 1$, 2-layers.
% in the left plot and with $l = 2048$, $S = 25$, $C = 15$ 2-layers  in the right plot.
%For the case of CIFAR-10, the TER-Top3 values are reported (the behavior of the TER-Top1 being similar as $p$ increases)\footnote{TER-Top1 is conventional TER, while TER-Top3 refers to the TER when we measure the probability that the correct class is included among the 3 classes with the highest predicted values.}.}
%}
We see that,  pruning has almost no impact on the TER  when $p$ is lower than $0.4$/$0.6$. The BER remains zero until $p= 0.6$ in all the cases and starts increasing  when the network is no longer usable (TER $\ge 50\%$) and in any case the unobtrusiveness requirements is compromised, thus confirming the robustness of the watermark against model pruning.
%\TODO{Plot per DenseNet con S = 100}.

Robustness against weights quantization is also achieved. In particular, we verified that
% conversion to int32, int16, int8 and int4  does not affect the BER in all the settings considered for the various tasks,
in all the settings considered for the various tasks, even in the case $n_b = 4$, conversion to integers does not affect the BER, while the TER increases to a value above or around 50\% with $n_b = 8$, for the XceptionNet-based GAN detectors, and with $n_b = 4$, for the ResNet-based and DenseNet-based CIFAR-10 classifiers.
%that is when $n_b = 8$ or $4$ bits are considered for the quantization.
Robustness against quantization is a consequence of the fact that the sign of the weights is preserved by the quantization operation, hence the extraction of the watermark is not affected by the quantization operation.

%The influence of pruning on the TER in the various cases is shown in Table \ref{tab:pruning}\BT{@Andrea: simile alla Tabella 13 del paper di Yue. COn le seguenti colonne 'Network e Task', 'Embedded layer', 'payload', 'p\%', TER, e infine BER. Se i risultati dove si va a prunare i modelli dopo il trasfer learning sono buoni si potrebbero includere anche quelli inserendo una colonna che specifica se il modello è originale o TL (oppure un'altra tabella)},
%where we report the results in the case of ??????.......
%with embedding layer ??????? and  payload $l = ????$
%bits, for different values of the pruning percentage. As shown
%in the table, when the pruning percentage is equal to ??\%,
%the TER of the two models is already very large (equal to ???\%), however, we got BER=0. Moreover, we also see that our watermarking algorithm
%can resist even larger pruning percentages.

%
%
%\begin{table}[t]
%    %\scriptsize
%	%\renewcommand\arraystretch{1.3}
%	%\setlength{\tabcolsep}{1mm}
%	\centering
%	\caption{Robustness performance against parameter pruning. \BT{TO FILL}}
%    \label{tab:pruning}
%	\begin{tabular}{c|c|c|c|c}
%		\hline
%& & & & \\
%		\hline
%    \end{tabular}
%\end{table}

\begin{figure}[]
	\centering
     \includegraphics[scale=.23]{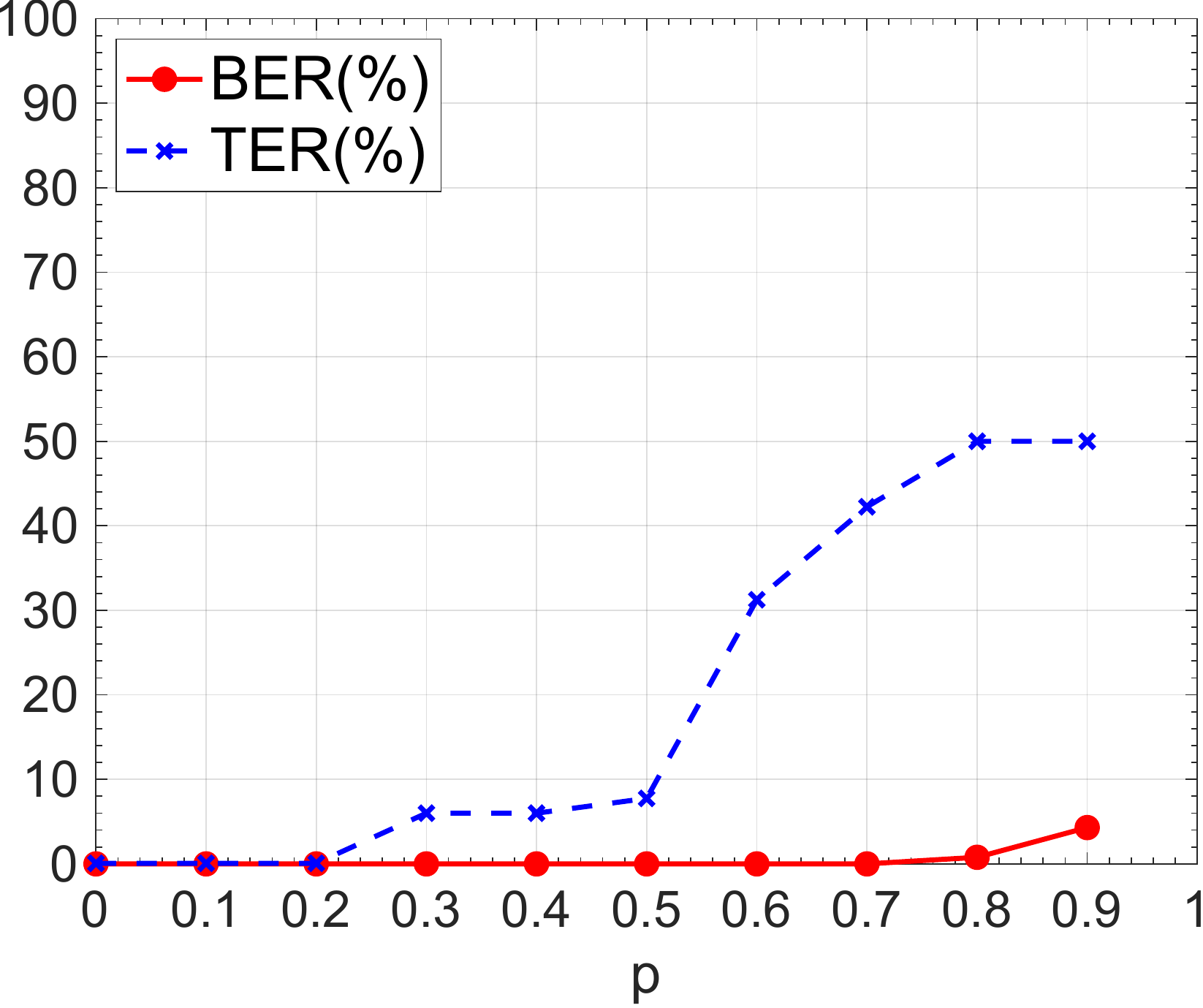}
          \includegraphics[width = 0.435\columnwidth]{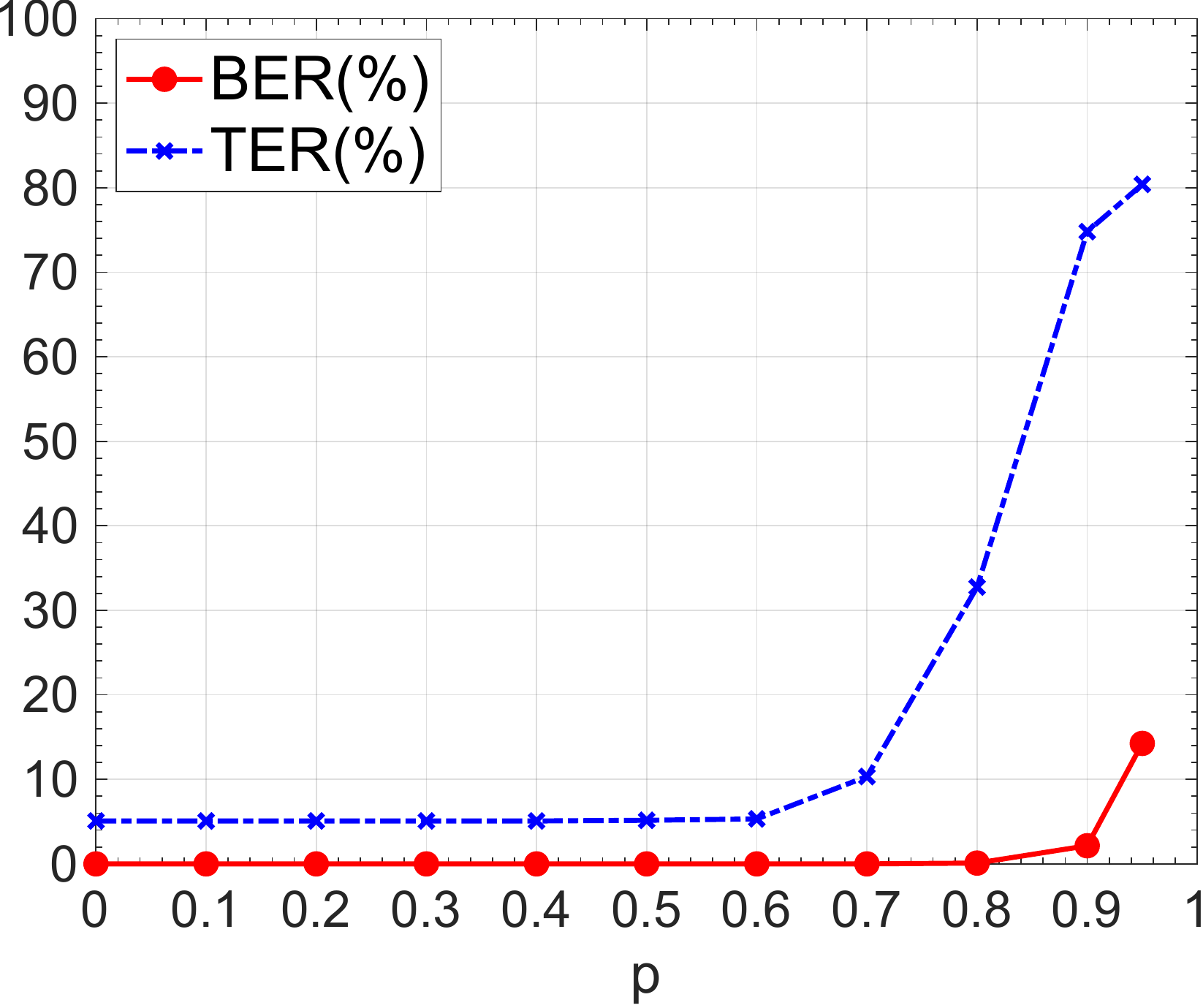}\\
                    \includegraphics[width = 0.435\columnwidth]{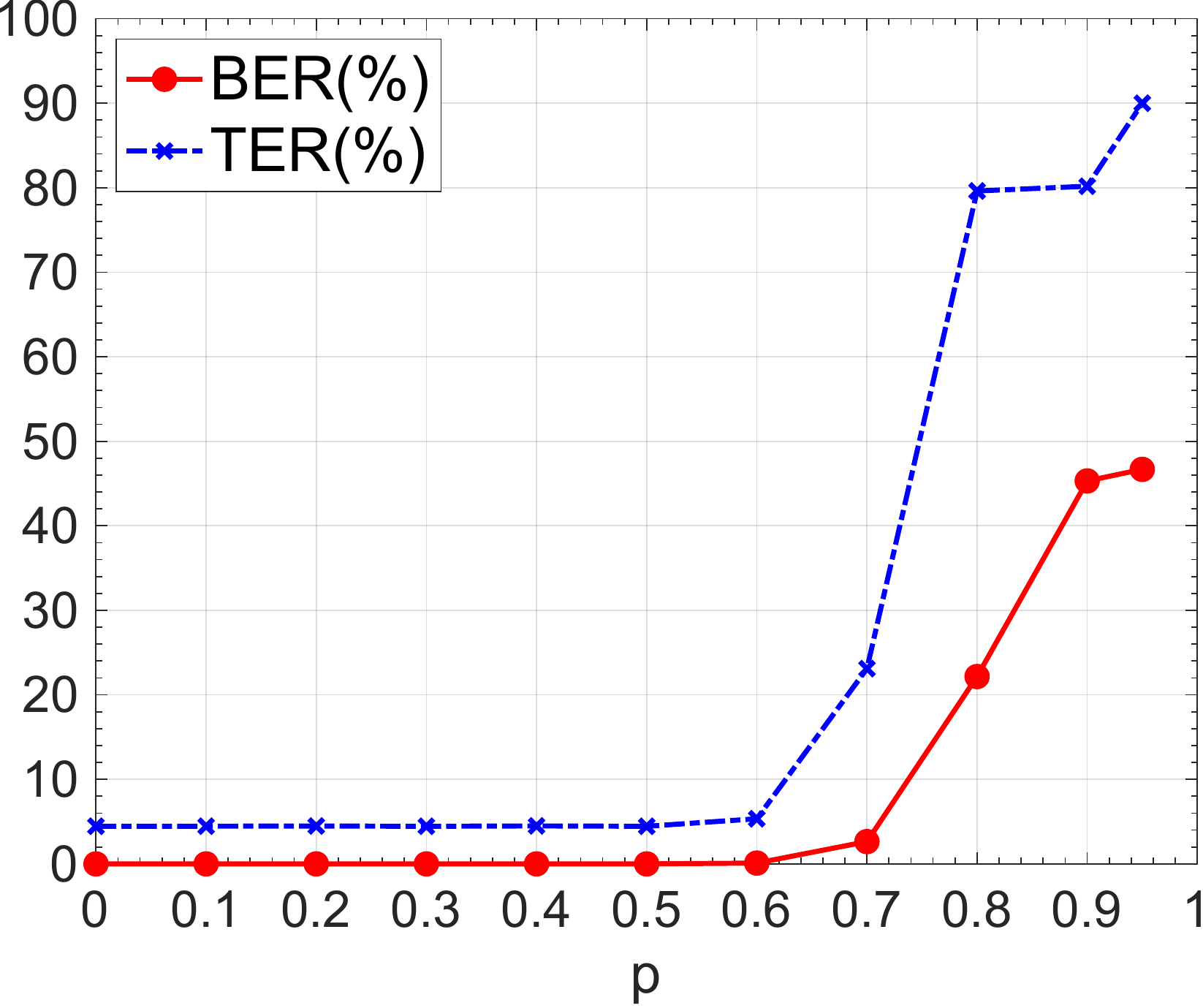}
	\caption{Robustness against parameter pruning as a function of the pruning fraction $p$. From top left to bottom right: XceptionNet-GAN-256-1-18; ResNet-CIFAR10-1024-1-100; DenseNet-CIFAR10-1024-1-75.
% \TODO{new Plots} \AC{Manca uniformita'. Nel plot per Dense lo screenshot riporta in grigio le funzionalita' in alto}
%watermarked GAN detector  with $l= 256$, $S=18$, $C = 1$, 2-layers embedding;   watermarked  GTSRB classifier with $l = 1024$, $S = 25$, $C = 1$, 4-layers  embedding; watermarked  CIFAR-10 classifier  with  $l = 1024$, $S = 25$, $C = 1$, 2-layers embedding.
}
	\label{fig:pruning}
\end{figure}

\subsubsection{Retraining}%{Robustness against transfer learning}

%{\em To measure the robustness of the watermarked models
%against transfer-learning, we used them as pre-trained solution and performed a new training on a different task and dataset
%using the standard loss, i.e, the cross-entropy loss $\mathcal{L}$, without frozing any parameters (the watermarked weights being updated as well).}

Table \ref{tab:trasferLearningXception} shows the results wegot by retraining the XceptionNet
 GAN detection  watermarked  models (see Section \ref{sec.method-robustness} for the setting of these experiments). %As detailed in Section \ref{sec.method-robustness}, fine-tuning is performed on a subset of 70\% of the original dataset, while retraining for transfer learning is performed considering the  cat\&horse LSUN classification task.
The results are reported for the payloads, strengths and spreading factors.
%The Acc is given by  100 - TER(\%).
%\BT{@Andrea: quale e' l'accuratezza del task catti vs cavalli del modello standard addestrato da scratch (non fine tuned)?}
%
\begin{table}[t]
    %\scriptsize
	%\renewcommand\arraystretch{1.3}
	\setlength{\tabcolsep}{1.2mm}
	\centering
	\caption{Robustness  of  watermarked models for GAN  detection, against fine tuning (on 70\% of the dataset) and transfer learning  to cat\&horse LSUN classification ( baseline TER  3.3\%).
%Both TER and BER (\%) refer to the cat\&horse classifier after transfer learning.
%The baseline TER for  cat\&horse classification is 3.3\%.
L = Layers.  FT = Fine Tuning.  TL = Transfer Learning. The BER values below 0.1\% are highlighted in bold (the number of bit errors is report among brackets). }
    \label{tab:trasferLearningXception}
	\begin{tabular}{c|c|c|c||c|c| c| c}
		\hline
\multirow{2}{*}{$l$} &  \multirow{2}{*}{$S$} & \multirow{2}{*}{$C$} &   \multirow{2}{*}{No.L}  & \multicolumn{2}{c|}{After FT}    & \multicolumn{2}{c}{After TL}   \\ \cline{5-8}
& & & & TER  &  BER    & TER &  BER    \\ \hline
\multirow{6}{*}{256}
 & 1  & 1& 1 & 0.1  &  6.25 (16)      &   7.8  & 16.79   (43)  \\ \cline{2-8}
 & 3  & 1& 1 & 0.5 & {\bf 0} & 7.5 & 4.69 (12)   \\ \cline{2-8}
& 12  & 1& 1 & 1.4& {\bf 0}  &  5.1 & 1.17 (3) \\ \cline{2-8}
& 12  & 1.5& 1 & 0.6  & {\bf 0}  & 8.3 & {\bf 0}    \\ \cline{2-8}
& 18  & 1& 1 & 0.1 & {\bf 0}   & 7.5 &  0.78 (2)  \\ \cline{2-8}
 & 18  & 1  & 2 & 0.1 & {\bf 0}  & 7.5& {\bf 0}   \\  \hline
 \multirow{4}{*}{1024}
  & 6 & 1 & 2 & 0.6  & {\bf 0}    & 7.7 & 2.34 (24) \\ \cline{2-8}
  & 12 & 1 & 2  & 0.5 &  {\bf 0}  & 8.3 &   0.78 (8)  \\ \cline{2-8}
   & 12 & 1.5 & 2  & 0.4 &  {\bf 0}  & 7.8 &   0.39 (4)  \\ \cline{2-8}
 & 18 & 1 & 4 & 0.4  & {\bf 0}   & 6.0 & {\bf 0.09}  (1)  \\ \hline
 % &  18 & 1.5 & 4 & 0.6  & {\bf 0}   & 6.68 & {\bf 0}  \\ \hline
 2048 &  6 & 1 & 4 &  0.2 & {\bf 0}  &  8.8& 1.02 (21)  \\ \hline
 \end{tabular}
\end{table}
We observe that the BER after fine-tuning   is always 0 (except when no spreading is applied, $S=1$), hence robustness against fine-tuning is achieved in all the settings.
Regarding robustness against transfer-learning to cat\&horse LSUN, we observe that, when $l = 256$,  BER = 0 is obtained in the setting $S = 12$, $C=1.5$ and $S = 18, C=1$, while in the setting  $S = 12$, $C=1$ we got BER = 1.17\%. This shows that increasing the spreading factor has a similar effect of increasing  the watermark strength $C$. The same behavior is observed with CIFAR-10, as discussed below.
We notice that the TER after transfer-learning is some points larger than the baseline for the  cat\&horse LSUN classification task. This might be due to the fact that transferring the knowledge from GAN detection to  cat\&horse classification is not easy, being them two very different tasks.

%Table \ref{tab:trasferLearningDense}  reports the results of transfer-learning in the  second scenario, where the GTSRB classification model based on %DenseNet is retrained on CIFAR10.
%The results are reported for the various cases of payload, strength and spreading factor.

%
\begin{table}[t]
    %\scriptsize
	%\renewcommand\arraystretch{1.3}
	\setlength{\tabcolsep}{1.2mm}
	\centering
	\caption{Robustness of the ResNet-based CIFAR-10 watermarked model against fine tuning (on 70\% of the dataset) and  transfer learning to  CIFAR-100.
%The TER and BER (\%) after retraining are reported.
The baseline TER for CIFAR-100 is  23.8 \% (Top1) and 6.8\% (Top5).   L = Layers. FT = Fine Tuning. TL = Transfer Learning. The BER values below 0.1\% are highlighted in bold (the number of bit errors is report among brackets).}
    \label{tab:trasferLearningCIFAR10ResNet}
    	\begin{tabular}{c|c|c|c||c|c | c | c | c}
		\hline
\multirow{2}{*}{$l$} &  \multirow{2}{*}{$S$} & \multirow{2}{*}{$C$} &\multirow{2}{*}{No.L} & \multicolumn{2}{c|}{After FT}&  \multicolumn{3}{c}{After TL}\\ \cline{5-9}
& &  &  & TER&  BER & TER-T1 & TER-T5 &   BER\\ \hline
\multirow{4}{*}{256} &  3 & 1  & 2 & 5.4 &  \textbf{0} & 24.8 &  65.8 &  24.21 (62)  \\ \cline{2-9}
 & 25  & 1 &  2  & 5.1 &  \textbf{0} & 25.0 &7.0  & 2.34 (6)\\ \cline{2-9}
  & 25  & 1.5 &  2  & 5.0 &  \textbf{0} & 26.9 & 7.4  & 0.39 (1)\\ \cline{2-9}
 & 50 &  1 &   2 & 5.1 &   \textbf{0} & 24.8 & 6.8 &  \textbf{0} \\ \hline
\multirow{5}{*}{1024}  &  25 & 1 & 2   & 5.2 &  \textbf{0} & 25.0 & 6.9 &1.07 (11) \\ \cline{2-9}
 &  25 & 1.5 & 2   & 5.3 &  \textbf{0} & 27.2 & 7.9 & \textbf{0} \\ \cline{2-9}
&  50 & 1 & 2   &  5.1 &  \textbf{0}  & 24.2 & 7.2 &  \textbf{0.09} (1)\\ \cline{2-9}
&  75 & 1 & 2 & 5.5 &  \textbf{0} & 24.6 &  7.1 &  \textbf{0}\\ \cline{2-9}
&  100 & 1 & 2 &  4.9 &  \textbf{0} & 24.7 & 7.1 & \textbf{0}  \\   \hline
\multirow{2}{*}{2048}  &  75 & 1 & 2  & 5.0 &  \textbf{0} & 24.7 & 6.9 & 0.10 (2) \\ \cline{2-9}
&  100 & 1 & 2  & 5.1 &  \textbf{0} &24.8 & 7.1 & \textbf{0} \\ \hline
\multirow{3}{*}{4096}  &  125 & 1 & 4  & 5.1 &  \textbf{0} & 24.6 & 7.2 &  0.14 (6)\\ \cline{2-9}
&  125 & 1.5 & 4  & 5.3 &  \textbf{0}& 24.3 &  6.4 & \textbf{0} \\ \cline{2-9}
&  150 & 1 & 4  & 5.3 &  \textbf{0} & 24.4 &  7.3 & \textbf{0} \\ \hline
\multirow{2}{*}{8192}  &  150 & 1 & 4  & 5.3 &  \textbf{0}& 24.5 & 6.2 &  0.10 (8)\\ \cline{2-9}
&  200 & 1 & 4  & 5.2 &  \textbf{0}  & 24.6 & 6.9 &  \textbf{0} \\ \hline
\multirow{2}{*}{16384}  &  250  & 1 & 8  & 5.1 &  \textbf{0} & 24.8 & 7.1 & 1.04 (171)\\ \cline{2-9}
&  400 & 1 & 8  & 5.2 &  \textbf{0} & 29.4 & 8.2 & \textbf{0} \\ \hline
    \end{tabular}
\end{table}

\begin{table}[t]
    %\scriptsize
	%\renewcommand\arraystretch{0.7}
	\setlength{\tabcolsep}{1.3mm}
	\centering
	\caption{Performance  and robustness of DenseNet-based CIFAR-10 watermarked model against  fine-tuning (on 70\% of the dataset) and transfer learning to GTSRB.
The baseline TER for CIFAR-10 is 4.7\%,
%The baseline TER-Top-3 for CIFAR-10 is  5.62\%.
%The baseline TER for CIFAR-10 is  19.67\% (Top-1) and  5.62\% (Top-3).
while for  GTSRB is  3.7\%. L = Layers. FT = Fine Tuning. TL = Transfer Learning.}
%The BER values below 0.1\% are highlighted in bold. }
    \label{tab:trasferLearningCIFAR10DenseNet}
	\begin{tabular}{c|c|c|c||c || c| c |c |c }
		\hline
\multirow{2}{*}{$l$} & \multirow{2}{*}{$S$} & \multirow{2}{*}{$C$} &\multirow{2}{*}{No.L} &  \multirow{2}{*}{TER} & \multicolumn{2}{c|}{After FT} &  \multicolumn{2}{c}{After TL}  \\ \cline{6-9}
& & & &   & TER &  BER & TER &  BER  \\ \hline
{256} & 50 &1  & 2 &  4.4  & 4.3 & \textbf{0}  & 4.2  & \textbf{0}\\  \hline
{1024} & 75 &1  & 2 &  4.8  & 4.8 & \textbf{0}  & 3.4  & \textbf{0}\\  \hline
 2048  & 100 &  1& 4     & 4.5 & 4.6 &  \textbf{0} & 4.1  & \textbf{0}\\  \hline
 %& 30 & 1 &   2  & ?  & 2.51 & 2.24& 0 & 7.28  &  0.39 \\ \cline{2-10}
   4096  & 150 & 1 & 6 &  5.0 & 4.5 & \textbf{0} & 3.3 & \textbf{0}\\   \hline
    8192  &  200 & 1 & 12 &  4.3 & 4.4 & \textbf{0} & 3.2 & \textbf{0}\\   \hline
    \end{tabular}
\end{table}

The results obtained by retraining the CIFAR-10 classifiers are reported in Table  \ref{tab:trasferLearningCIFAR10ResNet} and \ref{tab:trasferLearningCIFAR10DenseNet}.
Specifically, in Table  \ref{tab:trasferLearningCIFAR10ResNet} we report the robustness performance of the ResNet-based CIFAR-10 watermarked models against both fine-tuning and transfer-learning to CIFAR-100. The TER-Top1 and TER-Top5  (indicated as TER-T1 and TER-T5 in the tables) obtained after transfer learning are aligned with those achieved by the baseline model trained on the CIFAR-100 task from scratch.
Table \ref{tab:trasferLearningCIFAR10DenseNet} instead reports the  performance of the DenseNet CIFAR-10 watermarked models.
% for some settings/selected cases, where zero BER is achieved after retraining.
Fine-tuning  and transfer-learning to GTSRB are considered for the retraining experiments.

By looking at the results in Table \ref{tab:trasferLearningCIFAR10ResNet}, we see that, even in this case, robustness against fine-tuning is always achieved in all the settings with BER = 0. The use of a large spreading factor is fundamental to improve the robustness in the more challenging scenario of transfer learning.
A very large $S$ has to be considered to achieve robustness in this case.
Increasing the strength $C$ would obviously also help to improve the robustness, yet at the price of a reduced indistinguishability of the watermark, when $C$ becomes large.
The results reported in the table  show that, as long as $S$ is large enough, robustness can always be achieved with BER equal to 0 in all the settings.
%
%%This explains why a larger $S$  has to be used to achieve the same level of performance.
%Enlarging the strength $C$ would obviously also help, yet at the price of a reduced indistinguishability of the watermark, when $C$ is large, e.g. $C > 2/3$.

%\BTnew{We also observe that, when the payload increases, a larger spreading is necessary to achieve  BER equal to 0 after transfer learning. A possible explanation is the following: when a larger percentage of weights is dedicated to bringing the watermark information,
%when  a larger percentage of weights is  responsible for the watermark information in the watermarked model,
%during the retraining on the new task (with all weights  'updatable') the weights are modified by larger extent by the optimization procedure.
%%to get a good solution on the new task
%}
%\MB{This explanation does not explain much, should we remove it?}

Similar results are obtained for the DenseNet-based CIFAR-10 watermarked models. The results in Table \ref{tab:trasferLearningCIFAR10DenseNet}
are  reported for some selected settings, where BER=0 is achieved after retraining.
Since the  layers that we selected for the embedding in the DenseNet case have a lower number of weights, for  high payloads, a larger number of layers have to be considered to avoid layer saturation,
% of the embedded layers,
thus compromising the unobtrusiveness.
%For instance, for $l = 8192$ (with $C = 1$, $S = 200$),
%%in order to avoid unobtrusiveness,
%we had to consider 12 layers for the embedding, with a average percentage of watermarked weights   $\bar{p}^m_k$ of about  $73$\%.
For instance, for $l = 4096$ (with $C = 1$, $S = 150$), we had to consider 6 layers for the embedding, with a average percentage of watermarked weights   $\bar{p}^m_k$ of about  $52$\%.

Finally,  we observe that, according to our experiments, as long as the percentage of embedding weights in each layer is not too large,
embedding the watermark in more layers with a lower percentage of hosting weights does not give any significant advantage with respect to embedding the watermark in less layers with a larger percentage of hosting weights.

\subsection{Comparison with the state-of-the-art}
\label{subsec.SOA}

In this section, we compare our method with the white-box multi-bit DNN watermarking methods by Uchida et al. \cite{uchida2017embedding}, Li et al. \cite{yue21jins} and the one by Liu et al. \cite{liu2021watermarking}.
In all the cases, the methods have been implemented by using the code released by the authors.
%
%\BTcomm{Chiaramente andare a confrontarci con Uchida e Yue lascia un po' il tempo che trova e serve piu' che altro per fare numero.....ad andare a confrontarci con i metodi basati su passports non ci passa piu'. }
%We applied them to watermark the DenseNet-40 architecture.

For \cite{uchida2017embedding} and \cite{yue21jins}, by following the setting in \cite{yue21jins}, training is carried out by  embedding the watermark in the first convolutional layer of the second dense block of a DenseNet169 architecture.
%As with our method, fine-tuning is  performed by training the network for additional 10 epochs on 70\% of the same dataset.
%The transfer-learning scenario is also considered, re-training the watermarked model on a different tasks and dataset.
%
% (for the number of epochs necessary to get a good accuracy, that is around 20)
%
For  \cite{liu2021watermarking}, following the paper and the implementation in the authors' repository, the watermark is embedded in the first convolutional layer  of a ResNet18 architecture.
%\BTcomm{Il fatto che per Yue e Uchida si marchi DenseNet e per il  nuovo metodo ResNet  rende i metodi non del tutto direttamente confrontabili.  D'altra parte: 1) il codice che ci ha dato Yue cosi' come e' funziona solo su alcune reti con le quali ha lavorato, e non su altre con diverse tipologie di layer. Bisognerebbe perderci un pochino di tempo 2) il codice del metodo greedy di Liu e' scritto e ottimizzato per lavorare su ResNet18 e testato da loro su questa rete. In questo caso il lavoro di estensione sarebbe piu' semplice perche' il codice e' scritto bene e ben organizzato, ma richiede ulteriore lavoro ed e' un salto nel buio. Ad un caso si puo' togliere il paragrafo sopra e non specificare niente (l'unica nota e' che il codice di Yue non e' public released). Gran parte dei paper omettono questi dettagli. Ad ogni modo, nei confronti di Liu che e' il vero benchmark, il confronto e' fair perche' si considera pari pari la stessa architettura. Il confronto con Uchida e Yue lascia un po' il tempo che trova, visto anche il gap di prestazioni che c'e', e servono piu' che altro per fare numero  (e comunque i risultati che abbiamo noi con questa stessa rete sono migliori - vedi tabella sopra). Ad andare a confrontarci con i metodi basati su passports, che funzionano meglio almeno quanto a robustness, non ci passava piu'. }
%\MB{Va bene}

Table \ref{tab:comparison} reports the comparison of the results obtained by these methods with those of our scheme, when the networks are trained on CIFAR-10, for various values of the payload, namely $l = 128$, $256$ and $1024$.
%Results include TER, BER of the watermarked models and robustness against fine-tuning  and  transfer-learning.
For a fair comparison, as for our method, fine-tuning of the state-of-the-art methods is carried out by training the watermarked networks for 10 additional epochs on 70\% of the original dataset with the same learning rate, and transfer learning is performed for the same number of epochs, again with the same learning rate.

With regard to our method, the performance of the ResNet-based models are reported (those achieved on DenseNet being very similar), for the following  settings  $l$-$C$-$S$: 128-1-50, 256-1-50 and 1024-1-75.
%In all the cases, the watermark is embedded with a 2-layer embedding ?????\TODO{Check}.
%For CIFAR-100, the TER-Top5 is reported in the tables.
%%%\AC{ Andrea: for CIFAR cases (training, finetuning, transfer learning) I reported TER(\%)-Top3, while TER(\%)-Top1 is commented out.}
%
%%With regard to our method, both for  GTSRB and CIFAR-10,  the performance are reported for the 2 layers embedding setting except for the case with payload 128, in which case the watermark is embedded in a single-layer. For the case $l = 256$ and $l = 1024$,  results are reported for one among the best watermark settings with $C=1$  in Table \ref{tab:trasferLearningDense} and \ref{tab:trasferLearningDenseCIFAR}.
%
By looking at the table we observe that, while the unobtrusiveness is good for all the methods\footnote{While for the proposed method the BER of the watermarked model is 0 by construction,  with the methods that embed the watermark by using a properly modified loss function, the BER  after training can be different than 0.} the robustness of the methods in \cite{uchida2017embedding} and \cite{yue21jins} against retraining is poor. Some robustness against fine-tuning can be achieved by the method in \cite{uchida2017embedding} when the payload is low (BER= 0 with $l = 128$ and 5\% when $l = 256$), while for the method in   \cite{yue21jins}  the BER is above 20\% in all the settings. In both cases,  the TER obtained after fine-tuning is 2/3\% worse.
however, both \cite{yue21jins} and \cite{uchida2017embedding} fail against transfer learning, also in the easier case of transfer to the GTSRB (only 20 retraining epochs, in contrast to the 200 epochs of the CIFAR-100 case).
This confirms the superiority of the proposed scheme, that can achieve strong robustness against both fine-tuning and retraining for all the payloads considered.

The robustness of the method by Liu et al. \cite{liu2021watermarking} is comparable to those achieved by our method. However, our approach overcomes \cite{liu2021watermarking} in terms of invisibility of the watermark and payload. In fact,  with the
approach in  \cite{liu2021watermarking},  robustness is indirectly achieved
by  increasing the strength of the weights  responsible of carrying out the watermark information in  the first convolutional layer (the weights of this layer, in fact, tend to have a larger strengths with respect to the weights in the other layers) by large extent, so that they can survive to several retraining iterations.
%
%Therefore, the watermark is always embedded in the first layer.
However, doing so, the secrecy requirement is not satisfied.
Figure \ref{fig.greedy} shows the distribution of the weights in the embedded layer (first layer) for the non-watermarked (baseline) model and the  model watermarked
with the algorithm in \cite{liu2021watermarking} with payload $l=$256.
The histogram in the watermarked case shows visible peaks for large magnitudes, that  revel the presence of the watermark inside the model and the position of the  weights carrying the watermark information.

\begin{figure}[]
	\centering
\includegraphics[width=0.462\columnwidth]{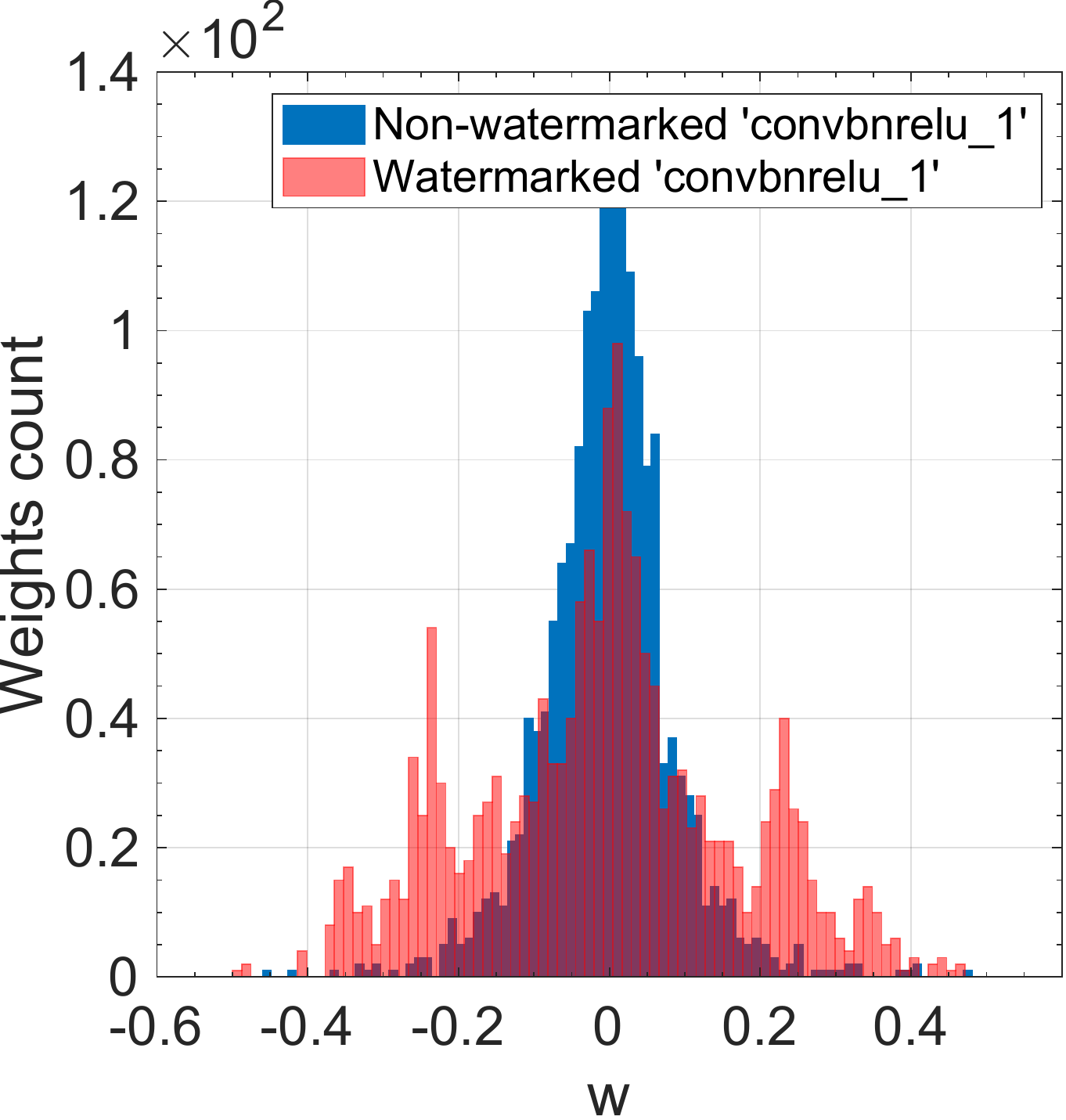}
\includegraphics[width=0.475\columnwidth]{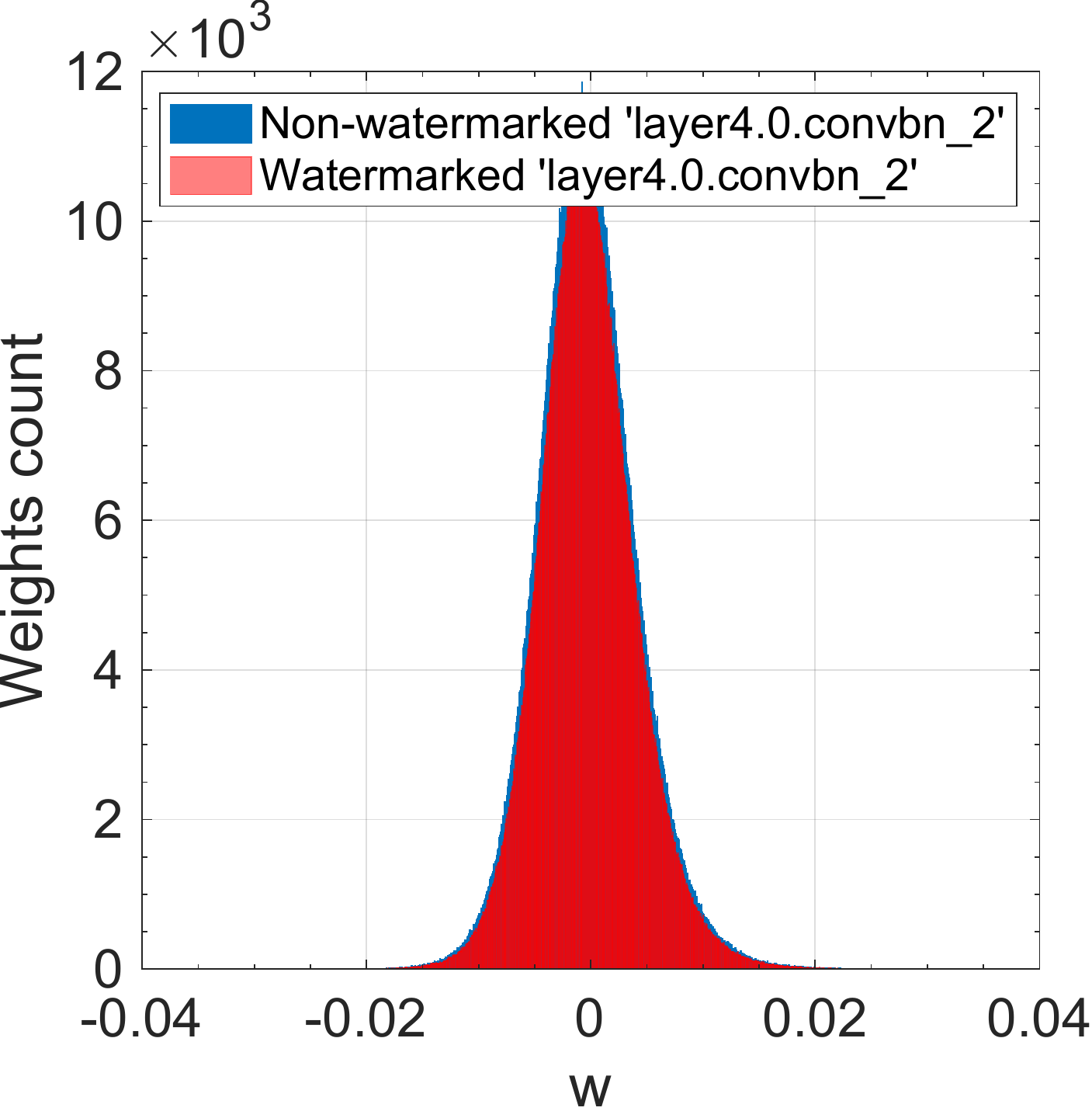}
	\caption{Distribution of  the weights  in the embedding  layer for the non-watermarked and  watermarked model for the method in \cite{liu2021watermarking} and for our method in the setting ResNet18-256-50.
}
	\label{fig.greedy}
\end{figure}

\begin{figure}[]
	\centering
    \includegraphics[width = 0.48\columnwidth]{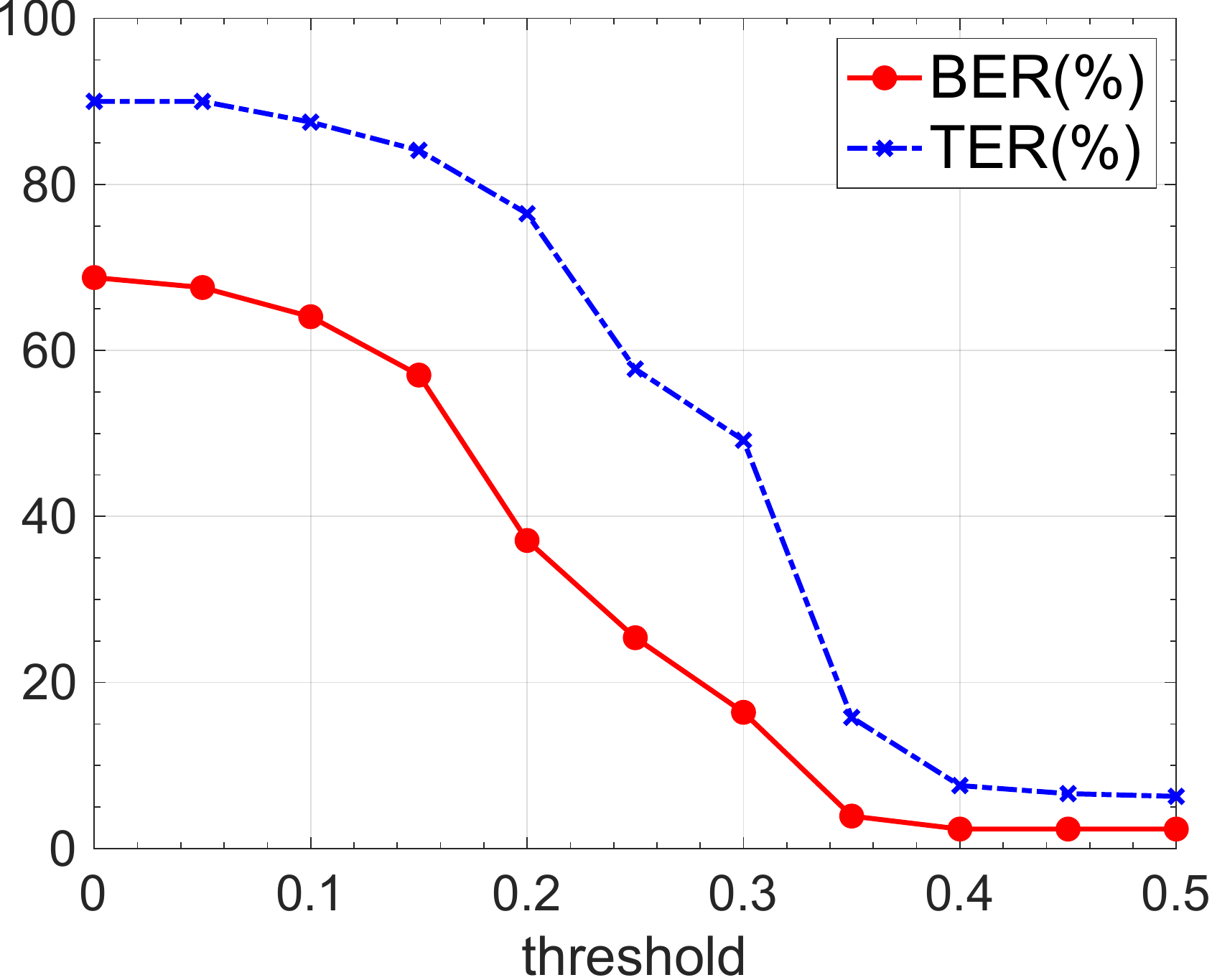}
    \includegraphics[width = 0.48\columnwidth]{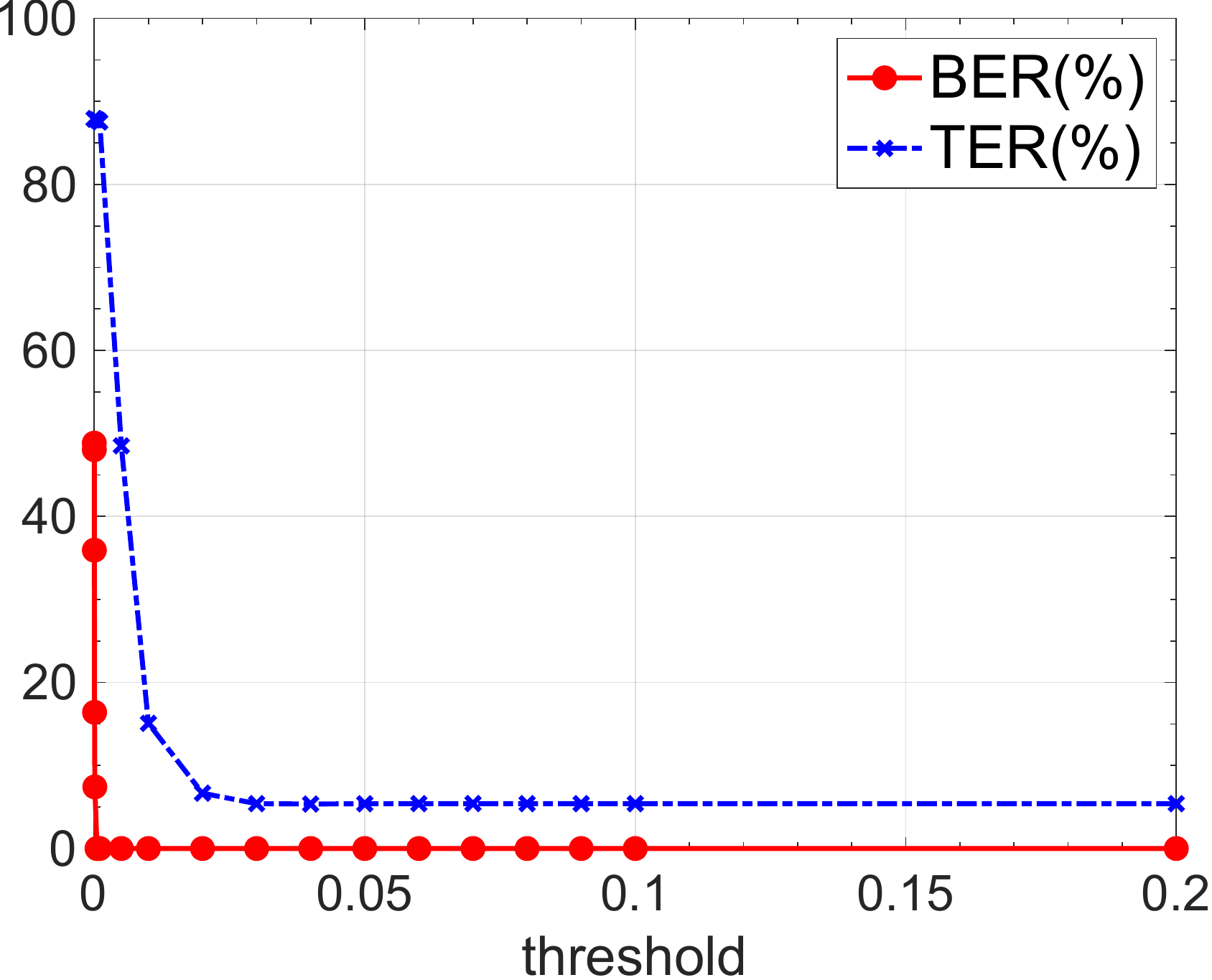}
	\caption{TER and BER as a function of the cut-off threshold $T$. Top line: the networks weights above the threshold in the embedding layer(s) are set to 0 in the model watermarked by the method in  \cite{liu2021watermarking} (left) and our method (right).
%Bottom line: all networks weights above the threshold are set to 0 in the model watermarked with our method.
%
%\AC{enlarge caption. Manca il plot dove il cut per il nostro metodo si fa a tutti i layer della rete.}
}
	\label{fig.greedy-cut}
\end{figure}

To further prove the better secrecy of our method with respect to\cite{liu2021watermarking}, we implemented a simple attack strategy whereby an attacker, knowing that the watermark information is preferably embedded in large weights, sets to zero all the weights of the watermarked layer(s) larger than a certain threshold.
Figure \ref{fig.greedy-cut} shows the behavior of the TER and the BER, as a function of the cut-off threshold,  for the method in \cite{liu2021watermarking} (a) and our method (b)-(c), for the same ResNet18-based CIFAR-10 model watermarked with $l = 256$ (the setting ResNet-CIFAR10-256-1-50 is considered in our case).
%We checked that,
%For the method in \cite{liu2021watermarking}  by cutting for instance at ?? \AC{che cut applicasti in questo test che facesti?}
We found that, by cutting at 0.15
%\AC{Per la rete base di Greedy residuals (payload 256, spread 253), con cutoff = 0.1 si ottenevano TER=87.1\% e BER=56.6\%, che tornava nel giro di una epoca a circa TER=6\%, ovviamente a parità di BER}
the weights of the first layer of the model in \cite{liu2021watermarking}, and retraining the model for some iterations (around a quarter of Epoch) the network functionality can be restored, even when retraining is performed on a subset of the data.
%
%Instead, when the same procedure is applied to our method, for the model  with $S = ??$ ....see Figiure ?? (b)[DUE PLOT - the embedded layer is unknown (corresponding to a secret information), the scenario where all the weights of the network are cut is more realistic].
This is not the case with our method, see Figure \ref{fig.greedy-cut}(b)\footnote{We stress that, since the embedded layers are secret with our method -  determined by the secret key $K$ ($\Omega$) -  assuming that their are known we are actually favoring the attack.}.
%Figure \ref{fig.greedy-cut}(b)-(c)
We see that, in order to impair the watermark, a very small cut-off must be used, completely destroying the functionality of the network and making it unusable. We verified that recovering the accuracy, in this case, requires a retraining effort equal to training the network from scratch.
Although the above analysis focuses on the case $l = 256$, similar considerations can be made for other payloads.
%
%

%With regard to the payload,
We also verified that our method can embed a larger payload with respect to \cite{liu2021watermarking}.
%Since in \cite{liu2021watermarking} the watermark is  embedded in the first layer (where the magnitude of the weights is larger),  the  maximum payload depends on the dimensionality of this layer \MB{Is this something you disciovered from the implementation or is clearly written in the paper?}\BTcomm{No. The paper presents it in general, but the results they report refer to this case (in the way they work, embedding the watermark in other layers with smaller variance I do not expect their method do not work). However, probably here the best thing is to remove this observation.} \MB{This is the only critical point I see. If the paper is reviewed by the authors of \cite{liu2021watermarking}, they can cheat and claim that our comparison is not fair since their method can also by used with different layers } \BTcomm{I see. However, in the way their method work, it is not obvious how to extend it method to the multi-layer case, and there are some design choice to make (e.g., how to build their matrix that gives the association of the biggest per-row  weights  to the bits,...). I do not think it is our job (at least, I do not see this effort done in other papers....). Moreover,the code they released is written to work only when the embedding is done in the first layer, the parameter setting they consider only work in this case, everything would have to be readapted to make it work with a different layer (let alone the multi-layer).... I would just remove the observation.}.
Table \ref{tab:comparison-Greedy} reports the unobtrusiveness and robustness results for large payloads obtained watermarking the same ResNet18 architecture with the two methods. We see that, the method in \cite{liu2021watermarking} is no longer able to successfully embed the watermark when $l= 4096$, with the BER going above 25\% (and remaining similar after retraining).
With out method instead, we are able to embed more than 16.000 bits without affecting the unobtrusiveness, and satisfying the robustness requirement (the BER remains 0  after both fine tuning and transfer learning).

\begin{table}

\centering

	\caption{Comparison with existing methods.  TER and BER are reported (FT = fine tuning. TL = transfer learning). Results refer to the case of watermarked models for CIFAR-10. The baseline TER for the TL tasks is 3.6\%  for GTSRB and  23.8 \% (Top1) and 6.8\% (Top5) for CIFAR-100. The BER results after retraining are highlighted in bold.
%\BTcomm{Se si considerasse l'augmentation sarebbe 2.62\%. Vedi commento nel testo.}. \BTcomm{Per il nsotro metodo ho cambiato una riga aggiungendo il risultato in uno dei nuovi setting che ho inserito nell'altra tabella.}
}

	%\footnotesize

		%\renewcommand\arraystretch{0.8}

		%\setlength{\tabcolsep}{0.55mm}

	\label{tab:comparison}

	%	\begin{tabular*}{\tblwidth}{@{} CCCCCCCCC@{} }
	%\setlength{\tabcolsep}{1.3mm}
	\setlength{\tabcolsep}{0.6mm}{
 %\scriptsize

		\begin{tabular}{l c|c|c|c||c|c|c|c|c|c|c|}

			%	\toprule

			\hline

				% \multirow{2}{*}{Method} &  & \multirow{3}{*}{$l$}  &  \multicolumn{6}{c|}{ Uchida et al. \cite{uchida2017embedding}}
%& \multicolumn{6}{c|}{Li et al. \cite{yue21jins}} & \multicolumn{7}{c}{Proposed }\\ \cline{3-21}

      \multirow{2}{*}{} &  &   \multirow{2}{*}{$l$} & \multirow{2}{*}{TER} & \multirow{2}{*}{BER}  & \multicolumn{2}{c|}{ FT} &   \multicolumn{3}{c|}{TL (CIFAR-100)} &    \multicolumn{2}{c|}{TL (GTSRB)}\\  \cline{6-12}
            &   &   & &   & TER & BER & TER-T1 & TER-T5 & BER &   TER & BER   \\  \hline
            \multirow{3}{*}{\cite{uchida2017embedding}} &    & 128   & 6.6 & 0 & 8.1 & {\bf 0} &  27.5& 6.8& \bf{42.96} & 4.4& \bf{51.56}\\  \cline{3-12}
            &   &  256   & 6.7  & 0  & 7.7 & {\bf  5.08} & 28.2 & 7.5& {\bf 44.14}& 4.6& {\bf 55.08}\\  \cline{3-12}
            &  &  1024   &  7.1 & 4.59 &  7.8&  {\bf 34.40}& 28.6& 7.7& {\bf 47.46}& 4.2& {\bf 49.60}\\ \hline
              \multirow{3}{*}{\cite{yue21jins}} &    & 128   & 7.0 &  0&  7.6& {\bf 28.12}& 28.3 & 7.6& {\bf 50.00} & 4.6& \bf{47.19}\\  \cline{3-12}
            &   &  256   &  6.6 &  0 & 8.2 & {\bf 23.04} &  27.6& 7.4&  {\bf 50.78} & 4.2& \bf{50.01}\\  \cline{3-12}
                        &  &  1024   & 6.8 & 0 & 7.6 &  {\bf 30.76}  & 27.3 & 7.0& {\bf 50.68}& 3.7& \bf{50.19} \\ \hline
                                                            \multirow{3}{*}{\cite{liu2021watermarking}} &    & 128   & 5.2& 0& 5.2& \bf{0}& 27.4& 8.0 & \bf{0} & 4.6& \bf{0}\\  \cline{3-12}
            &   &  256   & 5.3&0 & 5.0& \bf{0}& 27.0& 8.1& \bf{0}& 4.5& \bf{0}\\  \cline{3-12}
                        &  &  1024   & 5.3 & 0& 5.2& \bf{0.29}& 27.3& 8.1& \bf{4.10}& 4.6 & \bf{1.70}\\ \hline
                                                                                    \multirow{3}{*}{Prop} &  & 128   & 5.0 & 0 & 5.3  & {\bf 0} & 24.9 & 6.1 &  {\bf 0} & 4.7 & {\bf 0} \\  \cline{3-12}
            &  &  256   & 5.2 &  0 & 5.1 & {\bf 0} & 24.8  & 6.8 & {\bf 0} & 4.6 & {\bf 0} \\  \cline{3-12}
                        &  &  1024   & 6.5 & 0 & 5.5 & {\bf 0} & 24.6 & 7.1 & {\bf 0} & 3.9 &  {\bf 0}\\ \hline

			\hline
	\end{tabular}
}
\end{table}

\begin{table*}

\centering

	\caption{Comparison with the method in \cite{liu2021watermarking} for large payloads.
%\BTcomm{Se si considerasse l'augmentation sarebbe 2.62\%. Vedi commento nel testo.}. \BTcomm{Per il nsotro metodo ho cambiato una riga aggiungendo il risultato in uno dei nuovi setting che ho inserito nell'altra tabella.}
}

	%\footnotesize

		%\renewcommand\arraystretch{0.8}

		%\setlength{\tabcolsep}{0.55mm}

	\label{tab:comparison-Greedy}

	%	\begin{tabular*}{\tblwidth}{@{} CCCCCCCCC@{} }

	\setlength{\tabcolsep}{0.6mm}{
 %\scriptsize

		\begin{tabular}{c|c|c|c|c|c|c|c|c|c|c|c|c|c|c|c|c|c|c|c}

			%	\toprule

			\hline

				 \multirow{3}{*}{$l$}  &  \multicolumn{9}{c|}{Liu et al.  \cite{liu2021watermarking}}  &
 \multicolumn{10}{c}{Proposed} \\ \cline{2-20}
      &   \multirow{2}{*}{TER} & \multirow{2}{*}{BER}  & \multicolumn{2}{c|}{FT} &   \multicolumn{3}{c|}{ TL (CIFAR-100)} &  \multicolumn{2}{c|}{ TL (GTSRB)}  &  \multirow{2}{*}{($S$,$C$)} & \multirow{2}{*}{TER} & \multirow{2}{*}{BER}  & \multicolumn{2}{c|}{ FT} &   \multicolumn{3}{c|}{ TL (CIFAR-100)} &  \multicolumn{2}{c}{ TL (GTSRB)}  \\  \cline{4-10} \cline{14-20}
      & &  & TER & BER & TER & TER &  BER  & TER &  BER & &  &  & TER & BER & TER & TER &  BER  & TER &  BER  \\ \hline
      2048 & 5.1& 4.7& 5.0& \bf{8.20}& 26.9& 8.1& \bf{11.04}& 5.4& \bf{8.92}& (100,1)  &  5.0&  0 &  5.1 &  \textbf{0} &24.8 & 7.1 & \textbf{0} &  4.0 & {\bf 0} \\ \hline
           4096 & 4.9& 25.85& 4.8& \bf{26.22}& 26.8& 8.1& \bf{26.95}& 5.6& \bf{26.64}& (150,1)   & 5.3  & 0 & 5.3 &  \textbf{0} & 24.4 &  7.3 & \textbf{0} & 4.7 & {\bf 0} \\ \hline
           8192 & 5.2& 33.65& 5.3& \bf{33.62}& 26.5& 8.3& \bf{33.69}& 4.4& \bf{33.61}& (200,1)  & 5.5 & 0 &  5.2 &  \textbf{0}  & 24.6 & 6.9 &  \textbf{0} & 3.8 & {\bf 0} \\ \hline
            16384 & 5.7 & 35.19& 5.4& \bf{35.17}& 26.9& 8.3& \bf{35.31}& 4.7& \bf{35.27}& (400,1)  & 4.8 & 0 & 5.2 &  \textbf{0} & 29.4 & 8.2 & \textbf{0} & 4.1 & {\bf 0} \\
\hline
	\end{tabular}
}
\end{table*}

\section{Concluding  Remarks}
\label{sec.conc}

%\MB{Thesummary should be changed to reflect the different approach to tell our algorithm we are using now}

%\MB{I have always thought that the concluding section should not be just a summary of what we have done in the paper, rather it should stress the most important lessons we have learnt and stress the most important results now that the reader knows all the details \dots I admit that I am not eager to rewrite this section from scratch, so unless you want to try to make this section more original, you can leave it as is.}

By the light of a new interpretation of the watermark trade-off triangle in the case of DNN watermarking, we designed an effective white-box, multi-bit,  watermarking algorithm reaching a good tradeoff among the (revisited) requirements that any DNN watermarking scheme should satisfy, achieving an extremely large payload and outstanding robustness.
%
%We proposed a white-box DNN watermarking algorithm  for multi-bit embedding that exploits a new information embedding strategy to improve the robustness of the watermark against network modifications and  re-use.
%
The watermark
message is spread across a number of fixed weights (watermarked weights), whose
position depends on a secret key, and whose value is set prior to training and left unchanged during the training procedure. The values of the other weights responsible for the accomplishment of the network primary task are then determined in a watermark-dependent manner.
The strength of the watermarked weights is made large enough to survive network modification. The secrecy of the watermark is ensured by
optimizing the distribution of the watermarked weights in such a way to minimize the KL distance between watermarked and non-watermarked weights, for a given strength of the watermark.
%
%The distribution of the watermarked weights is theoretically optimized in such a way to minimize the distinguishability between watermarked and non-watermarked weights\BTnew{, thus satisfying the invisibility requirement}.
%
Experimental results show that the proposed algorithm can achieve very large payloads while being robust against network modifications and re-use.
Robustness can also be achieved in the challenging transfer learning
scenario, with payloads that are out of reach of state of the art methods.

In future works, we will  focus on a comprehensive  analysis of the security of the proposed scheme against informed attackers, that is, attackers  aware of the
presence of the watermark, who try to erase the watermark via removal attacks, by embedding a new watermark (overwriting attacks) or by unveiling the watermark secret key, e.g., by analyzing the importance that the network weights have on the classification performance of the network.
%\BTcomm{My concern is that a reviewer can complain that the method  is not secure unless further strategies or actions are taken.}
%
An interesting direction to further improve the robustness of the proposed DNN watermarking system would be to apply channel-coding to the informative message  \cite{perez2001approaching}, investigating the kind of  codes that are most suitable in the DNN watermarking scenario adopted by our method.

%\BTcomm{We should also mention that, since the position watermarked weights can not be revealed by looking at the weight distribution, we can say that the scheme also guarantees a good level of  security (thanks to the secrecy of the key). However, further experiments can be evaluated to assess the security against informed attackers (i.e.,  in adversarial setting wherein the users explicitly aim at removing the watermark from the model), e.g. oracle attacks or overwriting attacks.}

%\section*{Acknowledgements}
%%
%This work has been partially supported by.....\BTcomm{Direi che non abbiamo niente} \MB{Direi di no}
%%a research sponsored by DARPA and Air Force Research Laboratory (AFRL) under agreement number FA8750-16-2-0173. The U.S. Government is authorised to reproduce and distribute reprints for Governmental purposes notwithstanding any copyright notation thereon. The views and conclusions contained herein are those of the authors and should not be interpreted as necessarily representing the official policies or endorsements, either expressed or implied, of DARPA and Air Force Research Laboratory (AFRL) or the U.S. Government.

%\TODO{nella biblio controlla che non ci siano reference doppie (ho copiato da due .bib) }
%
%\TODO{problema con reference [33] vuota}
%
%\appendix
%
%%\section*{Appendix}
%
%\subsection*{Geometric Transformation Estimation.}

%
% REFERENCES
%
\ifCLASSOPTIONcaptionsoff
\newpage
\fi
\bibliographystyle{IEEEtran}
\bibliography{mybibliography}

\end{document}